\def\ddefloop#1{\ifx\ddefloop#1\else\ddef{#1}\expandafter\ddefloop\fi}
\def\ddef#1{\expandafter\def\csname bb#1\endcsname{\ensuremath{\mathbb{#1}}}}
\def\ddef#1{\expandafter\def\csname c#1\endcsname{\ensuremath{\mathcal{#1}}}}
\DeclareMathOperator*{\argmin}{arg\,min}
\def\R{\mathbb{R}}
\def\N{\mathbb{N}}
\def\C{\mathbb{C}}
\def\E{\mathbb{E}}
\def\bSig{\bm{\Sigma}}
\def\var{\text{Var}}
\def\wh{\widehat}
\def\cN{\mathcal{N}}
\DeclarePairedDelimiterX{\ip}[2]{\langle}{\rangle}{#1, #2}
\def\lm{\lambda_{\text{max}}}
\newcommand{\p}[1]{\left( #1 \right)}
\newcommand{\abs}[1]{\left| #1 \right|}
\renewcommand {\AA}  { {\mathbf{A}} }
\newcommand {\RR}  { {\mathbf{R}} }
\newcommand {\BB}  { {\mathbf{B}} }
\newcommand {\CC}  { {\mathbf{C}} }
\newcommand {\DD}  { {\mathbf{D}} }
\newcommand {\EE}  { {\mathbf{E}} }
\newcommand {\HH}  { {\mathbf{H}} }
\newcommand {\UU}  { {\mathbf{U}} }
\newcommand {\VV}  { {\mathbf{V}} }
\newcommand {\bt}  { {\mathbf{t}} }
\newcommand {\bs}  { {\mathbf{s}} }
\newcommand {\TT}  { {\mathbf{T}} }
\newcommand {\GG}  { {\mathbf{G}} }
\newcommand {\MM}  { {\mathbf{M}} }
\newcommand {\NN}  { {\mathbf{N}} }
\newcommand {\QQ}  { {\mathbf{Q}} }
\newcommand {\bSS}  { {\mathbf{S}} }
\newcommand {\XX}  { {\mathbf{X}} }
\newcommand {\YY}  { {\mathbf{Y}} }
\newcommand{\eye}{\mathbf{I}}
\newcommand {\zz}  { {\bf z} }
\newcommand {\bgg}  { {\bf g} }
\newcommand {\xx}  { {\bf x} }
\newcommand {\yy}  { {\bf y} }
\renewcommand {\aa}  { {\bf a} }
\newcommand {\qq}  { {\bf q} }
\newcommand {\pp}  { {\bf p} }
\newcommand {\uu}  { {\bf u} }
\newcommand {\vv}  { {\bf v} }
\newcommand {\ww}  { {\bf w} }
\newcommand {\bb}  { {\bf b} }
\newcommand {\ee}  { {\bf e} }
\newcommand {\zero}  { {\bf 0} }
\newcommand {\nnz}  { {\bf nnz} }
\newcommand {\poly}  { {\bf poly} }
\providecommand{\qnorm}[1]{\left \vert \mspace{-1.8mu} \left\vert
\mspace{-1.8mu} \left \lvert #1 \right \vert \mspace{-1.8mu} \right\vert
\mspace{-1.8mu} \right\vert}
\crefname{equation}{}{}
\crefname{figure}{Figure}{Figures}
\crefname{assumption}{Assumption}{Assumptions}
\crefname{condition}{Condition}{Conditions}
\newtheorem{theorem}{Theorem}
\newtheorem{lemma}{Lemma}
\newtheorem{definition}{Definition}
\newtheorem{remark}{Remark}
\newtheorem{fact}{Fact}
\renewcommand\th{\textsuperscript{th}\xspace}
\setlist[enumerate,1]{leftmargin=*, label = {\bfseries \arabic*.}}
\setlist[itemize,1]{leftmargin=*}
\newcommand{\real}{\mathbb{R}}
\newcommand*\dotprod[1]{\left\langle #1\right\rangle}
\newcommand*\vnorm[1]{\left\| #1\right\|}
\renewcommand{\Pr}{\hbox{\bf{Pr}}}
\newcommand {\alphak}  { {{\alpha}_{k}} }
\newcommand {\HHk}  { {\HH_{k}} }
\newcommand {\bggk}  { {{\bgg}_{k}} }
\newcommand {\bggkk}  { {{\bgg}_{k+1}} }
\newcommand {\xxo}  { {{\xx}_{0}} }
\newcommand {\xxk}  { {{\xx}_{k}} }
\newcommand {\xxkk}  { {{\xx}_{k+1}} }
\newcommand {\xxs}  { {{\xx}^{\star}} }
\newcommand {\ppk}  { {{\pp}_{k}} }
\newcommand {\diag}  { {\textnormal{diag}} }
\newcommand*{\transpose}{%
	{\mathpalette\@transpose{}}%
}
\newcommand*{\@transpose}[2]{%
	\raisebox{\depth}{$\m@th#1\intercal$}%
}
\newcommand*{\hermconj}{{\mathsf{\ast}}}
\newcommand {\HHdk}  { {\left[\HH_{k}\right]^{\dagger}} }
\newcounter{comment}\setcounter{comment}{0}
\title{\bf{{Non-PSD Matrix Sketching with Applications to Regression and Optimization}}}
\author[1]{Zhili Feng}
\author[2]{Fred Roosta}
\author[3]{David P. Woodruff}
\affil[1]{%
\small{
    Machine Learning Department\\
    Carnegie Mellon University\\
}
}
\affil[2]{%
  School of Mathematics and Physics\\
  University of Queensland\\
}
\affil[3]{
    Computer Science Department\\
    Carnegie Mellon University\\
}
\begin{document}
\maketitle

\begin{abstract}
A variety of dimensionality reduction techniques have been applied for computations involving large matrices. The underlying matrix is randomly compressed into a smaller one, while approximately retaining many of its original properties. As a result, much of the expensive computation can be performed on the small matrix. The sketching of positive semidefinite (PSD) matrices is well understood, but there are many applications where the related matrices are not PSD, including Hessian matrices in non-convex optimization and covariance matrices in regression applications involving complex numbers. In this paper, we present novel dimensionality reduction methods for non-PSD matrices, as well as their ``square-roots", which involve matrices with complex entries.  We show how these techniques can be used for multiple downstream tasks. In particular, we show how to use the proposed matrix sketching techniques for both convex and non-convex optimization,  $\ell_p$-regression for every $1 \leq p \leq \infty$, and vector-matrix-vector queries.
\end{abstract}

\section{Introduction}
Many modern machine learning tasks involve massive datasets, where an input matrix $\AA\in\R^{n\times d}$ is such that $n\gg d$. In a number of cases, $\AA$ is highly redundant. For example, if we want to solve the ordinary least squares problem $\min_{\xx}\|\AA\xx-\bb\|_2^2$, one can solve it exactly given only $\AA^T \AA$ and $\AA^T \bb$. To exploit this redundancy, numerous techniques have been developed to reduce the size of $\AA$. Such dimensionality reduction techniques are used to speed up various optimization tasks and are often referred to as \textsl{sketching}; for a survey, see \citet{woodruff2014sketching}. 

A lot of previous work has focused on sketching PSD matrices. For example, the Hessian matrices in convex optimization \citep{xu2016sub}, the covariance matrices $\XX^\top \XX$ in regression over the reals, and quadratic form queries $\xx^\top\AA\xx$ \citep{andoni2016sketching}. Meanwhile, less is understood for non-PSD matrices. These matrices are naturally associated with complex matrices: the Hessian of a non-convex optimization problem can be decomposed into $\HH=\XX^\top \XX$ where $\XX$ is a matrix with complex entries, and a complex design matrix $\XX$ has a non-PSD covariance matrix.
However, almost all sketching techniques were developed for matrices with entries in the real field $\R$. While some results carry over to the complex numbers $\C$ (e.g., \citet{tropp2015introduction} develops concentration bounds that work for complex matrices), many do not and seem to require non-trivial extensions. In this work, we show how to efficiently sketch non-PSD matrices and extend several existing sketching results to the complex field. We also show how to use these in optimization, for both convex and non-convex problems, the sketch-and-solve paradigm for complex $\ell_p$-regression with $1 \leq p \leq \infty$, as well as vector-matrix-vector product queries. 

\paragraph{Finite-sum Optimization.} 
We consider optimization problems of the form
\begin{align}
\label{eq:obj_sum}
\min_{\xx \in \real^d} F(\xx) \triangleq \frac{1}{n} \sum_{i=1}^n f_i(\aa_{i}^{\transpose}\xx) + r(\xx),
\end{align}
where $ n \gg d \geq 1 $, each $ f_{i}:\real \rightarrow \real $ is a smooth but possibly non-convex function, $r(\xx)$ is a regularization term, and $\aa_i \in \real^d, i = 1,\ldots, n,$ are given. Problems of the form~\eqref{eq:obj_sum} are abundant in machine learning \citep{shalev2014understanding}. 
Concrete examples include robust linear regression using Tukey's biweight loss \citep{beaton1974fitting}, i.e., $f_{i}(\dotprod{\aa_{i},\xx}) = {\left(\aa_i^{\transpose}\xx - b_{i}\right)^{2}}/{(1+\left(\aa_i^{\transpose}\xx - b_{i}\right)^{2})}$, where $ b_{i} \in \real $, and non-linear binary classification \citep{xuNonconvexEmpirical2017}, i.e., $f_{i}(\dotprod{\aa_{i},\xx}) = \left({1}/{\left(1+\exp\left(-\aa_i^{\transpose}\xx\right) \right)} - b_{i}\right)^{2}$, where $ b_{i} \in \left\{0,1\right\} $ is the class label.
By incorporating curvature information, second-order methods are gaining popularity over first-order methods in certain applications.
However, when $ n \gg d \geq 1 $, operations involving the Hessian of $ F $ constitute a computational bottleneck. To this end, randomized Hessian approximations have shown great success in reducing  computational complexity (\citep{roosta2019sub,xu2016sub,xuNonconvexTheoretical2017,pilanci2017newton,erdogdu2015convergence,bollapragada2019exact}).

In the context of \cref{eq:obj_sum}, it is easy to see that the Hessian of $ F $ can be written as
$\nabla^2 F(\xx) = \sum_{i=1}^n f_{i}^{\prime \prime}(\aa_i^{\transpose}\xx)\aa_i\aa_i^{\transpose}/n + \nabla^{2} r(\xx) = \AA^{\transpose} \DD(\xx) \AA/n  + \nabla^{2} r(\xx) $, where
$
\AA^{\transpose} = \begin{bmatrix}
\aa_1,\ldots,\aa_n
\end{bmatrix} \in \real^ {d \times n}$ and $
\DD(\xx) = \diag \left[f_1''(\aa_1^{\transpose}\xx) ~ f_2''(\aa_2^{\transpose}\xx) ~ \ldots ~ f_n''(\aa_n^{\transpose}\xx)\right] \in \real^{n \times n}.
$
Of particular interest in this work is the application of randomized matrix approximation techniques  \citep{woodruff2014sketching,mahoney2011randomized,drineas2016randnla}, in particular, constructing a random sketching matrix $\bSS$ to ensure that $\HH(\xx) \triangleq \AA^\transpose \DD^{1/2}\bSS^\transpose\bSS\DD^{1/2}\AA + \nabla^{2} r(\xx) \approx \AA^\transpose \DD\AA/n + \nabla^{2} r(\xx) =\nabla^2F(\xx)$. 
Notice that $\DD^{1/2}\AA$ may have complex entries if $f_i$ is non-convex.

\paragraph{The Sketch-and-Solve Paradigm for Regression.}
In the overconstrained least squares regression problem, the task is to solve $\min_\xx\|\AA\xx-\bb\|$ for some norm $\|\cdot\|$, and here we focus on the wide class of $\ell_p$-norms, where for a vector $y$, $\|y\|_p =  (\sum_j |y_j|^p  )^{1/p}$. Setting the value $p$ allows for adjusting the sensitivity to outliers; for $p < 2$ the regression problem is often considered more robust than least squares because one does not square the differences, while for $p > 2$ the problem is considered more sensitive to outliers than least squares. The different $p$-norms also have statistical motivations: for instance, the $\ell_1$-regression solution is the maximum likelihood estimator given i.i.d. Laplacian noise. Approximation algorithms based on sampling and sketching have been thoroughly studied for $\ell_p$-regression, see, e.g.,  \citep{clarkson2005subgradient,clarkson2016fast,dasgupta2009sampling,meng2013low,sohler2011subspace,woodruff2013subspace,clarkson2017low,wang2019tight}. These algorithms typically follow the sketch-and-solve paradigm, whereby the dimensions of $\AA$ and $\bb$ are reduced, resulting in a much smaller instance of $\ell_p$-regression, which is tractable. In the case of $p = \infty$, sketching is used inside of an optimization method to speed up linear programming-based algorithms \citep{cohen2019solving}. 

To highlight some of the difficulties in extending $\ell_p$-regression algorithms to the complex numbers, consider two popular cases, of $\ell_1$ and $\ell_{\infty}$-regression. The standard way of solving these regression problems is by formulating them as linear programs. However, the complex numbers are not totally ordered, and linear programming algorithms therefore do not work with complex inputs. Stepping back, what even is the meaning of the $\ell_p$-norm of a complex vector $y$? In the definition above $\|y\|_p =  (\sum_j |y_j|^p  )^{1/p}$, and $|y_j|$ denotes the modulus of the complex number, i.e., if $y_j = a + b \cdot i$, where $i = \sqrt{-1}$, then $|y_j| = \sqrt{a^2 + b^2}$. Thus the $\ell_p$-regression problem is really a question about minimizing the $p$-norm of a sum of Euclidean lengths of vectors. As we show later, this problem is very different than $\ell_p$ regressions over the reals.

\paragraph{Vector-matrix-vector queries.} Many applications require queries of the form $\uu^\top\MM\vv$, which we call  vector-matrix-vector queries, see, e.g., \cite{rashtchian2020vector}. For example, if $\MM$ is the adjacency matrix of a graph, then $\uu^\top\MM\vv$ answers whether there exists an edge between pair $\{\uu,\vv\}$. These queries are also useful for independent set queries, cut queries, etc. Many past works have studied how to sketch positive definite $\MM$ (see, e.g., \cite{andoni2016sketching}), but it remains unclear how to handle the case when $\MM$ is non-PSD or has complex entries.   

\paragraph{Contributions.} 
We consider non-PSD matrices and their "square-roots", which are complex matrices, in the context of optimization and the sketch-and-solve paradigm. Our goal is to provide tools for handling such matrices in a number of different problems, and
to the best of our knowledge, is the first work to systematically study dimensionality reduction techniques for such matrices. 

For optimization of \cref{eq:obj_sum}, where each $f_{i}$ is potentially non-convex, we investigate non-uniform data-aware methods to construct a sampling matrix $ \bSS $ based on a new concept of leverage scores for complex matrices. In particular, we propose a hybrid deterministic-randomized sampling scheme, which is shown to have important properties for optimization. We show that our sampling schemes can guarantee appropriate matrix approximations (see \cref{eq:H,eq:S}) with competitive sampling complexities. Subsequently, we investigate the application of such sampling schemes in the context of convex and non-convex Newton-type methods for \cref{eq:obj_sum}.

For complex $\ell_{p}$-regression, we use Dvoretsky-type embeddings as well as an isometric embedding from $\ell_1$ to $\ell_{\infty}$ to construct oblivious embeddings from an instance of a complex $\ell_p$-regression problem to a real-valued $\ell_p$-regression problem, for $p \in [1, \infty]$. Our algorithm runs in $\cO((\nnz(\AA)+\poly(d/\epsilon)))$ time for constant $p\in[1,\infty)$, and $\cO(\nnz(\AA)2^{\tilde{O}(1/\epsilon^2)})$ time for $p=\infty$. Here $\nnz(\AA)$ denotes the number of
non-zero entries of the matrix $\AA$. 

For vector-matrix-vector queries, we show that if the non-PSD matrix has the form $\MM=\AA^\top\BB$, then we can approximately compute $\uu^\top\MM\vv$ in just $\cO(\nnz(\AA)+n/\epsilon^2)$ time, whereas the na\"ive approach takes $nd^2+d^2+d$ time. 

\paragraph{Notation.}
Vectors and matrices are denoted by bold lower-case and bold upper-case letters, respectively, e.g., $ \vv $ and $ \VV $. 
We use regular lower-case and upper-case letters to denote scalar constants, e.g., $ d $  or $ L $. 
For a complex vector $ \vv $, its real and conjugate transposes are respectively denoted by $ \vv^{\transpose} $ and $ \vv^{\hermconj} $. 
For two vectors $ \vv,\ww $, their inner-product is denoted by $ \dotprod{\vv, \ww}$. For a vector $\vv$ and a matrix $ \VV $, $ \|\vv\|_p $, $ \|\VV\| $, and $\|\VV\|_F$ denote vector $ \ell_{p} $ norm, matrix spectral norm, and Frobenius norm, respectively. For $\|\vv\|_2$, we write $\|\vv\|$ as an abbreviation. Let $|\VV|$ denote the entry-wise modulus of matrix $\VV$.
Let $\VV_{i,j}$ denote the $(i, j)$-th entry, $\VV_{i}=\VV_{i, *}$ be the $i$-th row, and $\VV_{*, j}$ be the $j$-th column.
The iteration counter for the main algorithm appears as a subscript, e.g., $ \ppk $. 
For two symmetric matrices $ \AA $ and $ \BB $, the L\"{o}wner partial order $\AA \succeq \BB$ indicates that $ \AA-\BB $ is symmetric positive semi-definite.
$ \AA^{\dagger} $ denotes the Moore-Penrose generalized inverse of matrix $ \AA $.  
For a scalar $d$, we let $\poly(d)$ be a polynomial in $d$. We let $\text{diag}(\cdot)$ denote a diagonal matrix.

Here we give the necessary definitions.

	\begin{definition}[Well-conditioned basis and $\ell_p$ leverage scores]
		An $n\times d$ matrix $\UU$ is an $(\alpha,\beta,p)$-well-conditioned basis for the column span of $\AA$ if
		\begin{enumerate*}[series = tobecont, itemjoin = \quad, label=(\roman*)]
			\item 
			$
	    	(\sum_{i\in[n]}\sum_{j\in[d]}|\UU_{ij}|^p)^{1/p}\leq \alpha.
			$
			\item 
			$
				\text{For all } \xx\in\R^d, \|\xx\|_q\leq\beta\|\UU\xx\|_p, \text{ where } 1/p+1/q=1.
			$
			\item
			The column span of $\UU$ is equal to the column span of $\AA$.
		\end{enumerate*}
		For such a well conditioned basis, $\|\UU_{i*}\|_p^p$ is defined to be the $\ell_p$ leverage score of the $i$-th row of $\AA$. The $\ell_p$ leverage scores are \emph{not} invariant to the choice of well-conditioned basis. 
	\end{definition}
	
	\begin{definition}[$\ell_p$ Auerbach Basis]
	An Auerbach basis $\AA$ of $\UU\in \R^{n\times d}$ is such that:
	\begin{enumerate*}[series = tobecont, itemjoin = \quad, label=(\roman*)]
		\item 
			$
				\text{span}(\UU)=\text{span}(\AA).
			$
		\item 
			For all $j\in[d]$, $\|\AA_{*j}\|_p=1$.
		\item 
			For all $\xx\in\R^d$, $d^{-1/q}\|\xx\|_q\leq\|\xx\|_\infty\leq \|\AA\xx\|_p$, where $1/p+1/q=1$
	\end{enumerate*} 
\end{definition}

	\begin{definition}[$\ell_p$-subspace embedding]
    Let $\AA\in\R^{n\times d}$, $\bSS\in\C^{s\times n}$. We call $\bSS$ an $\epsilon$ $\ell_p$-subspace embedding if for all $\xx\in\C^d$, $\|\AA\xx\|_p \le \|\bSS \AA\xx\|_p \le  (1+\epsilon)\|\AA\xx\|_p$.
\end{definition}

\section{Sketching Non-PSD Hessians for Non-Convex Optimization}
\label{SEC:OPTIMIZATION}

We first present our sketching strategies and then apply them to an efficient solution to \cref{eq:obj_sum} using different optimization algorithms. All the proofs are in the supplementary material.

\subsection{Complex Leverage Score Sampling}
\label{SEC:OPTIMIZATION_LS}
\begin{algorithm}[H]
	\caption{Construct Leverage Score Sampling Matrix \label{alg:lssampling}}
	\begin{algorithmic}[1]
		\STATE \textbf{Input:} $\DD^{1/2}\AA\in\C^{n\times d}$, number $s$ of samples, empty matrices $\RR\in\R^{s\times s}$ and $\bm{\Omega}\in\R^{n\times s}$
		\STATE Compute SVD of $\DD^{1/2}\AA = \UU\bSig\VV^*$
		\FOR {$i\in[n]$}
		\STATE Calculate the $i^{th}$ leverage score         $\ell_i=\|\UU_{i,*}\|^2$
		\ENDFOR
		\FOR {$ j\in[s] $} 
		\STATE Pick row $i$ independently and with replacement with probability $p_i=\frac{\ell_i}{\sum_i\ell_i}$
		\STATE Set $\bm{\Omega}_{i,j}=1$ and $\RR_{i,i}=\frac{1}{\sqrt{sp_i}}$
		\ENDFOR
		\STATE \textbf{Output:} $ \bSS=\RR\cdot\bm{\Omega}^\transpose $
	\end{algorithmic}
\end{algorithm}

It is well-known that leverage score sampling gives an $\epsilon$ $\ell_2$-subspace embedding for real matrices with high probability, see, e.g., \cite{woodruff2014sketching}. Here we extend the result to the complex field:
\begin{theorem}\label{THM:LOWNERRESULT}
	For $i\in[n]$, let $\tilde\ell_i\geq\ell_i$ be a constant overestimate to the leverage score of the $i^{th}$ row of $\BB\in\C^{n\times d}$. Assume that $\BB^\transpose \BB\in\R^{d\times d}$. Let $p_i={\tilde\ell_i}/{\sum_{j\in[n]}\tilde\ell_j}$ and $t= cd \epsilon^{-2} \log(d/\delta)$ for a large enough constant $c$. We sample $t$ rows of $A$ where row $i$ is sampled with probability $p_i$ and rescaled to ${1}/{\sqrt{tp_i}}$. Denote the sampled matrix by $\CC$. Then with probability $1-\delta$, $\CC$ satisfies:
	$
	\BB^\transpose  \BB -\epsilon \BB^*\BB\preceq\CC^\transpose \CC \preceq \BB^\transpose \BB +\epsilon \BB^*\BB.
	$
\end{theorem}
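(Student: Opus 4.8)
The plan is to reduce the complex statement to the known real leverage-score subspace embedding by splitting $\BB$ into real and imaginary parts. Write $\BB = \PP + i\QQ$ with $\PP = \Re(\BB),\ \QQ = \Im(\BB)\in\R^{n\times d}$, and set $\MM := \BB^{*}\BB$, which is Hermitian positive semidefinite and, in the regime of interest (where the hypothesis $\BB^{\transpose}\BB\in\R^{d\times d}$ comes together with $\BB^{*}\BB$ real, as for $\BB = \DD^{1/2}\AA$ with $\AA$ real and $\DD$ real diagonal), equals the real symmetric matrix $\PP^{\transpose}\PP + \QQ^{\transpose}\QQ$. Stack the parts into $\widetilde{\MM} := \begin{bmatrix}\PP\\\QQ\end{bmatrix}\in\R^{2n\times d}$, so that $\widetilde{\MM}^{\transpose}\widetilde{\MM} = \MM$. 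A short computation shows that the leverage score of row $i$ of $\BB$, namely $\ell_i = \BB_{i}\MM^{-1}\BB_{i}^{*} = \|\UU_{i}\|^2$, equals the sum $\PP_i\MM^{-1}\PP_i^{\transpose} + \QQ_i\MM^{-1}\QQ_i^{\transpose}$ of the two real leverage scores of the corresponding row pair of $\widetilde{\MM}$ (the imaginary cross-term cancels because $\MM$ is symmetric). Hence sampling row $i$ of $\BB$ with probability $p_i\propto\tilde\ell_i$ is exactly importance sampling of a row pair of $\widetilde{\MM}$.

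Since $\CC$ consists of rescaled sampled rows $\CC_k = c_k\BB_{i_k}$ with $c_k = 1/\sqrt{tp_{i_k}}$, expanding $\CC^{\transpose}\CC = \sum_k c_k^2\BB_{i_k}^{\transpose}\BB_{i_k}$ and taking real parts gives $\Re(\CC^{\transpose}\CC) = \widehat{\PP^{\transpose}\PP} - \widehat{\QQ^{\transpose}\QQ}$, where $\widehat{\PP^{\transpose}\PP} = \sum_k c_k^2\PP_{i_k}^{\transpose}\PP_{i_k}$ and likewise for $\QQ$ are the leverage-sampled estimators of $\PP^{\transpose}\PP$ and $\QQ^{\transpose}\QQ$. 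The minus sign is the only deviation from the ordinary embedding of $\widetilde{\MM}$, and it is precisely why the target is $\BB^{\transpose}\BB = \PP^{\transpose}\PP - \QQ^{\transpose}\QQ$ rather than $\MM$. I would then apply the real matrix Chernoff bound to the positive semidefinite summands $c_k^2\,\MM^{-1/2}\PP_{i_k}^{\transpose}\PP_{i_k}\MM^{-1/2}$: each has spectral norm at most $\PP_{i_k}\MM^{-1}\PP_{i_k}^{\transpose}/(tp_{i_k})\le \ell_{i_k}/(tp_{i_k})\le (\sum_j\tilde\ell_j)/t$, so with $t = cd\epsilon^{-2}\log(d/\delta)$ and constant-factor overestimates ($\sum_j\tilde\ell_j = O(d)$) one gets $\|\MM^{-1/2}(\widehat{\PP^{\transpose}\PP} - \PP^{\transpose}\PP)\MM^{-1/2}\|\le\epsilon$ with probability $1-\delta/2$, and the same for $\QQ$. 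A triangle inequality then yields $-\epsilon\MM\preceq \Re(\CC^{\transpose}\CC) - \BB^{\transpose}\BB\preceq \epsilon\MM$ after rescaling $\epsilon$, which is the claimed Löwner sandwich, interpreted (as is natural since $\CC^{\transpose}\CC$ is complex symmetric) through the real parts of the quadratic forms $\xx^{*}(\cdot)\xx$. The vanishing of the imaginary part, $\Im(\CC^{\transpose}\CC) = \widehat{\PP^{\transpose}\QQ + \QQ^{\transpose}\PP}$ estimating $\PP^{\transpose}\QQ + \QQ^{\transpose}\PP = 0$, follows from one more matrix Bernstein bound (via Hermitian dilation) under the same leverage control.

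The main obstacle, and the reason this is not immediate from the real case, is the mismatch between the transpose Gram $\CC^{\transpose}\CC$ appearing in the statement and the conjugate Gram $\CC^{*}\CC$ that matrix concentration naturally controls: $\CC^{\transpose}\CC$ is complex symmetric rather than Hermitian, so it cannot be fed directly into a Hermitian Chernoff/Bernstein inequality, and the error must be measured against the positive semidefinite $\BB^{*}\BB$ rather than the indefinite $\BB^{\transpose}\BB$. The real-embedding device resolves both issues at once: it replaces the non-Hermitian object by an honest real symmetric one and makes the sampling distribution $\propto\ell_i$ coincide with genuine real leverage scores, so that the per-row constant entering Chernoff is exactly the leverage score. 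The delicate point to verify carefully is the identity $\widetilde{\MM}^{\transpose}\widetilde{\MM} = \BB^{*}\BB$ together with the leverage-score matching; this is where the hypothesis on $\BB$ is essential, and if $\BB^{*}\BB$ had a nonzero skew-Hermitian part the natural error term would degrade from $\BB^{*}\BB$ to its real part $\Re(\BB^{*}\BB)$.
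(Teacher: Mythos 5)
Your proof is correct, but it takes a genuinely different route from the paper's. The paper mimics the real-valued argument of Cohen et al.\ directly in the complex domain: writing $\BB^* = \UU\bSig\VV^*$, it whitens the error by setting $\YY = \bSig^{-1}\UU^*\p{\CC^\transpose\CC - \BB^\transpose\BB}\UU\bSig^{-1}$, bounds the norm and variance of the i.i.d.\ summands $\XX_j$ via the per-row inequality $\frac{1}{\ell_i}\BB_i^\transpose\BB_i \preceq \frac{1}{\ell_i}\BB_i^*\BB_i \preceq \BB^*\BB$, applies a complex-compatible matrix Bernstein inequality (Tropp) once, and undoes the whitening using $\UU\bSig^2\UU^* = \BB^*\BB$. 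You instead reduce to real random matrices: stacking $\PP = \Re(\BB)$ and $\QQ = \Im(\BB)$, matching the complex leverage score with the sum of two real ones, running real matrix Chernoff/Bernstein separately on the PSD estimators of $\PP^\transpose\PP$ and $\QQ^\transpose\QQ$, and killing the symmetric imaginary part of $\CC^\transpose\CC$ with a third concentration bound. The paper's route buys brevity: one application of Bernstein and no case analysis. Your route buys generality and transparency about where the hypotheses enter. In particular, the paper's key step $\BB_i^\transpose\BB_i \preceq \BB_i^*\BB_i$ only makes sense in the Hermitian order when each $\BB_i^\transpose\BB_i$ is itself real, which forces every row to be real or purely imaginary --- exactly the structure of $\DD^{1/2}\AA$ in the application, but strictly stronger than the stated hypothesis $\BB^\transpose\BB \in \R^{d\times d}$. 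Your argument needs only that $\BB^\transpose\BB$ and $\BB^*\BB$ are both real (it covers, e.g., a $\BB$ with rows $1+i$ and $1-i$, where the paper's per-row inequality does not even typecheck), and your explicit separation of the Hermitian part $\Re(\CC^\transpose\CC)$ from the skew part makes precise in what sense the L\"owner sandwich holds when $\CC^\transpose\CC$ fails to be Hermitian --- a point the paper glosses over. Both proofs share the caveat, which you flag honestly, that neither follows from the literal hypothesis alone.
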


\begin{theorem}\label{THM:NORMRESULT}
	Under the same assumptions and notation in \cref{THM:LOWNERRESULT}, let $t= c d \gamma \epsilon^{-2} \log(d/\delta)$, where
	$
	\gamma=\vnorm{\sum_{i\in[n]}\frac{\|\BB_i\|^2}{\tilde\ell_i}\BB_i^*\BB_i}.
	$
	Then with probability $1-\delta$, $\CC$ satisfies $\|\CC^\transpose\CC-\BB^\transpose\BB\|<\epsilon$.
\end{theorem}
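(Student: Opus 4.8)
The plan is to realize $\CC^\transpose\CC$ as a sum of $t$ i.i.d. rank-one terms and apply a matrix concentration inequality, identifying $\gamma$ as the governing variance proxy. Writing $i_1,\dots,i_t$ for the sampled (with-replacement) row indices, the $k$-th rescaled row contributes $\YY_k \triangleq \frac{1}{t p_{i_k}}\BB_{i_k}^\transpose\BB_{i_k}$, so that $\CC^\transpose\CC=\sum_{k=1}^t\YY_k$. Since $\E[\YY_k]=\sum_{i\in[n]}p_i\frac{1}{tp_i}\BB_i^\transpose\BB_i=\frac1t\BB^\transpose\BB$, the centered variables $\XX_k\triangleq\YY_k-\frac1t\BB^\transpose\BB$ are i.i.d., mean zero, with $\CC^\transpose\CC-\BB^\transpose\BB=\sum_{k=1}^t\XX_k$. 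The first thing to note is that each $\XX_k$ is complex symmetric but not Hermitian (the term $\BB_{i_k}^\transpose\BB_{i_k}$ uses the unconjugated transpose), so the Hermitian matrix Bernstein inequality does not apply verbatim; I would instead pass to the Hermitian dilation $\begin{bmatrix}\mathbf 0 & \XX_k\\ \XX_k^\hermconj & \mathbf 0\end{bmatrix}$, whose spectral norm equals $\|\XX_k\|$, or equivalently invoke the rectangular matrix Bernstein inequality, which controls $\|\sum_k\XX_k\|$ through the two variance statistics $\|\sum_k\E[\XX_k\XX_k^\hermconj]\|$ and $\|\sum_k\E[\XX_k^\hermconj\XX_k]\|$ together with a uniform bound $R\geq\|\XX_k\|$.

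The main computation is to show both variance statistics are governed by $\gamma$. Using that $\BB_i\BB_i^\hermconj=\|\BB_i\|^2$ is a \emph{scalar}, a direct expansion gives $\E[\YY_k^\hermconj\YY_k]=\frac{\sum_{j\in[n]}\tilde\ell_j}{t^2}\sum_{i\in[n]}\frac{\|\BB_i\|^2}{\tilde\ell_i}\BB_i^\hermconj\BB_i$ (since $1/p_i=(\sum_j\tilde\ell_j)/\tilde\ell_i$), whose spectral norm is exactly $(\sum_j\tilde\ell_j)\gamma/t^2$; the companion moment $\E[\YY_k\YY_k^\hermconj]$ equals the entrywise conjugate of this expression and hence has the same norm, the spectral norm being invariant under conjugation. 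Because $\E[\XX_k^\hermconj\XX_k]=\E[\YY_k^\hermconj\YY_k]-\frac{1}{t^2}(\BB^\transpose\BB)^\hermconj(\BB^\transpose\BB)$ differs from the raw second moment only by subtracting a PSD matrix (here the hypothesis $\BB^\transpose\BB\in\R^{d\times d}$ inherited from \cref{THM:LOWNERRESULT} makes the deterministic cross-terms cancel cleanly), dropping it only decreases the norm, so summing the $t$ i.i.d. copies gives $\sigma^2\triangleq\max\{\|\sum_k\E[\XX_k\XX_k^\hermconj]\|,\|\sum_k\E[\XX_k^\hermconj\XX_k]\|\}\leq(\sum_j\tilde\ell_j)\gamma/t$. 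Finally, since the $\ell_2$ leverage scores satisfy $\sum_i\ell_i=\rank(\BB)\leq d$ and the $\tilde\ell_i$ are constant-factor overestimates, $\sum_j\tilde\ell_j=O(d)$, whence $\sigma^2=O(d\gamma/t)$.

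For the uniform bound I would use $\|\YY_k\|=\frac{\|\BB_{i_k}\|^2}{tp_{i_k}}=\frac{\sum_j\tilde\ell_j}{t}\cdot\frac{\|\BB_{i_k}\|^2}{\tilde\ell_{i_k}}$ together with the elementary inequality $\|\BB_i\|^2\leq\sigma_{\max}^2\,\ell_i\leq\|\BB\|^2\,\tilde\ell_i$, which follows from the SVD $\BB=\UU\bSig\VV^\hermconj$ and $\ell_i=\|\UU_i\|^2$ via $\|\BB_i\|^2=\sum_j|\UU_{ij}|^2\sigma_j^2$; this yields $R=O(d\|\BB\|^2/t)$. Substituting $\sigma^2$ and $R$ into the tail bound $\Pr[\|\sum_k\XX_k\|\geq\epsilon]\leq 2d\exp\!\big(\frac{-\epsilon^2/2}{\sigma^2+R\epsilon/3}\big)$ and choosing $t=cd\gamma\epsilon^{-2}\log(d/\delta)$, the variance term $\sigma^2=O(d\gamma/t)=O(\epsilon^2/\log(d/\delta))$ drives the exponent above $\log(2d/\delta)$ for $c$ large enough (the linear term being lower order in the regime of interest), so the failure probability is at most $\delta$. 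I expect the main obstacle to be the first two steps: correctly reducing the non-Hermitian, complex-symmetric summands via dilation, and carrying out the moment computation so that the spectral norm collapses exactly onto $\gamma$ — this is precisely where the scalar identity $\BB_i\BB_i^\hermconj=\|\BB_i\|^2$, invariance of the spectral norm under entrywise conjugation, and the realness of $\BB^\transpose\BB$ all enter.
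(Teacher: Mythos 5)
Your proposal is correct and follows essentially the same route as the paper's proof: the identical i.i.d.\ rank-one decomposition $\CC^\transpose\CC-\BB^\transpose\BB=\sum_k\XX_k$, the same second-moment computation that collapses onto $\gamma$ via the scalar identity $\bar\BB_i\BB_i^\transpose=\|\BB_i\|^2$ together with $\sum_i\tilde\ell_i=O(d)$, and a matrix Bernstein tail bound; your Hermitian-dilation/rectangular-Bernstein step is precisely the rigorous form of what the paper relegates to its closing remark (its main display instead asserts $\YY\YY^\hermconj=\YY^\hermconj\YY$ ``for our particular task,'' which holds for real $\DD^{1/2}\AA$-type instances but not for general complex $\BB$ with $\BB^\transpose\BB$ real). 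The one soft spot in your write-up --- the unproved claim that the linear term $R\epsilon/3$ is lower order, which in general requires $\epsilon\|\BB\|^2=O(\gamma)$ --- is shared (indeed, silently omitted) by the paper, whose application of the matrix Chernoff/Bernstein bound keeps only the variance term, so your argument matches and slightly exceeds the paper's own level of rigor.
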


\begin{remark}\label{remark:sampleupperboundcomparison1}
	 It is hard to directly compare \cref{THM:NORMRESULT} to the sample complexity of \citet{xuNonconvexTheoretical2017}, where they require $t\geq \frac{\wh K^2}{\epsilon^2}\log(2d/\delta)$, $\widehat K\geq \|\BB^*\BB\|=\cO(\sigma_1^2)$. To apply \cref{THM:NORMRESULT} to a Hessian of the form $\AA^\transpose \DD\AA$, one should set $\BB = \DD^{1/2}\AA$. Compared to the previously proposed sketching $\AA^\transpose \bSS^\transpose \bSS\DD\AA$ for non-convex $F$, our proposed sketching $\AA^\transpose \DD^{1/2}\bSS^\transpose \bSS\DD^{1/2}\AA$ in practice often has better performance (see \cref{SEC:EXP}). We conjecture this is because $\BB$ has several large singular values, but many rows have small row norms $\|\BB_i\|^2\ll\tilde\ell_i$. Hence $d\gamma$ can be much smaller than $\wh{K}^2$.
	 
\end{remark}


	Note that \cref{THM:LOWNERRESULT} cannot be guaranteed by row norm sampling. Consider $\BB=\diag(\infty, 1)$. Then row norm sampling will never sample the second row, yet the leverage scores of both rows are $1$.
    All leverage scores can be computed up to a constant factor simultaneously in $\cO\p{(\nnz(\AA)+d^2)\log n}$ time. See the appendix for details.
%
%

\paragraph{Hybrid of Randomized-Deterministic Sampling.}
We propose \cref{alg:lsdetsampling} to speed up the approximation of Hessian matrices by deterministically sampling the ``heavy'' rows. The proposed method provably outperforms the vanilla leverage score sampling algorithm under a relaxed RIP condition.
\begin{restatable}{theorem}{HYBRIDUPPERBOUND}
\label{THM:HYBRIDUPPERBOUND}
	Let $\AA\in\R^{n\times d}$ and $\DD=\diag(d_1,\ldots,d_n)\in\R^{n\times n}$. For any matrix $\AA$ and any index set $N$, let $\AA_N\in\R^{n\times d}$ be such that for all $i\in N$, $(\AA_N)_i=\AA_i$, and all other rows of $\AA_N$ are $0$. 	
	Suppose $\AA^\transpose  \DD\AA = \sum_{i=1}^T\EE^i+\NN$, where
		$\EE^i=\AA_{E_i}^\transpose\DD_{E_i}\AA_{E_i}\in\R^{d\times d}$, $E_i$ is an index set with size $\cO(d)$ (that is, at each outer iteration in \cref{alg:lsdetsampling} step 3 below, we deterministically select at most $|E_i|=\cO(d)$ rows). Let $E=\cup_{i=1}^TE_i$, $N=\{1,\ldots, n\}\backslash E$, $\NN=\AA_N^\transpose  \DD_N \AA_N\in\R^{d\times d}$.
			
	Assume $\DD_N^{1/2}\AA_N$ has the following {\bf relaxed restricted isometry property (RIP) with parameter $\rho$}. That is, with probability $1-1/n$ over uniformly random sampling matrices $\bSS$ with $t=\cO(d\|\AA^\transpose \DD\AA \|/\epsilon^2)$ rows each scaled by $\sqrt{{n}/{t}}$, we have
$
	\forall \xx\not\in\text{ kernel}(\DD_N^{1/2}\AA_N), \|\bSS\DD_N^{1/2}\AA_N\xx\|=(1\pm\epsilon)\rho \|\xx\|.
$
	
	Also assume that for some constant $c>1$:
	$
	c\|\AA_N^\transpose|\DD_N|\AA_N\|\leq \|\sum_i\EE^i\|.
	$
	Then the sketch can be expressed as $\sum_i\EE^i+\AA_N^\transpose\DD_N^{1/2}\bSS^\transpose  \bSS \DD_N^{1/2} \AA_N$ and, with probability $1-\cO({1}/{d})$, we have
	$
	\|\sum_i\EE^i+\AA_N^\transpose\DD_N^{1/2}\bSS^\transpose  \bSS \DD_N^{1/2} \AA_N-\AA^\transpose  \DD\AA \|
	\leq \epsilon.
	$
\end{restatable}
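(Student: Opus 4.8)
The plan is to first strip off the deterministic part and then control the random residual by a symmetric matrix concentration argument, using the relaxed RIP and the domination hypothesis for complementary purposes. First I would note that the deterministic terms cancel: since the target is $\AA^\transpose\DD\AA=\sum_i\EE^i+\NN$ and the sketch is $\sum_i\EE^i+\AA_N^\transpose\DD_N^{1/2}\bSS^\transpose\bSS\DD_N^{1/2}\AA_N$, the quantity to bound is exactly the residual error $\|\Delta\|$ with $\Delta=\AA_N^\transpose\DD_N^{1/2}\bSS^\transpose\bSS\DD_N^{1/2}\AA_N-\NN$. Writing $\MM\triangleq\DD_N^{1/2}\AA_N\in\C^{n\times d}$ (complex, because $\DD_N$ may be indefinite on the residual rows) and $\WW\triangleq\bSS^\transpose\bSS$, this is $\Delta=\MM^\transpose\WW\MM-\MM^\transpose\MM=\MM^\transpose(\WW-\eye)\MM$. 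Two structural facts make the rest tractable: because $\bSS$ is a rescaled uniform sampling matrix, $\WW$ is a nonnegative diagonal matrix with $\E[\WW]=\eye$, and each summand $\MM_i^\transpose\MM_i=d_i(\AA_N)_i^\transpose(\AA_N)_i$ is real, so $\Delta$ is a real symmetric, mean-zero matrix. Consequently $\|\Delta\|=\sup_{\xx\in\R^d,\,\|\xx\|=1}|\xx^\transpose\Delta\xx|$, and $\Delta$ is amenable to the matrix Bernstein argument behind \cref{THM:NORMRESULT}.

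Next I would connect the indefinite quadratic form to the relaxed RIP through a modulus inequality. For a unit $\xx$, the term $\xx^\transpose\MM^\transpose\WW\MM\xx=(\bSS\MM\xx)^\transpose(\bSS\MM\xx)$ is a sum of squares of the complex entries of $\bSS\MM\xx$, and the elementary bound $|\sum_j z_j^2|\le\sum_j|z_j|^2$ gives $|\xx^\transpose\MM^\transpose\WW\MM\xx|\le\|\bSS\MM\xx\|^2$; the right-hand side is precisely the PSD form $\xx^\transpose\AA_N^\transpose|\DD_N|^{1/2}\WW|\DD_N|^{1/2}\AA_N\xx$ that the relaxed RIP controls as $(1\pm\epsilon)^2\rho^2\|\xx\|^2$ (and $\xx\in\ker(\MM)$ is trivial, since then both forms vanish). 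The same passage with $\WW=\eye$ bounds the unsketched form by $\xx^\transpose\AA_N^\transpose|\DD_N|\AA_N\xx\le\|\AA_N^\transpose|\DD_N|\AA_N\|$, so the flatness parameter $\rho^2$ is comparable to $\|\AA_N^\transpose|\DD_N|\AA_N\|$, which the domination hypothesis $c\|\AA_N^\transpose|\DD_N|\AA_N\|\le\|\sum_i\EE^i\|$ ties back to the total $\|\AA^\transpose\DD\AA\|$.

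The bridge that actually closes the bound is the observation that the two parameters entering the symmetric matrix Bernstein inequality, namely the per-sample spectral bound $\max_{i\in N}|d_i|\|(\AA_N)_i\|^2$ and the variance proxy $\|\sum_{i\in N}d_i^2\|(\AA_N)_i\|^2(\AA_N)_i^\transpose(\AA_N)_i\|$, are insensitive to the signs of the $d_i$; hence they coincide with the Bernstein parameters that certify the (PSD) relaxed RIP for $\AA_N^\transpose|\DD_N|^{1/2}\WW|\DD_N|^{1/2}\AA_N$. I would therefore feed these sign-independent parameters into matrix Bernstein, use the domination hypothesis to replace $\|\AA_N^\transpose|\DD_N|\AA_N\|$ (equivalently $\rho^2$) by a constant multiple of $\|\AA^\transpose\DD\AA\|$, and verify that with $t=\cO(d\|\AA^\transpose\DD\AA\|/\epsilon^2)$ the resulting deviation is at most $\epsilon$ after a union bound over an $\epsilon$-net of the unit sphere, with failure probability $\cO(1/d)$. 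The main obstacle, and the step I expect to require the most care, is exactly this conversion from the relative, modulus-based RIP guarantee to an absolute, \emph{signed} spectral bound on $\Delta$: the RIP only controls the total modulus energy summed over positive- and negative-curvature rows, whereas $\xx^\transpose\Delta\xx$ sees their signed difference, so the argument must route through the sign-independence of the Bernstein parameters rather than apply the RIP as a black box, and must confirm that the deterministic heavy-row removal has driven $\max_{i\in N}|d_i|\|(\AA_N)_i\|^2$ down to $\cO(d/n)$, which is what makes the stated sampling budget $t$ sufficient.
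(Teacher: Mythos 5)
Your reduction is the right one, and matches the paper's: after cancelling the deterministic terms, everything comes down to bounding $\|\Delta\|$ with $\Delta=\AA_N^\transpose\DD_N^{1/2}\bSS^\transpose\bSS\DD_N^{1/2}\AA_N-\AA_N^\transpose\DD_N\AA_N$, and your observation that $\Delta$ is real, symmetric and mean-zero is correct. The gap is in the concentration step. Your matrix Bernstein argument needs quantitative per-sample parameters --- the row bound $\max_{i\in N}|d_i|\|(\AA_N)_i\|^2$ and the variance proxy $\|\sum_{i\in N}d_i^2\|(\AA_N)_i\|^2(\AA_N)_i^\transpose(\AA_N)_i\|$ --- and nothing in the theorem's hypotheses supplies them. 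The relaxed RIP is an \emph{assumption} about what uniform sampling does to the subspace; small Bernstein parameters would \emph{imply} it, but the implication does not reverse, so the RIP cannot be converted into the parameters you need. Nor does the deterministic stage of \cref{alg:lsdetsampling} help: it removes rows by leverage score, and leverage scores do not control row norms --- the paper's own example $\BB=\diag(\infty,1)$ has both leverage scores equal to $1$, so a row with arbitrarily large $|d_i|\|(\AA_N)_i\|^2$ survives any leverage-score threshold. Hence your closing step, ``confirm that the deterministic heavy-row removal has driven $\max_{i\in N}|d_i|\|(\AA_N)_i\|^2$ down to $\cO(d/n)$,'' is not confirmable from the stated hypotheses; it is an additional, strictly stronger assumption. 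Relatedly, your modulus inequality only bounds the sketched and unsketched quadratic forms \emph{separately} (by roughly $\rho^2$ and $\|\AA_N^\transpose|\DD_N|\AA_N\|$ respectively); it says nothing about their difference, which is the quantity that must be small --- this is precisely why you were pushed onto the Bernstein track, where the argument then stalls.

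For comparison, the paper closes the bound using the RIP as a black box, with no row-norm information at all. First, a chunking argument: partition the rows of $\DD_N^{1/2}\AA_N$ into $n/t$ blocks of $t$ rows, apply the relaxed RIP to each block and union bound, giving $\|\DD_N^{1/2}\AA_N\xx\|^2=(1\pm\epsilon)\frac{n\rho^2}{t}\|\xx\|^2$; combined with the RIP for $\bSS$ itself, this yields $\|\bSS\DD_N^{1/2}\AA_N\xx\|^2=(1\pm\cO(\epsilon))\|\DD_N^{1/2}\AA_N\xx\|^2$ for all $\xx$, i.e., $\bSS$ is an $\epsilon$-subspace embedding for $\text{span}\p{\DD_N^{1/2}\AA_N,(\DD_N^{1/2})^{\hermconj}\AA_N}$. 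Second, \cref{lma:embeddingamm} (approximate matrix multiplication for subspace embeddings, proved over $\C$) bounds the difference directly: $\|\Delta\|\le\epsilon\|\DD_N^{1/2}\AA_N\|\,\|(\DD_N^{1/2})^{\hermconj}\AA_N\|=\epsilon\|\AA_N^\transpose|\DD_N|\AA_N\|$, with all sign and complex cancellations absorbed by the lemma. Finally, the domination hypothesis enters exactly where you intended it to, via the chain $\epsilon\|\AA_N^\transpose|\DD_N|\AA_N\|\le\frac{\epsilon}{c}\|\sum_i\EE^i\|\le\frac{\epsilon}{c-1}\p{\|\sum_i\EE^i\|-\|\AA_N^\transpose\DD_N\AA_N\|}\le\frac{\epsilon}{c-1}\|\AA^\transpose\DD\AA\|$. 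If you wish to salvage your route, you would have to add the row-norm bound as an explicit hypothesis; as written, the proof cannot be completed.
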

\begin{remark}
	The takeaway from \cref{THM:HYBRIDUPPERBOUND} is that the total sample complexity of such a sampling scheme, i.e., \cref{alg:lsdetsampling}, is $\cO(Td+d\|\AA^\transpose \DD\AA \|/\epsilon^2)=\cO(d\|\AA^\transpose \DD\AA \|)$ for constant $\epsilon$ and $T$. On the other hand, the vanilla leverage score sampling scheme requires $\Omega(d\gamma\log d)$ rows. Notice that often $\gamma\gg \|\AA^\transpose \DD\AA \|$ because $\gamma$ involves $\|\AA^\transpose |\DD|\AA \|$. Although $T$ is a tunable parameter, we found in the experiments that $T=1$ performs well.
\end{remark}

\begin{algorithm}
	\caption{Hybrid Randomized-Deterministic Sampling (LS-Det) \label{alg:lsdetsampling}}
	\begin{algorithmic}[1]
		\STATE \textbf{Input:} $\DD^{1/2}\AA\in\C^{n\times d}$, iteration number $T$, threshold $m$, precision $\epsilon$, number $k$ of rows left 
		\STATE Set $k=n$
		\FOR{$t\in [T]$}
		\STATE Calculate the leverage scores $\{\ell_1,\ldots, \ell_k\}$ of $\DD^{1/2}\AA$
		\FOR {$i\in[k]$}
		\IF{$\ell_i\geq m$}
		\STATE Select row $i$, set $\DD^{1/2}\AA$ to be the set of remaining rows,  set $k=k-1$
		\ENDIF
		\ENDFOR
		\ENDFOR
		\STATE Sample $h=\cO(d/\epsilon^2)$ rows from the remaining rows using either their leverage score distribution, or uniformly at random and scaled by $\sqrt{\frac{n}{h}}$
		\STATE \textbf{Output:} The set of sampled and rescaled rows
	\end{algorithmic}
\end{algorithm}

\paragraph{Which Matrix to Sketch?}
We give a general rule of thumb that guides which matrix we should sample to get a better sample complexity. Recall that the Hessian matrix we try to sketch is of the form $\AA^\transpose \DD\AA $ where $\DD$ is diagonal. There are two natural candidates:

\begin{minipage}{.4\linewidth}
\begin{align}\label{eq:separatesampling}
\frac{s(\AA_i)+s((\DD\AA)_i)}{\sum_i \p{s(\AA_i)+s((\DD\AA)_i)}}
\end{align}
\end{minipage}%
\begin{minipage}{.4\linewidth}
\begin{align}\label{eq:togethersampling}
\frac{s((\DD^{1/2}\AA)_i)}{\sum_i s((\DD^{1/2}\AA)_i)}
\end{align}
\end{minipage}

\noindent for a score function $s:\R^{d}\to \R$ (e.g., leverage scores or row norms).
It turns out that sampling according to \cref{eq:separatesampling} can lead to an arbitrarily worse upper bound than sampling using \cref{eq:togethersampling}.

\begin{theorem}\label{THM:UPPERBOUNDCOMPARISON}
	Let $\AA\in\R^{n\times d}$. Let $\DD\in\R^{n\times n}$ be diagonal. Let $\TT$ be an $\epsilon$ $\ell_2$-subspace embedding for $\text{span}(\AA, \DD\AA)$ and $\bSS$ be an $\epsilon$ $\ell_2$-subspace embedding for $\text{span}(\DD^{1/2}\AA, (\DD^{1/2})^{\hermconj}\AA)$. Sampling by \cref{eq:separatesampling} can give an arbitrarily worse upper bound than sampling by \cref{eq:togethersampling}.
\end{theorem}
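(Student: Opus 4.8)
The plan is to prove the statement by exhibiting an explicit pair $(\AA,\DD)$ for which the sample-complexity upper bound guaranteed by the analysis behind \cref{THM:NORMRESULT} (equivalently, the matrix-Bernstein variance term that controls the number of samples) is arbitrarily larger for the distribution \eqref{eq:separatesampling} than for \eqref{eq:togethersampling}. I would take the representative score function to be the squared row norm, $s(\AA_i)=\|\AA_i\|^2$, so that \eqref{eq:separatesampling} samples row $i$ with probability proportional to $\|\AA_i\|^2(1+d_i^2)$ --- the squared norm of the concatenated row $[\AA_i,\,(\DD\AA)_i]$ that the subspace embedding $\TT$ of $\text{span}(\AA,\DD\AA)$ must preserve --- while \eqref{eq:togethersampling} samples with probability proportional to $|d_i|\,\|\AA_i\|^2=\|(\DD^{1/2}\AA)_i\|^2$.

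First I would write down, for each scheme, the variance matrix governing the importance-sampling estimator of $\AA^\transpose\DD\AA=\sum_i d_i\AA_i^\transpose\AA_i$. Using $(\AA_i^\transpose\AA_i)^2=\|\AA_i\|^2\AA_i^\transpose\AA_i$ and the rescaling $1/(tp_i)$, the Bernstein second-moment quantities are
\[
V_{\mathrm{tog}}=Z\,\bigl\|\AA^\transpose|\DD|\AA\bigr\|,\qquad V_{\mathrm{sep}}=W\,\Bigl\|\textstyle\sum_i\tfrac{d_i^2}{1+d_i^2}\AA_i^\transpose\AA_i\Bigr\|,
\]
where $Z=\sum_i|d_i|\,\|\AA_i\|^2=\|\DD^{1/2}\AA\|_F^2$ and $W=\sum_i\|\AA_i\|^2(1+d_i^2)=\|\AA\|_F^2+\|\DD\AA\|_F^2$; the number of samples needed in each case is proportional to the corresponding $V/\epsilon^2$, and the max-norm terms behave analogously (I would verify them only briefly). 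The modulus $|\DD|$ appearing in $V_{\mathrm{tog}}$, rather than $\DD$, is exactly what arises from the complex square-root $\DD^{1/2}$ via $|(\DD^{1/2})_{ii}|^2=|d_i|$, and tracking it is the one place the non-PSD setting enters.

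The crux is then the construction. The key asymmetry is that \eqref{eq:separatesampling} carries an additive ``$+1$'' (from the $\AA_i$ block) that is blind to the magnitude of $d_i$: a row with $\|\AA_i\|$ large but $d_i\approx 0$ contributes almost nothing to $\AA^\transpose\DD\AA$ and is essentially ignored by \eqref{eq:togethersampling} (weight $|d_i|\|\AA_i\|^2\approx 0$), yet it dominates \eqref{eq:separatesampling} (weight $\approx\|\AA_i\|^2$), inflating $W$ without improving the estimate. Concretely I would take $d=2$, $n=M+1$, one ``signal'' row $\AA_1=(1,0)$ with $d_1=1$, and $M$ ``decoy'' rows $\AA_j=(0,\sqrt R)$ with $d_j=1/(MR)$, so that $\AA^\transpose\DD\AA=\AA^\transpose|\DD|\AA=\eye_2$, giving $V_{\mathrm{tog}}=Z\cdot 1=2$, whereas $W=2+MR+o(1)$ and $\bigl\|\sum_i\tfrac{d_i^2}{1+d_i^2}\AA_i^\transpose\AA_i\bigr\|\to\tfrac12$, so that $V_{\mathrm{sep}}\sim MR/2$. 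Letting $R\to\infty$ drives $V_{\mathrm{sep}}/V_{\mathrm{tog}}\to\infty$, which is the claim.

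I expect the main obstacle to be conceptual rather than computational: making the comparison genuinely apples-to-apples. One must argue that $V_{\mathrm{sep}}$ (resp.\ $V_{\mathrm{tog}}$) really is the quantity controlling the best available upper bound for each scheme --- in particular that the separate scheme is forced to embed the full $2d$-dimensional span $\text{span}(\AA,\DD\AA)$ and hence pays for $W$, while the together scheme embeds only the $d$-dimensional (complex) span and pays for $Z$ --- and that the together bound does not secretly degrade (e.g.\ through the $\|\AA^\transpose|\DD|\AA\|$ factor) on the same instance. The decoy construction is engineered precisely so that $\AA^\transpose|\DD|\AA=\AA^\transpose\DD\AA$ remains perfectly conditioned, which closes that gap and isolates the blow-up to $W$ alone.
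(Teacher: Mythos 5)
Your proof is correct, but it takes a genuinely different route from the paper's. The paper never computes a variance: it applies the approximate-matrix-multiplication lemma (\cref{lma:embeddingamm}) to the two subspace embeddings named in the statement, obtaining error bounds $\epsilon\|\AA\|\|\DD\AA\|$ for the scheme behind \cref{eq:separatesampling} and $\epsilon\|\DD^{1/2}\AA\|^2=\epsilon\|\AA^\transpose|\DD|\AA\|\le\epsilon\|\AA\|\|\DD\AA\|$ for the scheme behind \cref{eq:togethersampling}, and then separates the two factors with a $2\times 2$ diagonal instance: $\AA=\diag(1,a_2)$, $\DD=\diag(d_1,d_2)$ with $|d_1|=1$, $a_2=\Theta(|d_2|^{-1/2})$, $|d_2|\to\infty$, so that $\|\AA\|\|\DD\AA\|=\sqrt{|d_2|}\to\infty$ while $\|\AA^\transpose|\DD|\AA\|=\Theta(1)$. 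You instead instantiate the generic score $s$ with squared row norms, derive the matrix-Bernstein second moments of the two importance-sampling estimators of $\AA^\transpose\DD\AA$ (your formulas $V_{\mathrm{tog}}=Z\,\|\AA^\transpose|\DD|\AA\|$ and $V_{\mathrm{sep}}=W\,\|\sum_i\tfrac{d_i^2}{1+d_i^2}\AA_i^\transpose\AA_i\|$ are both correct, and the almost-sure term behaves the same way: every term in the together scheme has norm exactly $Z$, while your signal row contributes a separate-scheme term of norm $\approx MR/2$), and blow up the ratio with a signal-plus-decoys instance, which checks out ($V_{\mathrm{tog}}=2$, $V_{\mathrm{sep}}\sim MR/2$). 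The two formalizations are dual --- the paper compares error bounds at fixed embedding quality, you compare sample-complexity bounds at fixed target error --- and they isolate different mechanisms: in the paper's example the damage comes from $\|\AA\|$ and $\|\DD\AA\|$ being attained on different coordinates, whereas in yours it comes from the $d_i$-blind additive $\|\AA_i\|^2$ term in \cref{eq:separatesampling}. One caveat if you write this up: the theorem statement explicitly introduces the embeddings $\TT$ and $\bSS$, which your argument never uses; you should say that you are comparing Bernstein-type bounds in the spirit of \cref{THM:NORMRESULT} for the row-norm instantiation of $s$ --- a legitimate reading, since the theorem allows any score function, but a different notion of ``upper bound'' than the one the paper's proof compares.
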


%
%
%
%

\subsection{Application to Optimization Algorithms}
\label{SEC:EXP}

As mentioned previously, to accelerate convergence of second-order methods with an inexact Hessian, one needs to construct the sub-sampled matrix such that $\HH(\xx)\approx\nabla^2F(\xx)$. In randomized sub-sampling of the Hessian matrix, we select the $i$-th term in $\sum_{i=1}^n\nabla^2f_i(\xx)$ with probability $p_i$, restricting $\sum_{i\in[n]}p_i=1$. Let $\cS$ denote the sample collection and define
$
\HH(\xx) \triangleq \frac{1}{n|\cS|}\sum_{i\in\cS}\frac{1}{p_i}\nabla^2f_i(\xx) + \nabla^{2} r(\xx).
$
Uniform oblivious sampling is done with $ p_{i} = 1/n $, which often results in a poor approximation unless $|\cS| \gg 1$. Leverage score sampling is in some sense an optimal data-aware sampling scheme where each $ p_{i} $ is proportional to the leverage score $ \ell_i $ (see \cref{alg:lssampling}). 

One condition on the quality of approximation $\HH(\xx)\approx\nabla^2F(\xx)$ is typically taken to be 
\begin{align} \label{eq:H}
\|\HH(\xx)-\nabla^2F(\xx)\|\leq\epsilon, \quad \text{for some } \; 0 < \epsilon \leq 1,
\end{align}
which has been considered both in the contexts of convex and non-convex Newton-type optimization methods \citep{roosta2019sub,bollapragada2019exact,xuNonconvexTheoretical2017,yao2018inexact,liu2019stability}. 
For convex settings where $ \nabla^2 F(x) \succeq \zero $, a stronger condition can be considered as 
\begin{align}\label{eq:S}
(1-\epsilon) \nabla^2F(x)  \preceq \HH(\xx) \preceq (1+\epsilon )\nabla^2F(x),
\end{align}
which, in the context of sub-sampled Newton's method, leads to a faster convergence rate than \cref{eq:H} \cite{roosta2019sub,xu2016sub,liu2017inexact}. However, in all prior work, \cref{eq:S} has only been considered in the restricted case where each $ f_{i} $ is convex. Here, using the result of \cref{SEC:OPTIMIZATION_LS}, we show that \cref{eq:S} can also be guaranteed in a more general case where the $ f_{i} $'s in \cref{eq:obj_sum} are allowed to be non-convex. We demonstrate the theoretical advantages of complex leverage score sampling in \cref{alg:lssampling,alg:lsdetsampling} as a way to guarantee \cref{eq:S,eq:H} in convex and non-convex settings, respectively. 
For the convex case, we consider sub-sampled Newton-CG \citep{roosta2019sub,xu2016sub}. For non-convex settings, we have chosen two examples of Newton-type methods: the classical trust-region \citep{conn2000trust} and the more recently introduced Newton-MR method \citep{roosta2018newton}. We emphasize that the choice of these non-convex algorithms was, to an extent, arbitrary and we could instead have picked any Newton-type method whose convergence has been previously studied under Hessian approximation models, e.g., adaptive cubic regularization \citep{yao2018inexact,tripuraneni2017stochasticcubic}. The details of these optimization methods and theoretical convergence results are deferred to the supplementary.

We verify the results of \cref{SEC:OPTIMIZATION_LS} by evaluating the empirical performance of the non-uniform sampling strategies proposed in the context of Newton-CG, Newton-MR and trust-region, see details in the appendix.

\begin{figure}[ht]
	\centering
	\subfigure[$ \|\nabla F(x_{k})\| $ vs.\ Oracle calls (\texttt{Drive Diagnostics})]
	{\includegraphics[scale=0.35]{./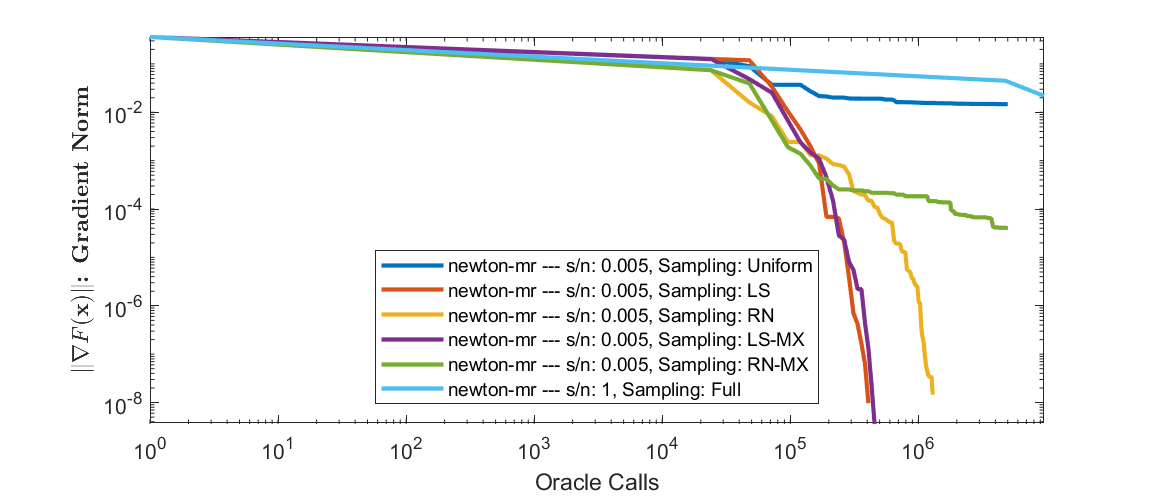}}
	\subfigure[$ \|\nabla F(x_{k})\| $ vs.\ Oracle calls (\texttt{Cover Type})]
	{\includegraphics[scale=0.35]{./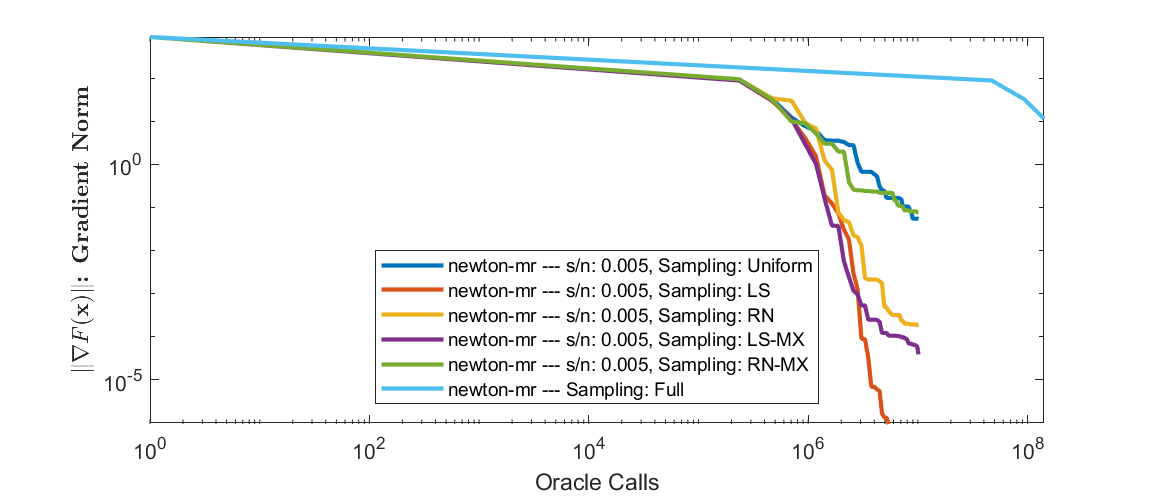}}
	\subfigure[$ \|\nabla F(x_{k})\| $ vs.\ Oracle calls (\texttt{UJIIndoorLoc})]
	{\includegraphics[scale=0.35]{./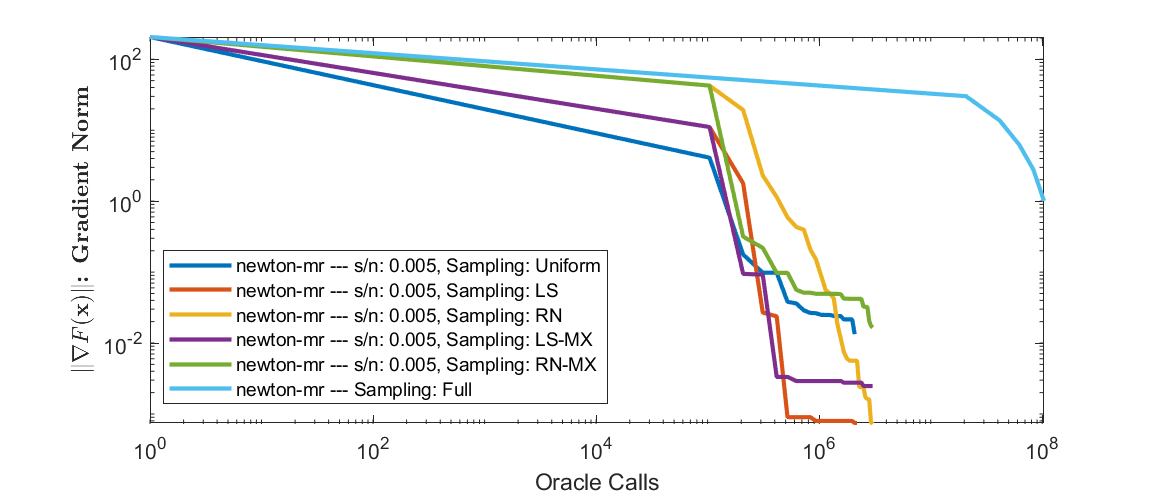}}
	\caption{Comparison of Newton-MR with various sampling schemes. \label{fig:newton_mr}}
\end{figure}

\begin{figure*}[ht]
	\centering
	\subfigure[$ F(x_{k}) $ vs.\ Oracle calls]
	{\includegraphics[scale=0.35]{./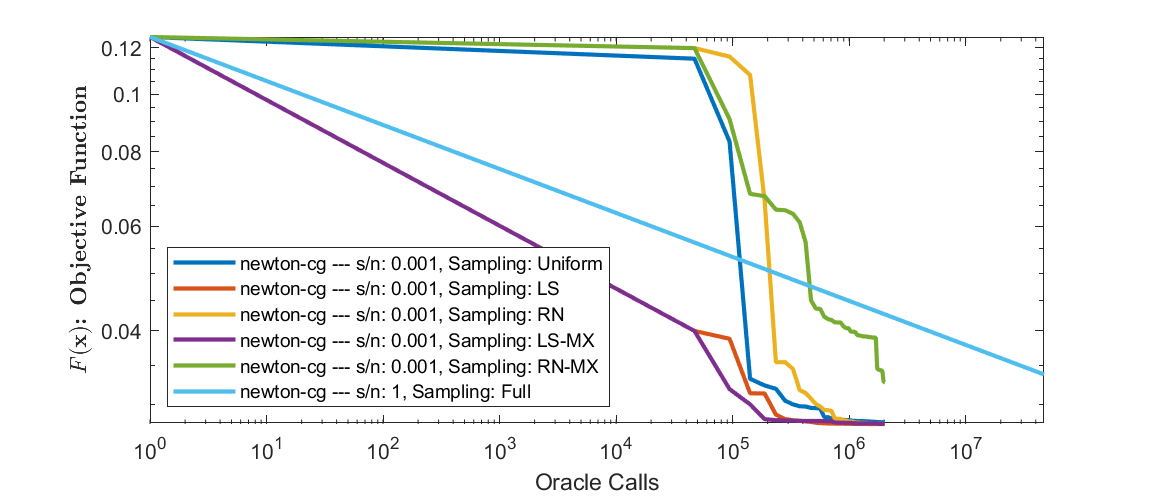}}
	\subfigure[$ \|\nabla F(x_{k})\| $ vs.\ Oracle calls]
	{\includegraphics[scale=0.35]{./figs/nlls_none/covetype/newton-mr/Grad_Props.png}} 
	\subfigure[$ F(x_{k}) $ vs.\ Oracle calls]
	{\includegraphics[scale=0.35]{./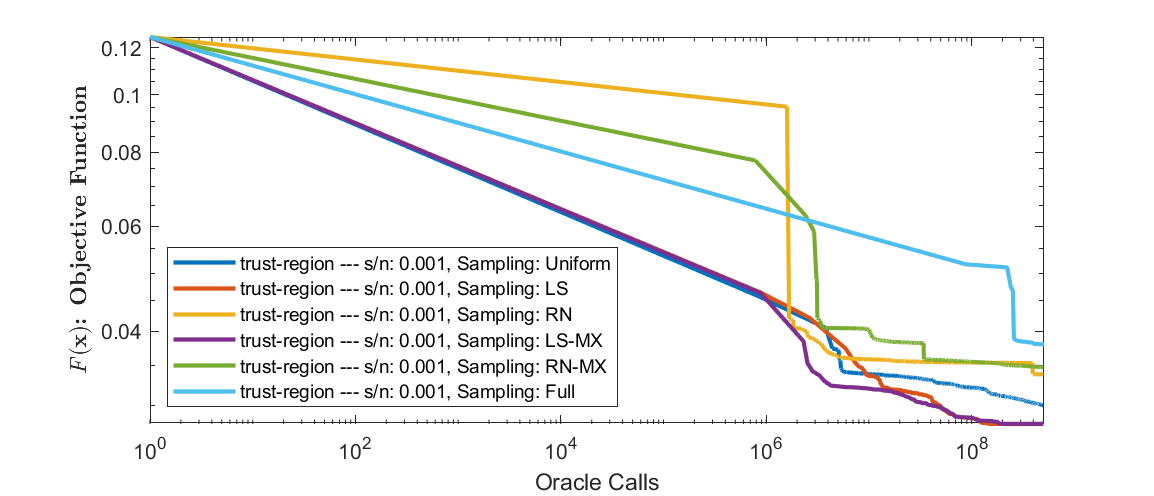}}
	\subfigure[$ F(x_{k}) $ vs.\ Oracle calls]
	{\includegraphics[scale=0.35]{./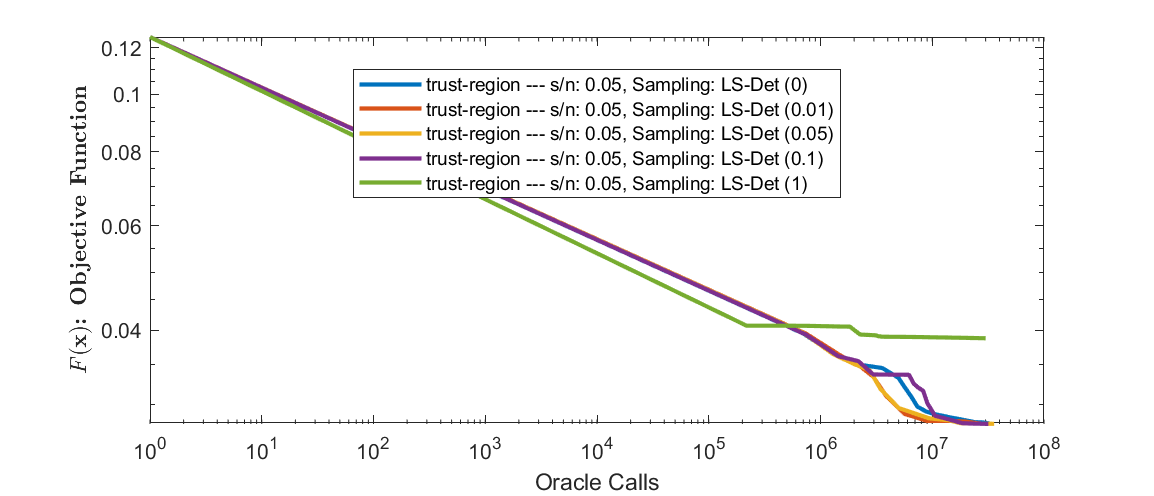}}
	\caption{Comparison of (a) Newton-CG, (b) Newton-MR, and (c) Trust-region using various sampling schemes. (d)  Performance of the hybrid sampling scheme. \label{fig:optimization}}
\end{figure*}

\noindent
\paragraph{Sub-sampling Schemes.}
We focus on several sub-sampling strategies (all are done with replacement).
\emph{Uniform}: For this we have $ p_{i} = 1/n, \; i = 1,\ldots,n $. \emph{Leverage Scores (LS)}: Complex leverage score sampling by considering the leverage scores of $ \DD^{1/2}\AA $ as in \cref{alg:lssampling}. \emph{Row Norms (RN)}: Row-norm sampling of $ \DD^{1/2} \AA $ using \cref{eq:togethersampling} where $s((\DD^{1/2} \AA)_i)=|f_{i}{''}(\dotprod{\aa_{i}, \xx})|\vnorm{\aa_i}_{2}^{2}$.
\emph{Mixed Leverage Scores (LS-MX)}: A mixed leverage score sampling strategy arising from a non-symmetric viewpoint of the product $ \AA^{\transpose} \left( \DD \AA\right) $ using \cref{eq:separatesampling} with $s(\AA_i)=\ell_i(\AA)$ and $s((\DD\AA)_i)=\ell_i(\DD\AA)$. \emph{Mixed Norm Mixture (RN-MX)}: A mixed row-norm sampling strategy with the same non-symmetric viewpoint as in \cref{eq:separatesampling} with $s(\AA_i)=\vnorm{(\AA)_i}$ and $s((\DD\AA)_i)=\vnorm{(\DD\AA)_i}$. \emph{Hybrid Randomized-Deterministic (LS-Det)}: Sampling using \cref{alg:lsdetsampling}. \emph{Full}: In this case, the exact Hessian is used.

\noindent
\paragraph{Model Problems and Datasets.}
We consider the task of binary classification using the non-linear least squares formulation of \cref{eq:obj_sum}. Numerical experiments in this section are done using \texttt{covertype}, \texttt{Drive Diagnostics}, and  \texttt{UJIIndoorLoc} from the UC Irvine ML Repository \cite{Dua:2019}.

\noindent
\paragraph{Performance Evaluation.}
For Newton-MR, the convergence is measured by the norm of the gradient, and hence we evaluate it using various sampling schemes by plotting $ \vnorm{\nabla F(\xxk)} $ vs.\ the total number of oracle calls. For Newton-CG and trust-region, which guarantee descent in objective function, we plot $ F(\xxk) $ vs.\ the total number of oracle calls. We deliberately choose not to use ``wall-clock'' time since it heavily depends on the implementation details and system specifications. 

\paragraph{Comparison Among Various Sketching Techniques.}
We present empirical evaluations of Uniform, LS, RN, LS-MX, RN-MX and Full sampling in the context of Newton-MR in \cref{fig:newton_mr}, and evaluation of all sampling schemes in Newton-CG, Newton-MR, Trust-region, and hybrid sampling on \texttt{covertype} in \cref{fig:optimization}. For all algorithms, LS and RN sampling amounts to a more efficient algorithm than that with LS-MX and RN-MX variants respectively, and at times this difference is more pronounced than other times, as predicted in \cref{THM:UPPERBOUNDCOMPARISON}. Meanwhile, LS and LS-MX often outperform RN and RN-MX, as proven in \cref{THM:LOWNERRESULT} and \cref{THM:NORMRESULT}.

\noindent
\paragraph{Evaluation of Hybrid Sketching Techniques.}
To verify the result of \cref{alg:lsdetsampling}, we evaluate the performance of the trust-region algorithm by varying the terms involved in $ \EE $, where we call the rows with large leverage scores heavy, and denote the matrix formed by the heavy rows by $\EE$. The matrix formed by the remaining rows is denoted by $\NN$, see \cref{THM:HYBRIDUPPERBOUND} for details. We do this for a simple splitting of $ \HH = \EE + \NN $. We fix the overall sample size and change the fraction of samples that are deterministically picked in $ \EE $. The results are depicted in \cref{fig:optimization}. The value in brackets after LS-Det is the fraction of samples that are included in $ \EE $, i.e., deterministic samples. ``LS-Det (0)'' and ``LS-Det (1)'' correspond to $ \EE = \bm{0}$ and $ \NN = \bm{0} $, respectively. The latter strategy has been used in low rank matrix approximations \citep{mccurdy2018ridge}. As can be seen, the hybrid sampling approach is always competitive with, and at times strictly better than, LS-Det (0). As expected, LS-Det (1), which amounts to entirely deterministic samples, consistently performs worse. This can be easily attributed to the high bias of such a deterministic estimator.

\section{Sketch-and-Solve Paradigm For Complex Regression}\label{SEC:SKETCHFORREG}
\subsection{Theoretical Results}
Recall the $\ell_p$-regression problem:
\begin{align}\label{eq:l1obj}
	\min_{\xx}\|\AA\xx-\bb\|_p
\end{align}
Here we consider the complex version: $\AA\in\C^{n\times d}, \bb\in\C^{n}, \xx\in\C^{d}$.

The inner product over the complex field can be embedded into a higher-dimensional real vector space. It suffices to consider the scalar case.
\begin{lemma}\label{lma:complexembed}
	Let $x, y\in\C$ where $x=a+bi, y=c+di$, let $\phi:\C\to\R^2$ via
	$
		\phi(x)=\phi(a+bi)=[a\ \ b].
	$
	Let $\sigma:\C\to\R^{2\times 2}$ via:
	$
		\sigma(y)=\sigma(c+di)=\begin{bmatrix}
			c & d\\-d & c
		\end{bmatrix}
	$
	. Then $\phi$ and $\sigma$ are bijections (between their domains and images), and we have 
	$
		\phi(\bar y x)^\transpose=\sigma(y)\phi(x)^\transpose
	$
\end{lemma}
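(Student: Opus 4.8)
The plan is to treat this as a verification of the familiar faithful matrix representation of $\mathbb{C}$ over $\mathbb{R}$, paying attention to the conjugate and transpose conventions baked into the statement. First I would dispatch the bijectivity claims, which are immediate. The map $\phi(a+bi) = [a\ \ b]$ is the standard $\mathbb{R}$-linear coordinate identification of $\mathbb{C}$ with $\mathbb{R}^2$; it is clearly injective and onto $\mathbb{R}^2$, with inverse $[a\ \ b]\mapsto a+bi$. For $\sigma$, injectivity holds because $c$ and $d$ can be read off the diagonal and off-diagonal entries of $\sigma(c+di)$, and surjectivity onto its image holds by definition, so $\sigma$ is a bijection onto the two-dimensional subspace of $\mathbb{R}^{2\times 2}$ consisting of matrices of the stated form.

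For the central identity I would proceed by direct computation. Writing $\bar{y}=c-di$, expand $\bar{y}x = (c-di)(a+bi) = (ca+db) + (cb-da)i$, so that $\phi(\bar{y}x)^\transpose = \begin{bmatrix} ca+db \\ cb-da \end{bmatrix}$. On the other side, $\sigma(y)\phi(x)^\transpose = \begin{bmatrix} c & d \\ -d & c \end{bmatrix}\begin{bmatrix} a \\ b \end{bmatrix} = \begin{bmatrix} ca+db \\ -da+cb \end{bmatrix}$, which agrees entrywise. This settles the claim.

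The only point requiring care --- and the closest thing to an obstacle --- is the bookkeeping of the two conjugations hidden in the statement. Conceptually, $\sigma(y)$ is precisely the real $2\times 2$ matrix representing multiplication by $\bar{y}$ (not by $y$) in the basis $\{1,i\}$: the standard representation of $c+di$ is $\begin{bmatrix} c & -d \\ d & c \end{bmatrix}$, and $\sigma(c+di)$ is its transpose, which represents multiplication by the conjugate $c-di$. Recognizing this makes the identity conceptually automatic, since $\phi$ then intertwines complex multiplication by $\bar{y}$ with left-multiplication by the representing matrix $\sigma(y)$. I would include this remark to \emph{explain} why the specific sign pattern in the definition of $\sigma$ is the correct one, rather than leaving the result as an unmotivated computation; it also makes transparent how the scalar identity will extend to the vector and matrix settings used downstream.
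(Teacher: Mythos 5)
Your proof is correct and follows essentially the same route as the paper's: a direct matrix--vector computation verifying $\sigma(y)\phi(x)^\transpose=\phi(\bar{y}x)^\transpose$ entrywise, with the bijectivity claims dispatched as immediate. The extra remark identifying $\sigma(y)$ as the standard real representation of multiplication by $\bar{y}$ is a nice motivational addition but does not change the substance of the argument.
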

\begin{proof}[Proof of \cref{lma:complexembed}]
	It is clear that $\phi, \sigma$ are bijections between their domains and images.
    $
		\sigma(y)\phi(x)^\transpose=\begin{bmatrix}
			c & d\\ -d & c
		\end{bmatrix}\begin{bmatrix}
			a\\ b
		\end{bmatrix}=\begin{bmatrix}
			ac+bd\\
			cb-ad
		\end{bmatrix}=\phi(\bar yx)^\transpose.
	$
\end{proof}
We apply $\sigma$ to each entry in $\AA$ and concatenate in the natural way. Abusing notation, we then write:
$
	\AA'=\sigma(\AA) \in\R^{2n\times 2d}.
$ 
Similarly, we write
$
	\xx'=\phi(\xx)\in\R^{2d},\ \bb'=\phi(\bb)\in\R^{2d}.
$

\begin{fact}
 For $1\leq p<q\leq\infty$ and $\xx\in\R^d$, we have
 $
 	\|\xx\|_q\leq\|\xx\|_p\leq d^{1/p-1/q}\|\xx\|_q.
 $
 \end{fact}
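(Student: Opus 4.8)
The plan is to establish the two inequalities separately, since they have genuinely different characters. Both sides are homogeneous of degree one in $\xx$, so throughout I may rescale and assume a convenient normalization. I would also handle the endpoint $q = \infty$ by hand, since the H\"older argument used for the right inequality degenerates there.

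For the left inequality $\|\xx\|_q \leq \|\xx\|_p$, I would normalize so that $\|\xx\|_p = 1$. Then $\sum_i |x_i|^p = 1$ forces $|x_i| \leq 1$ for every $i$. Since $q > p$ and $0 \leq |x_i| \leq 1$, raising to a larger exponent only shrinks each term, i.e.\ $|x_i|^q \leq |x_i|^p$. Summing over $i$ gives $\|\xx\|_q^q \leq \|\xx\|_p^p = 1$, hence $\|\xx\|_q \leq 1 = \|\xx\|_p$; undoing the normalization yields the general statement by homogeneity. For $q = \infty$ the bound is immediate, since $\|\xx\|_\infty = \max_i |x_i|$ is one of the summands contributing to $\|\xx\|_p^p$.

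For the right inequality $\|\xx\|_p \leq d^{1/p - 1/q}\|\xx\|_q$, the key tool is H\"older's inequality. I would write $\|\xx\|_p^p = \sum_i |x_i|^p \cdot 1$ and apply H\"older with the conjugate exponents $r = q/p$ and $r' = q/(q-p)$, which are valid because $q > p$. This gives $\sum_i |x_i|^p \leq (\sum_i |x_i|^q)^{p/q}\, d^{1 - p/q}$, and since $1 - p/q = (q-p)/q$, taking $p$-th roots produces the factor $d^{(q-p)/(pq)} = d^{1/p - 1/q}$, as desired. The case $q = \infty$ is again dispatched directly: $\|\xx\|_p^p = \sum_i |x_i|^p \leq d\,\max_i |x_i|^p = d\,\|\xx\|_\infty^p$, and $1/p - 1/q = 1/p$ when $q = \infty$, so the exponent matches.

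There is no serious obstacle here, as this is a classical norm-comparison fact. The only points requiring mild care are keeping track of the conjugate-exponent arithmetic so that the final exponent emerges as exactly $1/p - 1/q$, and separately treating the $q = \infty$ endpoint, where the H\"older step would otherwise involve an infinite exponent.
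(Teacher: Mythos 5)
Your proof is correct and complete: the normalization argument for $\|\xx\|_q \leq \|\xx\|_p$, the H\"older step with conjugate exponents $q/p$ and $q/(q-p)$ for the right inequality, and the separate treatment of the $q=\infty$ endpoint are all sound, and the exponent arithmetic $d^{(q-p)/(pq)} = d^{1/p-1/q}$ checks out. The paper states this as a classical fact without proof, and your argument is exactly the standard one that would be expected to fill that gap.
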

 \begin{fact}
	An $\ell_p$ Auerbach basis is well-conditioned and always exists. 
\end{fact}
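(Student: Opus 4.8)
The plan is to establish both assertions simultaneously by constructing the Auerbach basis through a maximum-volume selection, which is the classical route to Auerbach's lemma. Assume $\UU$ has full column rank and let $V=\Span(\UU)\subseteq\R^n$, a $d$-dimensional subspace equipped with the $\ell_p$ norm inherited from $\R^n$; fix an orthonormal basis $\QQ=[\qq_1,\ldots,\qq_d]$ of $V$ so that every candidate basis $(\aa_1,\ldots,\aa_d)$ of $V$ can be written $\aa_j=\QQ\cc_j$ and assigned the volume $\det[\cc_1,\ldots,\cc_d]$, a multilinear alternating function of its columns. First I would maximize $|\det[\cc_1,\ldots,\cc_d]|$ over all $d$-tuples with each $\|\aa_j\|_p\le 1$. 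Since the $\ell_p$ unit ball of $V$ is compact and the determinant is continuous, the maximum is attained at some $\AA=[\aa_1,\ldots,\aa_d]$, and the maximizer is a genuine basis (nonzero volume), giving property (i) of the Auerbach definition. Moreover $\|\aa_j\|_p=1$ for every $j$: if some column had norm strictly below $1$, scaling it up would strictly increase the volume without violating the constraint, contradicting maximality. This yields property (ii).

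The substantive step is the second inequality of property (iii), namely $\|\xx\|_\infty\le\|\AA\xx\|_p$ for all $\xx\in\R^d$ (the first inequality $d^{-1/q}\|\xx\|_q\le\|\xx\|_\infty$ is the elementary bound $\|\xx\|_q\le d^{1/q}\|\xx\|_\infty$ and holds for any basis). By homogeneity it suffices to show that $\|\AA\xx\|_p\le 1$ forces $\|\xx\|_\infty\le 1$. Set $\yy=\AA\xx=\sum_j x_j\aa_j\in V$ with $\|\yy\|_p\le 1$, and fix an index $k$. Replacing the $k$-th column of $\AA$ by $\yy$ produces a competitor whose columns still lie in the $\ell_p$ unit ball; by multilinearity and the alternating property, substituting $\yy=\sum_j x_j\aa_j$ leaves only the $j=k$ term, so the competitor's volume equals $x_k\cdot\det[\cc_1,\ldots,\cc_d]$. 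Maximality forces $|x_k|\,|\det[\cc_1,\ldots,\cc_d]|\le|\det[\cc_1,\ldots,\cc_d]|$, and since the volume is nonzero we conclude $|x_k|\le 1$. As $k$ was arbitrary, $\|\xx\|_\infty\le 1$, which is exactly the claim. This determinant-swap argument is where the real work lies; everything else is bookkeeping.

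Finally I would read off well-conditionedness. For property (i) of the well-conditioned definition, unit columns give $\sum_{i,j}|\AA_{ij}|^p=\sum_{j}\|\AA_{*j}\|_p^p=d$, so $\alpha=d^{1/p}$. For property (ii), chaining the two halves of Auerbach property (iii) gives $\|\xx\|_q\le d^{1/q}\|\xx\|_\infty\le d^{1/q}\|\AA\xx\|_p$, so $\beta=d^{1/q}$. Property (iii) (equal column spans) holds by construction. Hence $\AA$ is a $(d^{1/p},d^{1/q},p)$-well-conditioned basis and exists for any full-rank $\UU$; when $\UU$ is rank-deficient one simply runs the same argument inside $V=\Span(\UU)$ with $d$ replaced by $\dim V$. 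The only point worth flagging is that the entire construction hinges on finite-dimensional compactness of the $\ell_p$ unit ball of $V$ to guarantee the maximizer exists, which is immediate in this setting.
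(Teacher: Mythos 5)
Your proof is correct, but note that the paper itself offers no proof of this Fact: it is invoked as a classical result (Auerbach's lemma), which is standard in the $\ell_p$-regression literature the paper builds on. What you have written is precisely the textbook maximum-volume argument, and it is complete: compactness of the product of unit balls in the finite-dimensional subspace $V=\Span(\UU)$ gives existence of a volume maximizer; the rescaling argument forces unit columns; and the determinant-swap (replace column $k$ by $\yy=\AA\xx$, use multilinearity and the alternating property to get competitor volume $|x_k|\,|\det|$) yields $\|\xx\|_\infty\le\|\AA\xx\|_p$, which is the only nontrivial inequality. Chaining with the elementary bound $\|\xx\|_q\le d^{1/q}\|\xx\|_\infty$ and summing unit column norms then shows the Auerbach basis is $(d^{1/p},d^{1/q},p)$-well-conditioned, matching the constants one expects. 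Since the paper leaves this as a black box, your write-up actually supplies content the paper omits; the only points worth tightening are cosmetic: you should handle $\AA\xx=\zero$ separately in the homogeneity reduction (trivial, since the maximizer has full column rank), and observe that the argument is norm-agnostic, so it covers $p=\infty$ (where $q=1$) without modification.
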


\begin{definition}\label{lma:12norm}
	Let $\xx\in\R^{2d}$ for some $d\in\N$. Define
	$
		\qnorm{\xx}_{p, 2}=\p{\sum_{i=0}^{d-1} \p{\xx_{2i+1}^2+\xx_{2i+2}^2}^{p/2}}^{1/p}.
	$
	Note that letting $\xx\in\C^d$, we have $\|\xx\|_p=\qnorm{\phi(\xx)}_{p,2}$.
\end{definition}

By \cref{lma:12norm}, we can ``lift'' the original $\ell_p$-regression to $\R^{2d}$ and solve 
$
	 \min_{\phi(\xx)\in\R^{2d}}\qnorm{\sigma(\AA)\phi(\xx)-\phi(\bb)}_{p,2}
$ instead of \cref{eq:l1obj}. 
This equivalence allows us to consider sketching techniques on real matrices with proper modification. See \cref{alg:fastreall1} for details. In turn, such an embedding gives an arbitrarily good approximation with high probability, as shown in \cref{THM:COMPLEXSKETCH}.

\begin{algorithm}[ht]
	\caption{Fast Algorithm for Complex $\ell_p$-regression \label{alg:fastreall1}}
	\begin{algorithmic}[1]
		\STATE \textbf{Input:} $\AA'\in\R^{2n\times 2d}, \bb'\in\R^{2d}$, precision $\epsilon>0$, $p\in[1,\infty]$
		\STATE Set $t\geq C\frac{\log(1/\epsilon)\poly(d)}{\epsilon^2}$, $\gamma=d^{-1/q-1}$, $\sigma= \frac{ \sqrt\pi}{2^{p/2}\Gamma((p+1)/2)}$, $\cP=\{P=(2i+1, 2i+2)\in\R^2, \forall i\in\{0,\ldots, d-1\}\}$
		\IF {$p\neq\infty$}
		\STATE Compute the $\ell_p$ leverage score $\{\ell([\AA'_1\ \ \bb'_1]),\ldots, \ell([\AA'_n\ \ \bb'_n])\}$ for each row of $[\AA'\ \ \bb']$
		\STATE Let $\cP_h$ be the collection of $P=(a,b)\in\cP$ such that $\ell([\AA'_a\ \ \bb'_a])\geq \gamma$ or $\ell([\AA'_b\ \ \bb'_b])\geq\gamma$. Denote the collection of all other pairs $\cP_l$
		\STATE Rearrange $[\AA'\ \ \bb']$ such that the rows in $\cP_h$ are on top
		\FOR {Each $P_i\in\cP_h$}
			\STATE Sample a Gaussian random matrix $\GG_{P_i}\in\R^{t\times 2}$, where each entry follows $\cN(0, \sigma^2)$
		\ENDFOR
		\FOR {Each $Q_i\in\cP_l$}
		\STATE Sample Gaussian random matrix $\GG_{Q_i}\in\R^{1\times 2}$, where each entry follows $\cN(0, \sigma^2)$
		\ENDFOR
		\STATE Define a block diagonal matrix $\GG=\diag(\GG_{P_1},\ldots,\GG_{P_{|\cP_h|}}, \GG_{Q_1},\ldots, \GG_{Q_{|\cP_l|}})$

		\STATE \textbf{Output:} $\|\GG\AA'\yy-\GG\bb'\|_p$
		\ELSE
			\FOR{Each $P_i\in\cP$}
			\vspace{1mm}
				\STATE Sample Gaussian matrix $\GG_{P_i}\in\R^{s\times 2}$ with entries from $\cN(0, \frac{\pi}{2})$, where $s=\cO\p{\frac{\log(1/\epsilon)}{\epsilon^2}}$
				\STATE Let $\RR_{P_i}\in\R^{2^{s}\times s}$ where each row of $\RR_{P_i}$ is a vector in $\{-1, +1\}^{s}$
			\ENDFOR
			\STATE Let $\RR=\diag(\RR_1,\ldots, \RR_{|\cP|})$, $\GG=\diag(\GG_1,\ldots, \GG_{|\cP|})$
			\STATE \textbf{Output:} $\|\RR\GG\AA'\yy-\RR\GG\bb'\|_\infty$
		\ENDIF
	\end{algorithmic}
\end{algorithm}

\begin{theorem}\label{THM:COMPLEXSKETCH}
	Let $\AA\in\C^{n\times d}, \bb\in\C^n$. Then \cref{alg:fastreall1} with input $\AA':=\sigma(\AA)$ and $\bb':=\phi(\bb)$ returns a regression instance whose optimizer is an $\epsilon$ approximation to \cref{eq:l1obj}, with probability at least $0.98$. The total time complexity for $p\in[1,\infty)$ is $\cO(\nnz(\AA)+\poly(d/\epsilon))$; for $p=\infty$ it is $\cO(2^{\tilde\cO(1/\epsilon^2)}\nnz(\AA))$. The returned instance can then be optimized by any $\ell_p$-regression solver.
\end{theorem}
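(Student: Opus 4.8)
The plan is to treat \cref{alg:fastreall1} as a two-stage reduction — first from the complex problem to a real mixed-norm problem, then from the mixed norm to a plain real $\ell_p$-norm — and to argue that the composed random map is a $(1\pm\epsilon)$ subspace embedding, after which the sketch-and-solve conclusion is automatic. For the first stage I would invoke \cref{lma:complexembed,lma:12norm} directly: together they give $\|\AA\xx-\bb\|_p=\qnorm{\sigma(\AA)\phi(\xx)-\phi(\bb)}_{p,2}=\qnorm{\AA'\yy-\bb'}_{p,2}$ under the bijection $\yy=\phi(\xx)$, so solving \cref{eq:l1obj} is exactly equivalent to minimizing the real mixed $(p,2)$-norm over the subspace $V=\text{span}([\AA'\ \bb'])$. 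It therefore suffices to show that the block-diagonal Gaussian operator $\GG$ produced by the algorithm satisfies $\|\GG\vv\|_p=(1\pm\epsilon)\qnorm{\vv}_{p,2}$ simultaneously for all $\vv\in V$, up to the per-block normalization built into $\GG$.

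For $p<\infty$ the heart of the argument is that the variance is calibrated via the Gaussian absolute-moment identity (the $\Gamma((p+1)/2)$ factor) so that a single scaled Gaussian row $\bgg$ obeys $\E|\bgg^\transpose\ww|^p=\|\ww\|_2^p$ for every $2$-vector $\ww$; with the appropriate per-block normalization this yields $\E\|\GG\vv\|_p^p=\qnorm{\vv}_{p,2}^p$. The remaining — and main — difficulty is to upgrade this pointwise-in-expectation statement to a uniform high-probability one over $V$ while keeping the row count at $\poly(d/\epsilon)$ rather than $\Theta(n)$. This is what the leverage-score threshold $\gamma=d^{-1/q-1}$ buys: on the heavy pairs $\cP_h$ I would use all $t=\cO(\epsilon^{-2}\log(1/\epsilon)\poly(d))$ Gaussian rows and invoke a two-dimensional Dvoretzky-type embedding to force $\|\GG_P\ww\|_p=(1\pm\epsilon)\|\ww\|_2$ for all $\ww\in\R^2$ at once, whereas on the light pairs $\cP_l$ a single Gaussian row is enough because an Auerbach/well-conditioned basis certifies that any pair with leverage below $\gamma$ contributes at most an $\epsilon$-fraction of $\qnorm{\AA'\yy}_{p,2}^p$ for every $\yy$, so the large variance of a one-sample estimate there cannot perturb the total. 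I would then combine a Bernstein-type concentration over the light pairs (whose per-block contributions are uniformly small) with the exact heavy-pair guarantee, and finish with a net argument over the unit sphere of $V$ in the $\qnorm{\cdot}_{p,2}$ norm, whose covering number is $\exp(\cO(d))$, to obtain the uniform $(1\pm\epsilon)$ bound.

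The case $p=\infty$ is handled by a different final embedding but the same philosophy. Here $\qnorm{\vv}_{\infty,2}=\max_P\|\vv_P\|_2$, and with variance $\pi/2$ one has $\E|\bgg^\transpose\ww|=\|\ww\|_2$, so $\|\GG_P\ww\|_1$ concentrates around $s\|\ww\|_2$; an $\ell_2^2\to\ell_1^s$ Dvoretzky embedding with $s=\cO(\epsilon^{-2}\log(1/\epsilon))$ makes $\|\GG_P\ww\|_1=(1\pm\epsilon)s\|\ww\|_2$ uniform in $\ww$. The sign matrix $\RR_P$, whose rows enumerate $\{-1,+1\}^s$, implements the isometry $\|\RR_P\zz\|_\infty=\|\zz\|_1$, so $\|\RR_P\GG_P\ww\|_\infty=(1\pm\epsilon)s\|\ww\|_2$ and taking the coordinate-wise max over blocks preserves $\qnorm{\cdot}_{\infty,2}$ up to $(1\pm\epsilon)$ after dividing by $s$. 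The $2^s$ rows contributed per block are the source of the $2^{\tilde\cO(1/\epsilon^2)}$ factor.

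With the uniform embedding in hand, the approximation claim is the standard sketch-and-solve deduction: the minimizer $\hat\yy$ of $\|\GG(\AA'\yy-\bb')\|_p$ has true objective within $(1+\epsilon)$ of the optimum, and undoing $\yy=\phi(\xx)$ returns the complex minimizer. The probability $0.98$ follows by a union bound over the $\poly(d)$ heavy-pair Dvoretzky failures, the light-pair concentration, and the net, choosing the constant $C$ in $t$ (resp. $s$) large enough. For the running time I would note that $\sigma(\cdot)$ preserves sparsity, so $\AA'$ has $\cO(\nnz(\AA))$ nonzeros; the leverage scores cost $\cO((\nnz(\AA)+d^2)\log n)$ as in the remark following \cref{THM:LOWNERRESULT}; the total $\ell_p$-leverage of an Auerbach basis is $\cO(d)$, so $|\cP_h|=\poly(d)$ and the heavy part expands to only $\poly(d/\epsilon)$ rows while the block structure of $\GG$ keeps $\GG\AA'$ sparse; feeding the resulting real instance to any input-sparsity-time $\ell_p$ solver yields $\cO(\nnz(\AA)+\poly(d/\epsilon))$ for finite $p$, and the $2^s$ blow-up yields $\cO(2^{\tilde\cO(1/\epsilon^2)}\nnz(\AA))$ for $p=\infty$. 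The main obstacle throughout is the uniform subspace-embedding step for finite $p$: certifying that one-sample light pairs are harmless while heavy pairs receive the full Dvoretzky treatment, and that the two regimes glue together under a single net over $V$.
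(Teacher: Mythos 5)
Your proposal is correct and follows essentially the same route as the paper's own proof: the $\phi/\sigma$ reduction to the mixed $(p,2)$-norm, the leverage-score split into heavy pairs (handled by a two-dimensional Dvoretzky embedding) and light pairs (a single moment-calibrated Gaussian row plus Bernstein concentration and a net over each two-dimensional block), and for $p=\infty$ the Dvoretzky embedding into $\ell_1$ composed with the sign-enumeration isometry into $\ell_\infty$, followed by the standard sketch-and-solve deduction and the same sparsity-based running-time accounting. The only deviations are cosmetic: you make the $1/s$ normalization in the $p=\infty$ case explicit (the paper absorbs it into the Dvoretzky statement), and your claim that each light pair contributes "at most an $\epsilon$-fraction" is stated slightly differently from the paper's bound $\|\yy_P\|_2^p \le \cO(d^{-1})\|\yy\|_p^p$, but the variance bound it feeds into Bernstein is the same.
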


\begin{proof}
	For simplicity, we let $\AA\in\R^{2n\times 2d}$ to avoid repeatedly writing the prime symbol in $\AA'$.

Let $\yy\in\R^{2n}$ be arbitrary. Let $P, \cP, \cP_l, \cP_h$ be as defined in \cref{alg:fastreall1}.
We say a pair $P=(a,b)$ is heavy if $P\in\cP_h$, and light otherwise.

As an overview, when $p\in[1,\infty)$, for the heavy pairs, we use a large Gaussian matrix and apply Dvoretsky's theorem to show the $\qnorm{\cdot}_{p,2}$ norm is preserved. For the light pairs, we use a single Gaussian vector and use Bernstein's concentration. This is intuitive since the heavy pairs represent the important directions in $\AA$, and hence we need more Gaussian vectors to preserve their norms more accurately; but the light pairs are less important and so the variance of the light pairs can be averaged across multiple coordinates. Hence, using one Gaussian vector suffices for each light pair. For $p=\infty$, we need to preserve the $\ell_2$ norm of every pair, and so in this case we apply Dvoretsky's theorem  to sketch every single pair in $\cP$.

We split the analysis into two cases: $p\in[1,\infty)$ and $p=\infty$. In the main text we only present $p\in[1,\infty)$ and defer the other case to the appendix.

\paragraph{Case 1: $p\in[1,\infty)$.}
	
		\textsl{For light rows}:

Let $\UU$ be an Auerbach basis of $\AA$ and $\yy=\UU\xx$, where $\xx\in\R^d$ is arbitrary. Then for any row index $i\in[n]$:
\begin{align*}
	|\yy_i|=|\UU_i\xx|\leq \|\UU_i\|_p\|\xx\|_q \leq  d^{1/q}\|\UU_i\|_p\|\yy\|_p 
	\Longrightarrow  \frac{d^{-1/q}|\yy_i|}{\|\yy\|_p}\leq \|\UU_i\|_p
\end{align*}
where the first step is because the Auerbach basis satisfies $\|\UU\xx\|_p=\|\yy\|_p\geq d^{-1/q} \|\xx\|_q$. This implies that if $|\yy_i|\geq\gamma d^{1/q} \|\yy\|_p$, then $ \|\UU_i\|_p\geq\gamma$.
Hence, by definition of $\gamma$, if $\|\UU_i\|_p\leq d^{-1/q-1}$, then $|\yy_i|\leq d^{-1}\|\yy\|_p$. 

For any light pair $P=(a,b)$, we sample two i.i.d. Gaussians $\bgg_P=(\bgg_a, \bgg_b)$ from $\cN(0, \sigma^2)$ where $\sigma= \frac{ \sqrt\pi}{2^{p/2}\Gamma((p+1)/2)}$. Since 
$\bgg_a\yy_a+\bgg_b\yy_b\sim\cN(0,\|\yy_P\|_2^2\sigma^2)$, we have $\E[|\bgg_a\yy_a+\bgg_b\yy_b|^p] =  \|\yy_P\|_2^p$.

Let $\cP_l$ be the set of all light pairs. We have,
$
	\E[\sum_{P\in\cP_l}|\ip{\bgg_P}{\yy_P}|^p]=\sum_{P\in\cP_l}\|\yy_P\|_2^P.
$

Let random variable $Z_P = |\ip{\bgg_P}{\yy_P}|$. This is $\sigma^2\|\yy_P\|_2^2$-sub-Gaussian (the parameter here can be improved by subtracting $\mu_{Z_P}^2$). 

Define event $\cA$ to be :
$\max_{P\in\cP_l} Z_P^p \geq \cO((\log(|\cP_l|)+\poly(d))^p)\triangleq t$.
Since the $Z_P$ variables are sub-Gaussian and $p\geq 1$, we have that $(\cdot)^p$ is monotonically increasing, so we can use the standard sub-Gaussian bound to get
\begin{align}\label{eq:maxgauss}
	P(\cA)< \exp\p{-\poly(d)}.
\end{align}
Note that $p$ is a constant, justifying the above derivation.

Also note that $\var(Z_P)=\cO(\|\yy_P\|^{2p}_2)$. Condition on $\neg \cA$. By Bernstein's inequality, we have:
\begin{align}\label{eq:lightrowconcentration}
	\begin{split}
	&P\p{\abs{\sum_{P\in\cP_l}Z_P^p - \E[|\sum_{P\in\cP_l}Z_P^p| ]}>\epsilon \E[|\sum_{P\in\cP}Z_P^p|] }\\
	=&P\p{\abs{\sum_{P\in\cP_l}Z_P^p - \sum_{P\in\cP_l}\|\yy_P\|_2^p}>\epsilon \sum_{P\in\cP}\|\yy_P\|_2^p }\\
	\leq & \exp\p{-\frac{\epsilon^2(\sum_{P\in\cP}\|\yy_P\|_2^p)^2}{\sum_{P\in\cP_l} \|\yy_P\|^{2p}_2 +\epsilon t\sum_{P\in\cP_l}\|\yy_P\|_2^p}}\\
	\leq & \exp\p{-\cO\p{\frac{\epsilon^2(\sum_{P\in\cP}\|\yy_P\|_2^p)^2}{\sum_{P\in\cP_l} \|\yy_P\|^{2p}_2 }}}\\
	\leq &\exp\p{-\cO(\epsilon^2/\gamma)}=\exp\p{\cO(-\epsilon^2d\log d)}
	\end{split}
\end{align}
where the last step follows from the definition of light pairs. Using that if for all $P=(a,b)\in\cP_l$, $|\yy_a|,|\yy_b|<\cO(d^{-1}) \|\yy\|_p $, then $\|\yy_P\|_2^p<\cO(d^{-1})\|\yy\|_p^p$, we have
\[
	\sum_{P\in\cP_l}\|\yy_P\|_2^{2p}\leq \frac{1}{\cO(d^{-1})}\cO(d^{-2})\|\yy\|_p^{2p}=\cO(d^{-1})\|\yy\|_p^{2p}.
\]
Since $\sum_{P\in\cP}\|\yy_P\|_2^p\leq \cO(\|\yy\|_p^p)$, we will have $\frac{(\sum_{P\in\cP}\|\yy_P\|_2^p)^2}{\sum_{P\in\cP_l} \|\yy_P\|^{2p}_2 }=\cO(d)$.

\emph{Net argument.} In the above derivation, we fix a vector $\yy\in\R^{2n}$. Hence for the above argument to hold for all pairs, a na\"ive argument will not work since there are an infinite number of pairs. However, note that each pair lives in a two-dimensional subspace. Hence, we can take a finer union bound over $\cO((1+\epsilon)^2/\epsilon^2)$ items in a two-dimensional $\ell_2$ space using a net argument. This argument is standard, see, e.g., \cite[Chapter 2]{woodruff2014sketching}.

Using the net argument, \cref{eq:lightrowconcentration,eq:maxgauss} holds with probability at least $1-\cO(e^{-d})$ for all $\yy\in\R^{2n}$ simultaneously. In particular, with probability at least $0.99$:
\begin{align}\label{eq:lightrowfinal}
	\sum_{P\in\cP_l}|\ip{\bgg_P}{\yy_P}|^p-\sum_{P\in\cP_l} \|\yy_P\|_2^p \in (\pm\epsilon)\sum_{P\in\cP} \|\yy_P\|_2^p.
\end{align}

		\textsl{For heavy rows:}
		 
			Since the $\ell_p$ leverage scores sum to $d$, there can be at most $d/\gamma=\poly(d)$ heavy rows. For each pair $P\in\cP_h$, we construct a Gaussian matrix $\GG_P\in\R^{s\times 2}$, where $s=\poly(d/\epsilon)$. Applying Dvoretsky's theorem for $\ell_p$ (\cite{paouris2017random} Theorem 1.2), with probability at least $0.99$, for all $2$-dimensional vectors $\yy_P$, $\|\GG_P\yy_P\|_p=(1\pm\epsilon)\|\yy_P\|_2$. Hence,  
			\begin{align}\label{eq:heavyrowfinal}
				\sum_{P\in\cP_h}\|\GG_P\yy_P\|_p^p=(1\pm\Theta(\epsilon))\sum_{P\in\cP_h}\|\yy_P\|_2^p.
			\end{align}
		Combining \cref{eq:lightrowfinal} and \cref{eq:heavyrowfinal}, we have with probability at least $0.98$, for all $\yy\in\R^{2n}$:
		\[
			\|\GG\yy\|_p^p=(1\pm\Theta(\epsilon)) \sum_{P\in\cP}\|\yy_P\|_2^p = (1\pm\Theta(\epsilon))\qnorm{\yy}_{p,2}^p
		\]
		Letting $\yy=\UU\xx-\bb$ and taking the $1/p$-th root, we obtain the final claim.

\paragraph{Case 2: $p=\infty$.}

	In this case, we first construct a sketch $\GG$ to embed every pair into $\ell_1$. That is, for all $P\in\cP$, construct a Gaussian matrix $\GG_p\in\R^{\cO\p{\frac{\log(1/\epsilon)}{\epsilon^2}}\times 2}$. By Dvoretsky's theorem, with probability at least $0.99$, for all $\yy_P\in\R^2$, we have $\|\GG_P\yy_P\|_1 = (1\pm\epsilon)\|\yy_P\|_{2}$. Hence
	\begin{align}\label{eq:linftytol1}
	\begin{split}
		\text{For all $\yy\in\R^{2n}$ and any $P\in\cP$: }
		 \max_{P\in\cP}\|\GG_P\yy_P\|_1 = \max_{P\in\cP} (1\pm\epsilon)\|\yy_P\|_{2}=\qnorm{\yy}_{\infty, 2}
	\end{split}
	\end{align}
	However, we do not want to optimize the left hand side directly.	
	
	Recall that by construction $\GG=\diag(\GG_1,\ldots,\GG_{|\cP|})$ is a block diagonal matrix. 
	
	Construct $\RR$ as in \cref{alg:fastreall1}. By \cite{indyk2001algorithmic}, for all $P\in\cP$, $\RR_P$ is an isometric embedding $\ell_1\hookrightarrow\ell_\infty$, i.e., for all $\yy_P\in\R^{2}$: 
	$
		\|\RR_P\GG_P\yy_P\|_\infty = \|\GG_P\yy_P\|_1.
	$
	
	Combining this with \cref{eq:linftytol1}, we get that with probability at least $0.99$ for all $\yy\in\R^{2n}$:
	\[
		\|\RR\GG\yy\|_\infty = (1\pm\Theta(\epsilon))\qnorm{\yy}_{\infty,2}.
	\]
	Letting $\yy=\UU\xx-\bb$, we obtain the final claim.

\paragraph{Running time.} 
For $p\in[1, \infty)$, calculating a well-conditioned basis takes $\cO(\nnz(\AA)+\poly(d/\epsilon))$ time. Since $\GG$ is a block diagonal matrix and $\AA$ is sparse, computing $\GG\AA$ takes $\cO(\nnz(\AA)+\poly(d/\epsilon))$ time. Calculating $\GG\bb$ takes $n+\poly(d/\epsilon)$ time. Minimizing $\|\GG\AA\xx-\GG\bb\|$ up to a $(1 + \epsilon)$ factor takes $\cO(\nnz(\AA)+\poly(d/\epsilon))$ time. Using the fact that $n<\nnz(\AA)$, the total running time is $\cO(\nnz(\AA)+\poly(d/\epsilon))$.

For the case of $p=\infty$, note that $\RR$ is also a block diagonal matrix, so $\RR\GG$ can be computed by multiplying the corresponding blocks, which amounts to $\cO(2^{\tilde\cO(1/\epsilon^2)}n\frac{\log(1/\epsilon)}{\epsilon^2})$ time. $\AA$ is sparse and $\RR\GG$ is a block matrix so computing $\RR\GG\AA$ takes another $\cO(2^{\tilde\cO(1/\epsilon^2)}\nnz(\AA))$ time. Computing $\RR\GG\bb$ takes $\cO(2^{\tilde\cO(1/\epsilon^2)}n\frac{\log(1/\epsilon)}{\epsilon^2})$ time. Since $n<\nnz(\AA)$, in total these take $\cO(2^{\tilde\cO(1/\epsilon^2)}\nnz(\AA))$ time. This concludes the proof.
\end{proof}

\begin{remark}
	\cref{lma:12norm} shows for sketching complex vectors in the $\ell_p$ norm, all one needs is an embedding $\ell_2\hookrightarrow\ell_p$. In particular, for complex $\ell_2$-regression, the identity map is such an embedding with no distortion. Hence, complex $\ell_2$-regression can be sketched exactly as for real-valued $\ell_2$-regression, while for other complex $\ell_p$-regression the transformation is non-trivial.
\end{remark}

\subsection{Numerical Evaluation}
We evaluate the performance of our proposed embedding for $\ell_1$ and $\ell_\infty$ regression on synthetic data. With $\AA\in\C^{100\times 50}, \bb\in\C^{100}$, we solve $\min_{\xx\in\C^{50}}\|\AA\xx-\bb\|_1$ or $\min_{\xx\in\C^{50}}\|\AA\xx-\bb\|_\infty$. Each entry of $\AA$ and $\bb$ is sampled from a standard normal distribution (the real and imaginary coefficients are sampled according to this distribution independently). Instead of picking the heavy ($\mathcal{P}_h$) and light ($\mathcal{P}_l$) pairs , we construct a $t\times 2$ (or $s\times 2$ if $p=\infty$) Gaussian matrix for each pair (that is, we treat all pairs as heavy), as it turns out in the experiments that very small $t$ or $s$ is sufficient. For $\ell_1$ complex regression, we test with $t=2,4,6,8,10,20$, and the result is shown in \cref{fig:l1reg}. For $\ell_\infty$ complex regression, we test with $s=2,3,4,5,6$ as shown in \cref{fig:linfreg}. In both figures, the $x$-axis represents our choice of $t$ or $s$, and the $y$-axis is the approximation error $\|\hat \xx-\xx^*\|_2$, where $\hat\xx$ is the minimizer of our sketched regression problem.

\begin{figure*}
	\centering
	\subfigure[$\ell_1$ regression]
	{\includegraphics[scale=0.4]{./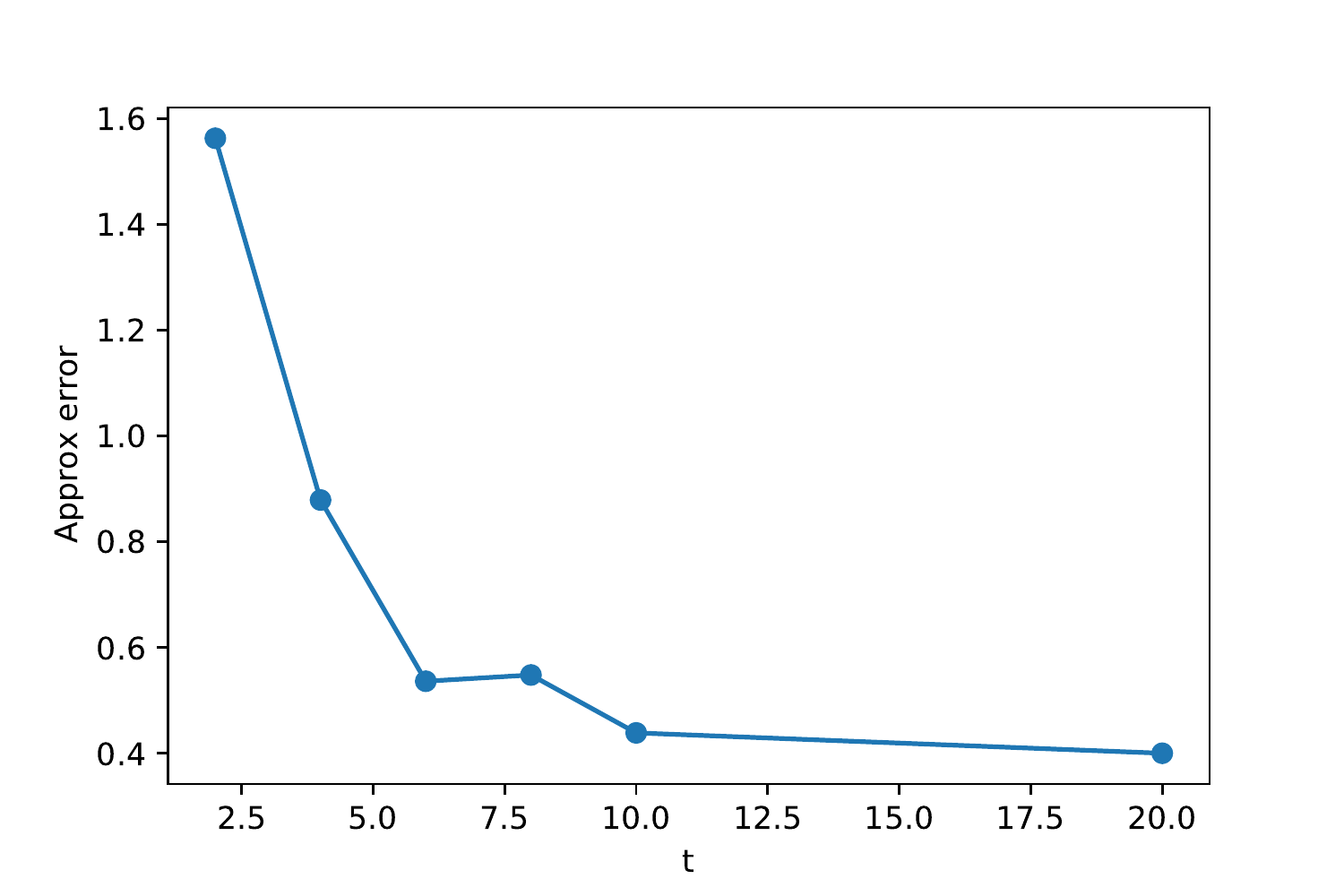} \label{fig:l1reg}}
	\subfigure[$\ell_\infty$ regression]
	{\includegraphics[scale=0.4]{./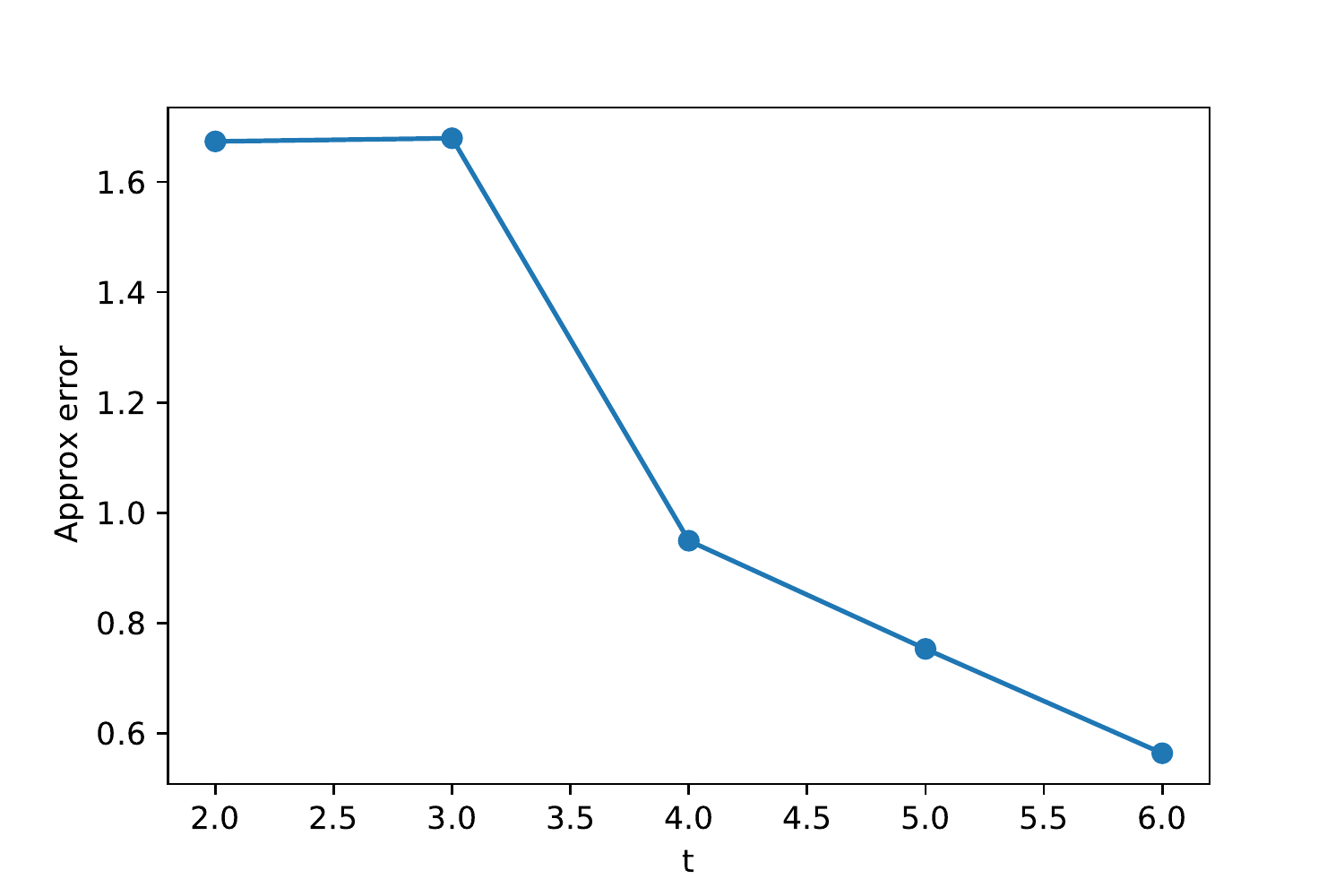} \label{fig:linfreg}} 
	\caption{Approximation error of sketched $\ell_p$ regression with complex entries.}
\end{figure*}

\section{Sketching Vector-Matrix-Vector Queries}\label{SEC:TENSORSKETCH}
The sketches in \cref{SEC:OPTIMIZATION} can also be used for vector-matrix-vector queries, but they are sub-optimal when there are a lot of cancellations. For example, if we have $\AA^\top\DD\AA = \sum_{i=1}^n d_i\aa_i\aa_i^\top$ where $\aa_1=\aa_2=\cdots=\aa_n$, $d_1=d_2=\cdots d_{n/2}$, and $d_{n/2+1}=d_{n/2+2}=\cdots=d_n$, then $\AA^\top\DD\AA=0$, yet our sampling techniques need their number of rows to scale with $\|\AA^\top|\DD|\AA\|_F$, which can be arbitrarily large. In this section, we give a sketching technique for vector-matrix-vector product queries that scales with $\|\AA^\top\DD\AA\|_F$ instead of $\|\AA^\top|\DD|\AA\|_F$. Therefore, for vector-matrix-vector product queries, this new technique works well, even if the matrices are complex. Such queries are widely used, including standard graph queries and independent set queries \citep{rashtchian2020vector}. 

 In particular, we consider a vector-matrix-vector product query $\uu^\top\MM\vv$, where $\MM=\AA^\top\BB=\sum_{i=1}^n\aa_i\bb_i^\top$ has a tensor product form, $\aa_i,\bb_i\in\C^d$, for all $i\in[n]$. One has to either sketch $\MM$ or compute $\MM$ first. Then the queries $\uu$ and $\vv$ arrive \citep{andoni2016sketching}.  In reality, this may be due to the fact that $n\gg d$ and one cannot afford to store $\AA$ and $\BB$. Our approach is interesting when $\MM$ is non-PSD and $\AA, \BB$ might be complex. This can indeed happen, for example, in a graph Laplacian with negative weights \citep{chen2016definiteness}.

\begin{algorithm}[ht]
	\caption{Tensor Sketch For Vector-Matrix-Vector Products\label{alg:tensorsketch}}
	\begin{algorithmic}[1]
		\vspace{1mm}
		\STATE \textbf{Input:} $\{\aa_i\}_{i=1}^n, \{\bb_i\}_{i=1}^n\subseteq\C^{ d}, \uu, \vv\in\C^{d}$
		\vspace{1mm}
		\STATE Let $\bSS:\C^{d}\otimes \C^{d}\to \C^{k}$ be a TensorSketch \citep{pham2013fast} with $k$ hash buckets.
		\vspace{1mm}
		\STATE Compute $ \qq = \sum_{i=1}^n\bSS(\aa_i \otimes \bb_i)\in\C^k$.
		\vspace{1mm}
		\STATE Compute $ \pp =\bSS(\uu \otimes \vv)\in\C^k$.
		\vspace{1mm}
		\STATE \textbf{Output:} $\ip{\pp}{\qq}$
	\end{algorithmic}
\end{algorithm}

\begin{theorem}
    With probability at least $0.99$, for given input vectors $\uu, \vv\in\C^d$, $\AA,\BB\in\C^{n\times d}$, \Cref{alg:tensorsketch} returns an answer $z$ such that $\abs{\zz-\uu^\top\AA^\top\BB\vv}\leq \epsilon$ in time $\tilde\cO\p{\nnz(\AA)+\frac{n\|\vv\|_2^2\|\uu\|_2^2\|\AA^\top\BB\|_F^2}{\epsilon^2}}$.
\end{theorem}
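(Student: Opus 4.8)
The plan is to recognize the query $\uu^\top\AA^\top\BB\vv$ as a single inner product between two tensors and then invoke the second-moment (approximate-isometry) guarantee of TensorSketch. First I would write $\AA^\top\BB=\sum_{i=1}^n\aa_i\bb_i^\top$, so that $\uu^\top\AA^\top\BB\vv=\sum_{i=1}^n(\uu^\top\aa_i)(\bb_i^\top\vv)=\ip{\uu\otimes\vv}{\sum_{i=1}^n\aa_i\otimes\bb_i}$, where $\ip{\cdot}{\cdot}$ denotes the (non-conjugate) bilinear form $\pp^\top\qq$ on $\C^{d^2}$. Setting $x:=\uu\otimes\vv$ and $y:=\sum_i\aa_i\otimes\bb_i$, linearity of $\bSS$ gives $\qq=\bSS y$ and $\pp=\bSS x$, so the algorithm outputs exactly $Z:=\ip{\bSS x}{\bSS y}$.

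Correctness then reduces to the two standard moment bounds for a degree-$2$ TensorSketch with $k$ buckets: unbiasedness $\E[Z]=\ip{x}{y}$, and a variance bound $\mathrm{Var}[Z]\le \frac{C}{k}\|x\|_2^2\|y\|_2^2$ for an absolute constant $C$. Unbiasedness immediately yields $\E[Z]=\uu^\top\AA^\top\BB\vv$. For the norms, I would use the vectorization identity: stacking the columns of $\sum_i\aa_i\bb_i^\top$ is precisely $\sum_i\aa_i\otimes\bb_i$, hence $\|y\|_2=\|\AA^\top\BB\|_F$, while $\|x\|_2=\|\uu\otimes\vv\|_2=\|\uu\|_2\|\vv\|_2$. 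Choosing $k=\cO(\|\uu\|_2^2\|\vv\|_2^2\|\AA^\top\BB\|_F^2/\epsilon^2)$ with a large enough constant makes $\mathrm{Var}[Z]\le\epsilon^2/100$, and Chebyshev's inequality applied to $|Z-\E[Z]|$ gives $|Z-\uu^\top\AA^\top\BB\vv|\le\epsilon$ with probability at least $0.99$.

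For the running time, each summand $\bSS(\aa_i\otimes\bb_i)$ is the convolution of the CountSketches of $\aa_i$ and $\bb_i$, computable in $\cO(\nnz(\aa_i)+\nnz(\bb_i)+k\log k)$ time via the FFT; summing over $i$ costs $\cO(\nnz(\AA)+\nnz(\BB)+nk\log k)$. Crucially, the per-term FFT products cannot be merged across $i$ before transforming back, which is exactly where the factor $n$ multiplying the $k$-term originates. Adding the $\cO(k\log k)$ cost of computing $\pp$ and the $\cO(k)$ final inner product, and substituting the value of $k$, yields the claimed $\tilde\cO(\nnz(\AA)+n\|\uu\|_2^2\|\vv\|_2^2\|\AA^\top\BB\|_F^2/\epsilon^2)$ bound.

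The main obstacle is that the moment guarantees for TensorSketch are classically stated over $\R$, whereas here $x$ and $y$ are complex. I would address this by re-deriving the two moments directly: since the hash and sign randomness is real-valued (uniform bucket assignments and $\pm1$ signs), the estimator $Z$ is a bilinear function of the complex coordinates, so computing $\E[Z]$ and $\E[|Z-\E[Z]|^2]$ reduces to the same collision/independence bookkeeping as in the real case, now applied to complex products; the only extra care needed is to control the modulus $|Z-\E[Z]|^2$ rather than a real square before invoking Chebyshev. A secondary point to verify is that the bilinear convention for $\ip{\cdot}{\cdot}$ is the one that makes $Z$ unbiased for $\uu^\top\AA^\top\BB\vv$, as opposed to the Hermitian inner product, which would introduce an unwanted conjugation.
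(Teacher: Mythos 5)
Your proposal is correct and follows essentially the same route as the paper's proof: unbiasedness and the $\cO\p{\frac{1}{k}\|\uu\|_2^2\|\vv\|_2^2\|\AA^\top\BB\|_F^2}$ variance bound for TensorSketch, Chebyshev's inequality with $k=\cO\p{\|\uu\|_2^2\|\vv\|_2^2\|\AA^\top\BB\|_F^2/\epsilon^2}$, and the FFT-based accounting giving $\nnz(\AA)+\nnz(\BB)+nk\log k$ time. In fact you go a step further than the paper, which simply cites \citet{pham2013fast} for both moment bounds: your explicit treatment of the complex-field issue (real hash/sign randomness, bounding $\E\abs{Z-\E Z}^2$ via the same collision bookkeeping, and checking that the bilinear rather than Hermitian pairing is the unbiased one) fills a gap the paper leaves implicit.
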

\begin{proof}
    It is known that TensorSketches are unbiased, that is,
    \[
        \E[\ip{\bSS(\AA^\top\BB)}{\bSS(\uu\otimes \vv)}]=\uu^\top\AA^\top\BB\vv,
    \]
    where $\bSS(\AA^\top\BB)=\sum_{i=1}^n\bSS(\aa_i, \bb_i)$ follows from the linearity of TensorSketch \cite{pham2013fast}. The variance is bounded by
    \[
        \operatorname{Var}\p{\ip{\bSS(\AA^\top\BB)}{\bSS(\uu\otimes \vv)}}\leq\cO\p{\frac{1}{k}\|\vv\|_2^2\|\uu\|_2^2\|\AA^\top\BB\|_F^2},
    \]
    see, e.g., \citet{pham2013fast}.
    By Chebyshev's inequality, setting $k=\cO\p{\frac{\|\vv\|_2^2\|\uu\|_2^2\|\AA^\top\BB\|_F^2}{\epsilon^2}}$ produces an estimate of $\uu^\top\AA^\top\BB\vv$ with an additive error $\epsilon$.
    
    Note that computing the sketches $\bSS(\AA^\top\BB)$ takes time $\nnz(\AA)+\nnz(\BB)+nk\log k$, and computing the inner product $\ip{\bSS(\AA^\top\BB)}{\bSS(\uu\otimes \vv)}$ takes only $k$ time. As a comparison, computing $\uu^\top\AA^\top \BB\vv$ na\"ively takes $nd^2+O(d^2)$ time, which can be arbitrarily worse than our sketched version. Note that it is prohibitive in our setting to compute $\uu^\top\AA^\top$ and $\BB\vv$ separately.
\end{proof}

\section{Conclusion}
Our work highlights the many places where non-PSD matrices and their ``square roots", which are complex matrices, arise in optimization and randomized numerical linear algebra. We give novel dimensionality reduction methods for such matrices in optimization, the sketch-and-solve paradigm, and for vector-matrix-vector queries. These methods can be used for approximating indefinite Hessian matrices, which constitute a major bottleneck for second-order optimization. We also propose a hybrid sampling method for matrices that satisfy a relaxed RIP condition. We verify these numerically using Newton-CG, trust region, and Newton-MR algorithms. We also show how to reduce complex $\ell_p$-regression to real $\ell_p$-regression in a black box way using random linear embeddings, showing that the many sketching techniques developed for real matrices can be applied to complex matrices as well. In addition, we also present how to efficiently sketch complex matrices for vector-matrix-vector queries.

{\bf Acknowledgments:} The authors would like to thank partial support from NSF grant No. CCF-181584, Office of Naval Research (ONR) grant N00014-18-1-256, and a Simons Investigator Award.

\bibliography{references}

\newpage

\begin{appendices}
\onecolumn
\section{Algorithms}
\label{SEC:APPENDIX:ALG}

\begin{algorithm}[H]
	\caption{Newton-CG With Inexact Hessian \label{alg:newton_cg}}
	\begin{algorithmic}
		\vspace{1mm}
		\STATE \textbf{Input:} Starting point $\xxo$, line-search parameter $ 0 < \rho < 1 $
		\vspace{1mm}
		\FOR {$ k = 0,1,2, \ldots $ until convergence} 
		\vspace{1mm}
		\STATE (approximately) Solve the following sub-problem using CG
		\begin{align*}
            \HHk \pp = -\bggk
        \end{align*}
		\vspace{1mm}
		\STATE Find $ \alphak $ such that 
		\begin{align*}
    		F(\xxkk) \leq F(\xxk) + \rho \alpha_k \dotprod{\ppk,  \bggk}
		\end{align*}
		\vspace{1mm}
		\STATE Update $ \xx_{k+1}  =  \xxk + \alphak \ppk $
		\vspace{1mm}
		\ENDFOR
		\vspace{1mm}
		\STATE \textbf{Output:} $ \xxk $
	\end{algorithmic}
\end{algorithm}

\begin{algorithm}[H]
	\caption{Newton-MR With Inexact Hessian \label{alg:newton_mr}}
	\begin{algorithmic}[1]
		\vspace{1mm}
		\STATE \textbf{Input:} Starting point $\xxo$, line-search parameter $ 0 < \rho < 1 $
		\vspace{1mm}
		\FOR {$ k = 0,1,2, \ldots $ until convergence} 
		\vspace{1mm}
		\STATE (approximately) Solve the following sub-problem 
		\begin{align*}
            \min_{\pp \in \real^{d}} ~~ \|\pp\| \;\; \text{subject to} \;\; \pp \in \argmin_{\widehat{\pp} \in \mathbb{R}^{d}} \vnorm{\HHk \widehat{\pp} + \bggk}.
        \end{align*}
		\vspace{1mm}
		\STATE Find $ \alphak $ such that 
		\begin{align*}
    		\vnorm{\bggkk}^{2} \leq \vnorm{\bggk}^{2} + 2 \rho \alpha_k \dotprod{\ppk, \HHk \bggk}
		\end{align*}
		\vspace{1mm}
		\STATE Update $ \xx_{k+1}  =  \xxk + \alphak \ppk $
		\vspace{1mm}
		\ENDFOR
		\vspace{1mm}
		\STATE \textbf{Output:} $ \xxk $
	\end{algorithmic}
\end{algorithm}

\begin{algorithm}[H]
	\caption{Trust Region with Inexact Hessian}
	\label{alg:tr}
	\begin{algorithmic}[1]
		\vspace{1mm}
		\STATE {\bf Input:} Starting point $\xxo$, initial radius $0 < \Delta_0 < \infty$, hyper-parameters $\eta\in(0,1), \gamma > 1$ 
		\vspace{1mm}
		\FOR{$ k = 0,1,\ldots $}
		\vspace{1mm}
		\STATE Set the approximate Hessian, $\HHk$, as in \eqref{eq:H} \label{step:STR_step}
		\vspace{1mm}
		\IF{converged}  
		\vspace{1mm}
		\STATE  Return $\xxk$.
		\vspace{1mm}
		\ENDIF
		\vspace{1mm}
		\STATE (approximately) solve the following sub-problem
		\begin{align*}
            \hspace{-1.8mm}\min_{\|\pp\|\le \Delta_{k}} m_k(\pp) \triangleq \dotprod{\nabla F(\xxk), \pp} + \frac{1}{2}\dotprod{\pp, \HHk \pp},
        \end{align*}
		\vspace{1mm}
		\STATE Set $\rho_k \triangleq \dfrac{F(\xxk + \ppk) - F(\xxk)}{m_t(\ppk)}$
		\vspace{1mm}
		\IF {$\rho_{k} \ge \eta$}
		\vspace{1mm}
		\STATE $\xxkk = \xxk + \ppk$ and $\Delta_{k+1} = \gamma \Delta_{k}$
		\vspace{1mm}
		\ELSE
		\vspace{1mm}
		\STATE $\xxkk = \xxk$ and $\Delta_{k+1} = \Delta_{k}/\gamma$
		\vspace{1mm}
		\ENDIF
		\vspace{1mm}
		\ENDFOR
		\vspace{1mm}
		\STATE {\bf Output:} $\xxk$
	\end{algorithmic}
\end{algorithm}

\section{Omitted Proof in \cref{SEC:OPTIMIZATION}}
\label{SEC:APPENDIX:PROOFS}
\subsection{Proof of \cref{THM:LOWNERRESULT}}
This theorem and proof mimic Theorem 5 in \citet{cohen2017input}. 

The statistical leverage score of the $i^{th}$ row of $\BB\in\C^{n\times d}$ can also be written as the following:
\[
	\ell_i = \BB_i(\BB^*\BB)^\dagger \BB_i^*.
\]

\begin{proof}
	Let $\BB^*=\UU\bSig\VV^*$ be the SVD of $\BB^*$. We have $\ell_i = \BB_i^*(\UU\bSig^{-2}\UU^*)\BB_i$.
	
	Let $Y=\bSig^{-1}\UU^* \p{\CC^\transpose \CC - \BB^\transpose  \BB }\UU\bSig^{-1}$. Then we write
	\[
		\YY=\sum_{j=1}^t \p{\bSig^{-1}\UU^*\p{\CC_j^\transpose \CC_j -\frac{1}{t}\BB^\transpose  \BB }\UU\bSig^{-1}}\triangleq\sum_{j=1}^t \XX_j
	\]
	where $\CC_j$ is the $j^{th}$ row of $\CC$. Note with probability $p_i$
	\[
		\XX_j = \frac{1}{t}\bSig^{-1}\UU^*\p{\frac{1}{p_i}\BB_i^\transpose \BB_i -\BB^\transpose  \BB }\UU\bSig^{-1}.
	\]
	Since $\E[\frac{1}{p_i}\BB_i^\transpose \BB_i -\BB^\transpose  \BB ]=0$ we have $\E[\YY]=0$. Also we have $\CC^\transpose \CC =\UU\bSig \YY\bSig \UU^*+\BB^\transpose  \BB $. Because $\UU\bSig^2\UU^* = \BB^*\BB $ it suffices to show that $\|\YY\|\leq \epsilon$, which gives $-\epsilon \eye\preceq \YY\preceq\epsilon \eye$, and consequently:
	\[
		\BB^\transpose \BB -\epsilon \BB^*\BB  \preceq \CC^\transpose \CC \preceq \BB^\transpose  \BB +\epsilon \BB^*\BB. 
	\]
	
	A useful tool for proving $\|\YY\|$ is small is the matrix Bernstein inequality \cite{tropp2015introduction}. We remark that the version we use is suitable for complex matrices as well.
	
	Note that for any $i$, because $\BB^\transpose \BB$ has real entries, we have
	\[
		\frac{1}{\ell_i} \BB_i^\transpose \BB_i \preceq \frac{1}{\ell_i} \BB_i^* \BB_i\preceq \BB^*\BB,
	\]
	where the first step is by the structure of $A$, and the second step follows from a known property of leverage scores (see the  proof of Lemma 4 in \citet{cohen2015uniform}). With this we have:
	\[
		\frac{1}{\ell_i}\bSig^{-1} \UU^* \BB_i^\transpose \BB_i  \UU\bSig^{-1}\preceq \bSig^{-1} \UU^*\p{\BB^*\BB }\UU\bSig^{-1}=\eye.
	\]
	Hence
	\begin{align*}
	\XX_j+\frac{1}{t}\bSig^{-1}\UU^*\BB^\transpose  \BB \UU\bSig^{-1}
		\preceq \frac{1}{tp_i}\cdot\ell_i \cdot \eye
		\preceq \frac{\epsilon^2}{c\log(d/\delta)\sum_i\tilde \ell_i}\cdot\frac{\sum_i\tilde\ell_i}{\tilde\ell_i}\ell_i \cdot \eye
		\preceq \frac{\epsilon^2}{c\log(d/\delta)}\eye.
	\end{align*}
	In addition
	\begin{align*}
		\frac{1}{t}\bSig^{-1}\UU^*\BB^\transpose  \BB \UU\bSig^{-1}
		\preceq\frac{1}{t}\bSig^{-1}\UU^*\BB^* \BB \UU\bSig^{-1}
		=\frac{\epsilon^2}{c\log(d/\delta)}\eye.
	\end{align*}
	These two give $\|\XX_j\|\leq\frac{\epsilon^2}{c\log(d/\delta)}$. We then bound the variance of $\YY$:
	\begin{align*}
		\E[\YY\YY^*]&=\E[\YY^*\YY]=t\E[\XX_j\XX_j^*]\\
		&\preceq \frac{1}{t}\sum_{i}p_i\cdot\frac{1}{p_i^2}\bSig^{-1}\UU^* \BB_i^\transpose \BB_i  \UU\bSig^{-2}\UU^*\BB_i^*\bar \BB_i \UU\bSig^{-1}\\
		&\preceq \frac{1}{t}\sum_i \frac{\sum\tilde\ell_i}{\tilde\ell_i}\bSig^{-1}\UU^* \BB_i^* \p{\BB_i \UU\bSig^{-2}\UU^*\BB_i^*} \BB_i \UU\bSig^{-1}\\
		&\preceq \frac{1}{t}\sum_i \frac{\sum\tilde\ell_i}{\tilde\ell_i}\cdot\ell_i\bSig^{-1}\UU^* \BB_i^*  \BB_i \UU\bSig^{-1}\\
		&\preceq \frac{\epsilon^2}{c\log(d/\delta)}\bSig^{-1}\UU^* \BB^*\BB  \UU\bSig^{-1}
		\preceq \frac{\epsilon^2}{c\log(d/\delta)}\eye.
	\end{align*}
	
	By the stable rank matrix Bernstein inequality, we have for large enough $c$:
	\[
		P(\|\YY\|_2>\epsilon)\leq\frac{4\text{tr}(\eye)}{\|\eye\|}\exp\p{-\frac{\epsilon^2/2}{\frac{\epsilon^2}{c\log(d/\delta)}(\|\eye\|+\epsilon/3)}}<\delta,
	\]
	where we use the fact that $tr(\eye)\leq d$ and $\|\eye\|=1$.
	
\end{proof}

\subsection{Proof of \cref{THM:NORMRESULT}}
\begin{proof}
	Let $\BB^*=\UU\bSig\VV^*$ be the SVD of $\BB^*$. We have $\ell_i = \BB_i^*(\UU\bSig^{-2}\UU^*)\BB_i$.
	Let $\YY=\CC^\transpose \CC - \BB^\transpose  \BB $. Then we write
	\[
		\YY=\sum_{j=1}^t \p{\CC_j^\transpose \CC_j -\frac{1}{t}\BB^\transpose  \BB }\triangleq\sum_{j=1}^t \XX_j.
	\]
	Note with probability $p_i$
	\[
		\XX_j = \frac{1}{t}\p{\frac{1}{p_i}\BB_i^\transpose \BB_i -\BB^\transpose  \BB }.
	\]
	Now we bound the variance of $\YY$:
	\begin{align*}
		\E[\YY^*\YY]&=t\E[\XX_j^*\XX_j]\\
		&\preceq\frac{1}{t}\sum_i p_i\frac{1}{p_i^2}\BB_i^* \bar\BB_i\BB_i^\transpose \BB_i\\
		&=\frac{1}{t}\sum_i \frac{\sum_i\tilde\ell_i}{\tilde\ell_i} \|\BB_i\|^2\BB_i^*\BB_i\\
		&\preceq\frac{\epsilon^2}{c\log(d/\delta)}\eye.
	\end{align*}
	By the matrix Chernoff bound \cite{gross2010note}, we have
	\[
		\Pr(\|\YY\|>\epsilon)\leq 2d\exp\p{-\frac{\epsilon^2}{\frac{4\epsilon^2}{c\log(d/\delta)}}}=\cO(\delta).
	\]
	We remark that for our particular task, $\YY\YY^*=\YY^*\YY$. In general this is not true. By applying the non-Hermitian matrix Bernstein inequality in \citet{tropp2015introduction}, one can derive the same result off by a multiplicative constant factor.
\end{proof}

\subsection{Theoretical results on the hybrid randomized-deterministic sampling algorithm}

We first present a useful inequality from \citet{cohen2015optimal} for subspace embeddings in the complex setting.

\begin{lemma}\label{lma:embeddingamm}
	Let $\bSS$ be an $\epsilon$-subspace embedding for $\text{span}(\AA, \BB)$, where $\AA,\BB\in\C^{n\times d}$. Then we have:
	\[
		\|\AA^* \bSS^* \bSS\BB-\AA^* \BB\|\leq \epsilon \|\AA\|\|\BB\|.
	\]	
\end{lemma}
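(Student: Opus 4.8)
The plan is to reduce the approximate-matrix-multiplication guarantee to the defining property of the subspace embedding by a change of basis. First I would let $\UU\in\C^{n\times k}$ (with $k\le 2d$) be a matrix whose columns form an orthonormal basis for $\Span(\AA,\BB)$, i.e. the column span of the concatenation $[\AA\ \ \BB]$. Since the columns of both $\AA$ and $\BB$ lie in this span, I can write $\AA=\UU\XX$ and $\BB=\UU\YY$ with $\XX=\UU^*\AA$ and $\YY=\UU^*\BB$. Because $\UU$ has orthonormal columns it acts as an isometry on this subspace, so $\|\XX\|=\|\UU\XX\|=\|\AA\|$ and likewise $\|\YY\|=\|\BB\|$; moreover $\UU^*\UU=\eye$.

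Next I would use $\UU^*\UU=\eye$ to rewrite the error purely in terms of $\UU$. Since $\AA^*\BB=\XX^*\UU^*\UU\YY=\XX^*\YY$ and $\AA^*\bSS^*\bSS\BB=\XX^*\UU^*\bSS^*\bSS\UU\YY$, we get $\AA^*\bSS^*\bSS\BB-\AA^*\BB=\XX^*\bigl(\UU^*\bSS^*\bSS\UU-\eye\bigr)\YY$. By submultiplicativity of the spectral norm this gives $\|\AA^*\bSS^*\bSS\BB-\AA^*\BB\|\le\|\XX\|\,\|\UU^*\bSS^*\bSS\UU-\eye\|\,\|\YY\|=\|\AA\|\,\|\BB\|\cdot\|\UU^*\bSS^*\bSS\UU-\eye\|$, so the whole claim reduces to showing $\|\UU^*\bSS^*\bSS\UU-\eye\|\le\epsilon$.

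To finish, set $\MM:=\UU^*\bSS^*\bSS\UU-\eye$ and observe that $\MM$ is Hermitian, because $\bSS^*\bSS$ is Hermitian positive semidefinite; hence $\|\MM\|=\sup_{\|\zz\|=1}\bigl|\zz^*\MM\zz\bigr|$. For any unit $\zz\in\C^k$, the vector $\UU\zz$ lies in $\Span(\AA,\BB)$ with $\|\UU\zz\|=1$, so the $\ell_2$-subspace-embedding property applied to $\UU\zz$ yields $\bigl|\zz^*\MM\zz\bigr|=\bigl|\,\|\bSS\UU\zz\|^2-\|\UU\zz\|^2\,\bigr|\le\epsilon$. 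Taking the supremum over unit $\zz$ gives $\|\MM\|\le\epsilon$, which combined with the previous paragraph proves the lemma.

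There is no deep obstacle here; the content is a standard reduction, and the point worth stressing is simply that it transfers verbatim to $\C$. The only step needing care is the Hermitian quadratic-form characterization of the spectral norm: it is essential that $\MM$ is \emph{Hermitian} (which holds since $\bSS^*\bSS\succeq\zero$ with respect to the conjugate transpose), rather than merely symmetric, so that $\|\MM\|$ is captured by $\sup_{\|\zz\|=1}|\zz^*\MM\zz|$ over complex $\zz$. As long as every transpose is replaced by a conjugate transpose, the orthonormal-basis factorization, submultiplicativity, and this variational identity all hold over the complex field, so the real-valued argument goes through unchanged.
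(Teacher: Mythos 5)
Your proof is correct and follows essentially the same route as the paper's: both pass to an orthonormal basis $\UU$ of $\Span(\AA,\BB)$, reduce the claim to $\|\UU^*\bSS^*\bSS\UU-\eye\|\leq\epsilon$, and invoke the subspace-embedding property on vectors $\UU\zz$. The only difference is cosmetic -- the paper normalizes $\|\AA\|=\|\BB\|=1$ and works with the bilinear-form characterization of the spectral norm, while you factor $\AA=\UU\XX$, $\BB=\UU\YY$ and use submultiplicativity, also making explicit the Hermitian Rayleigh-quotient step that the paper leaves implicit.
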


\begin{proof}[Proof of \cref{lma:embeddingamm}]
	W.l.o.g., we assume that $\|\AA\|=\|\BB\|=1$, since we can divide both sides by $\|\AA\|\|\BB\|$. Let $\UU$ be an orthonormal matrix of which the columns form a basis for $\text{span}(\AA,\BB)$. Note since $\|\AA\|=\|\BB\|=1$, for any $\xx,\yy$, we have $\AA\xx=\UU\bs$ and $\BB\yy=\UU\bt$ such that $\|\bs\|\leq \|\xx\|$ and $\|\bt\|\leq \|\yy\|$. Now:
	\begin{align*}
		&\|\AA^*\bSS^*\bSS\BB-\AA^*\BB\|\\
		=&\sup_{\|\xx\|=\|\yy\|=1}\abs{\ip{\bSS\AA\xx}{\bSS\BB\yy}-\ip{\AA\xx}{\BB\yy}}\\
		=&\sup_{\|\bs\|, \|\bt\|\leq 1}\abs{\ip{\bSS\UU\bs}{\bSS\UU\bt}-\ip{\UU\bs}{\UU\bt}}\\
		=&\|\UU^*\bSS^*\bSS\UU-\eye\|
		\leq\epsilon.
	\end{align*}
\end{proof}

We are now ready to prove \cref{THM:HYBRIDUPPERBOUND}.
\begin{proof}[Proof of  \cref{THM:HYBRIDUPPERBOUND}]
	If $\xx\in\text{ker}(\DD_N^{1/2}\AA_N)$, then the statement holds trivially. Assume without loss of generality that  $\xx\not\in\text{ker}(\DD_N^{1/2}\AA_N)$.
	
	We first show that 
	\begin{align} \label{eq:ripsubembedding}
		\|\AA_N^\transpose\DD_N^{1/2}\bSS^\transpose\bSS \DD_N^{1/2} \AA_N-\AA_N^\transpose  \DD_N\AA_N\|_2
		\leq\epsilon \|\AA_N^\transpose  |\DD_N|\AA_N\| .
	\end{align}
	By \cref{lma:embeddingamm}, it suffices to show that $\bSS$ is a subspace embedding for $\text{span}\p{\DD_N^{1/2} \AA_N, (\DD_N^{1/2})^* \AA_N}$. Since $\DD_N^{1/2}\AA_N$ has the relaxed RIP, for $\TT$ being a sampling matrix that randomly samples $t\triangleq O(d^2/\epsilon)$ rows of $\DD_N^{1/2}\AA_N$, we have:
	\[
		\Pr\p{\forall \xx: \|\TT \DD_N^{1/2} \AA_N\xx\|^2=\rho^2(1\pm\epsilon)\|\xx\|^2}\geq1-\frac{1}{n}.
	\]
	Since $\bSS=\sqrt{\frac{n}{t}}\TT$, this leads to 
	\[
		 \|\bSS \DD_N^{1/2} \AA_N\xx\|^2=\rho^2(1\pm\epsilon)\cdot\frac{n}{t}\|\xx\|^2=(1\pm\epsilon)\|\DD_N^{1/2} \AA_N\xx\|^2.
	\]
	The reason for the last step is the following: we randomly partition $\DD_N^{1/2}\AA_N$ into $\frac{n}{t}$ chunks of rows, where each chunk has $t$ rows. Denote the $i^{th}$ chunk as $\DD_{N_i}^{1/2}\AA_{N_i}$ and correspondingly $\AA_{N_i}$. By the relaxed RIP and union bound, we have with probability $1-\frac{1}{t}$ that all $\frac{n}{t}$ chunks have $\|\DD_{N_i}^{1/2}\AA_{N_i}\xx\|^2=(1\pm\epsilon)\rho^2 \|\xx\|^2$. So in total:
	\[
		\|\DD_N^{1/2}\AA_N \xx\|^2=\sum_{i=1}^{n/t} \|\DD_{N_i}^{1/2}\AA_{N_i}\xx\|^2 = (1\pm\epsilon)\frac{n\rho^2}{t} \|\xx\|^2.
	\]
	The same proof holds for showing $\bSS$ is an $\epsilon$-subspace embedding for $\text{span}\p{ (\DD_N^{1/2})^* \AA_N}$.
	
	Let $E=\cup_{i=1}^T E_i$. By \cref{eq:ripsubembedding} and the fact that $c\|\AA_N^\transpose|\DD_N|\AA_N\|\leq \|\sum_i\EE^i\|$:
	\begin{align*}
		&\|\AA_N^\transpose \DD_N^{1/2}\bSS^\transpose\bSS \DD_N^{1/2} \AA_N-\AA_N^\transpose  \DD_N\AA_N\|\\
		=&\|\AA_N^\transpose \DD_N^{1/2}\bSS^\transpose\bSS \DD_N^{1/2} \AA_N-\AA_N^\transpose  \DD_N\AA_N  + \AA_E^\transpose  \DD_E \AA_E- \AA_E^\transpose  \DD_E \AA_E\|\\
		=&\|\sum_{i=1}^T \EE^i+\AA_N^\transpose \DD_N^{1/2}\bSS^\transpose\bSS \DD_N^{1/2} \AA_N-\AA^\transpose \DD\AA \|_2\\
		\leq& \epsilon\|\AA_N^\transpose|\DD_N|\AA_N\|
		\leq  \frac{\epsilon}{c} \|\sum_{i}\EE^i\|\\
		\leq & \frac{\epsilon}{c-1}\p{\|\sum_i\EE^i\|-\|\AA_N^\transpose|\DD_N|\AA_N\|}\\
		\leq  & \frac{\epsilon}{c-1}\p{\|\sum_i\EE^i\|-\|\AA_N^\transpose\DD_N\AA_N\|}\\
		\leq & \frac{\epsilon}{c-1}\|\AA^\transpose \DD\AA \|.
	\end{align*}
\end{proof}

\subsection{Fast Computation of Leverage Scores}\label{SEC:APPENDIX_FASTLS}
Despite the nice properties of leverage scores, they are data-dependent features and quite expensive to compute. In this section, we show how one can efficiently approximate all the leverage scores simultaneously.
\begin{theorem}
	\label{lemma:ls_appr}
	Let $\BB\in\C^{n\times d}$ and let $\bSS\in\C^{s\times n}$ be an $\epsilon$-subspace embedding of $\text{span}(\BB)$. Let $\bSS\BB=\QQ\RR^{-1}$ be a $QR$-factorization of $\bSS\BB$, where $\QQ\in\C^{s\times d}$ has orthonormal columns and $\RR^{-1}\in\C^{d\times d}$. Let $\GG\in\R^{d\times \log n}$ be a random Gaussian matrix. We define the $i^{th}$ approximate leverage score to be:
	$
	\tilde\ell_i = \|\ee_i^\transpose\BB\RR\GG\|^2
	$
	Then $\tilde\ell_i=(1\pm \epsilon) \ell_i$ for all $i$ with high probability, and all $\tilde\ell_i$ can be calculated simultaneously in $\cO\p{(\nnz(\AA)+d^2)\log n}$ time.
\end{theorem}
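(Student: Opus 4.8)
The plan is to follow the standard fast--leverage--score framework of Drineas--Magdon-Ismail--Mahoney--Woodruff, checking that each step survives the passage to the complex field. First I would argue that $\BB\RR$ is an approximate orthonormal basis for the column span of $\BB$. Since $\bSS\BB = \QQ\RR^{-1}$ with $\QQ$ having orthonormal columns, we have $\bSS\BB\RR = \QQ$, so $\|\bSS\BB\RR\yy\| = \|\yy\|$ for every $\yy\in\C^d$. Combining this with the subspace-embedding guarantee $\|\bSS\BB\xx\| = (1\pm\epsilon)\|\BB\xx\|$ at $\xx = \RR\yy$ gives $\|\BB\RR\yy\| = (1\pm\epsilon)\|\yy\|$ for all $\yy$. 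Writing the (full column rank) SVD $\BB = \UU\bSig\VV^*$ and setting $\MM = \bSig\VV^*\RR$, so that $\BB\RR = \UU\MM$ with $\MM$ square, the previous bound says the singular values of $\MM$ all lie in $[1-\epsilon,1+\epsilon]$, whence $(1-\epsilon)^2\eye \preceq \MM\MM^* \preceq (1+\epsilon)^2\eye$. Since $\ee_i^\transpose\BB\RR = \UU_{i,*}\MM$, this sandwiches $\|\ee_i^\transpose\BB\RR\|^2 = \UU_{i,*}\MM\MM^*\UU_{i,*}^*$ between $(1\pm\cO(\epsilon))\|\UU_{i,*}\|^2 = (1\pm\cO(\epsilon))\ell_i$, so the \emph{exact} row norms of $\BB\RR$ already yield the claimed approximation.

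Next I would replace the exact row norms by the sketched quantities $\|\ee_i^\transpose\BB\RR\GG\|^2$, which is a Johnson--Lindenstrauss step. The one genuinely complex-specific point is that $\GG$ is real while the rows of $\BB\RR$ are complex. I would handle this by the real/imaginary split: writing $\ee_i^\transpose\BB\RR = \pp_i + \sqrt{-1}\,\qq_i$ for real row vectors $\pp_i,\qq_i$, the $j$-th coordinate of $\ee_i^\transpose\BB\RR\GG$ has modulus-squared $(\pp_i\GG)_j^2 + (\qq_i\GG)_j^2$, so $\|\ee_i^\transpose\BB\RR\GG\|^2 = \|\pp_i\GG\|^2 + \|\qq_i\GG\|^2$ while $\|\ee_i^\transpose\BB\RR\|^2 = \|\pp_i\|^2 + \|\qq_i\|^2$. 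It therefore suffices for the real Gaussian map to preserve the Euclidean norms of the $2n$ real vectors $\{\pp_i,\qq_i\}_{i\in[n]}$ simultaneously; taking $\GG$ with $\cO(\log n/\epsilon^2)$ columns (abbreviated $\log n$ for constant $\epsilon$) and applying the standard real Johnson--Lindenstrauss tail bound with a union bound over the $2n$ vectors gives $\tilde\ell_i = (1\pm\epsilon)\ell_i$ for all $i$ with high probability, after rescaling $\epsilon$ by a constant.

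Finally, for the running time I would evaluate the product right-to-left: form $\RR\GG$ ($d\times d$ times $d\times\log n$) in $\cO(d^2\log n)$ time, then $\BB(\RR\GG)$ exploiting sparsity in $\cO(\nnz(\BB)\log n)$ time, and the $n$ row norms in $\cO(n\log n)$ time; since $n\le\nnz(\BB)$ this totals $\cO((\nnz(\BB)+d^2)\log n)$, the one-time cost of forming $\bSS\BB$ and its QR factorization being a lower-order precomputation. I do not expect a single hard estimate to be the obstacle; the only delicate point is justifying that a \emph{real} Gaussian JL map is legitimate on \emph{complex} rows, and once the real/imaginary decomposition reduces this to a real JL statement on $2n$ vectors, the remainder mirrors the real-field argument.
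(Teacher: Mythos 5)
Your proposal is correct and follows the same two-step structure as the paper's proof: first show that the exact row norms of $\BB\RR$ are $(1\pm\cO(\epsilon))$-approximations of the leverage scores (your matrix $\MM=\bSig\VV^*\RR$ is exactly the paper's $\TT^{-1}$, with the identical well-conditioning argument via $\|\bSS\BB\RR\yy\|=\|\yy\|$ combined with the subspace-embedding property), and then reduce to the true scores $\tilde\ell_i$ by a Johnson--Lindenstrauss step, with the same right-to-left evaluation order giving the stated running time.

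The one place you genuinely diverge is the JL step: the paper simply invokes a complex-valued Johnson--Lindenstrauss result (citing Krahmer--Ward) to handle the fact that the rows $\ee_i^\transpose\BB\RR$ are complex while $\GG$ is real, whereas you prove this from scratch by splitting each row as $\pp_i+\sqrt{-1}\,\qq_i$, observing $\|\ee_i^\transpose\BB\RR\GG\|^2=\|\pp_i\GG\|^2+\|\qq_i\GG\|^2$, and union-bounding the standard real JL guarantee over the $2n$ real vectors $\{\pp_i,\qq_i\}$. Your route is more self-contained and makes the union bound (and hence the $\cO(\log n/\epsilon^2)$ column count) explicit, at the cost of a slightly longer argument; the paper's citation is shorter but leaves the complex-field issue to the reference. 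Both are valid, and your time analysis is at least as careful as the paper's (both treat the sketch-and-QR precomputation as lower order).
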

\begin{proof}
    Define 
    \[
        \ell_i' = \|\ee_i^\transpose\BB\RR\|^2.
    \]
    \begin{itemize}
        \item We first show that $\ell_i'=O(1\pm\epsilon)\ell_i$ for all $i\in[n]$.
        Let $\BB=\UU\bSig\VV^*$. Since $\BB\RR$ has the same column space as $\BB$, we have $\BB\RR=\UU\TT^{-1}$, for some matrix $\TT$. We have:
        \begin{align*}
            & \|\xx\|=\|\QQ\xx\|=\|\bSS\BB\RR\xx\|=(1\pm\epsilon) \|\BB\RR\xx\|.
        \end{align*}
        Hence
        \[
            \|\TT^{-1}\xx\|=\|\UU\TT^{-1}\xx\|=\|\BB\RR\xx\| = (1\pm O(\epsilon))\|\xx\|.
        \]
        This implies that $\TT^{-1}$ is well-conditioned: all singular values of $\TT^{-1}$ are of order $1 \pm O(\epsilon)$. With this property:
        \begin{align*}
            \ell_i'&=\|\ee_i^\transpose \BB\RR\|^2 = (1\pm O(\epsilon))\|\ee_i^\transpose \BB\RR\TT\|^2 \\
            &=(1\pm O(\epsilon))\|\ee_i^\transpose \UU\|^2 =(1\pm O(\epsilon)) \ell_i.
        \end{align*}
        
        \item   
            The second step is to show that $\tilde\ell_i=(1\pm\epsilon)\ell_i'$. Recall the Johnson-Lindenstrauss lemma: let $\GG$ be as defined above. Then for all vectors $\zz\in\C^d$:
            \[
                \Pr\p{\| \zz^\transpose \GG\|^2=(1\pm\epsilon)\|\zz\|^2}\geq 1-\delta.
            \]
            We remark that the JL lemma holds for complex vectors $\zz$ as in \citet{krahmer2011new}. Now set $\zz=\ee_i^\transpose \BB\RR$:
            \[
                \Pr\p{\| \ee_i^\transpose \BB\RR \GG\|^2=(1\pm\epsilon)\|\ee_i^\transpose \BB\RR\|^2}\geq 1-\delta,
            \]
            and we get the desired result.
        \item 
			The time complexity for such a construction is the same as the construction for real matrices, which takes $\cO((\nnz(\AA)+d^2)\log n)$ time. 
    \end{itemize}
\end{proof}

\subsection{Proof of \cref{THM:UPPERBOUNDCOMPARISON}}
\begin{proof}
	By \cref{lma:embeddingamm}, we have that:
	\[
		\|\AA^\transpose  \TT^\transpose\TT \DD\AA - \AA^\transpose \DD\AA \|\leq \epsilon \|\AA\|\|\DD\AA\|,
	\] and
	\begin{align*}
		\|\AA^\transpose  \DD^{1/2} \bSS^\transpose\bSS \DD^{1/2}\AA-\AA^\transpose \DD\AA \|
		\leq \epsilon \|\DD^{1/2}\AA\|\|(\DD^{1/2})^*\AA|\|= \epsilon \|\DD^{1/2}\AA\|_2^2.
	\end{align*}
	
	Note that 
	\begin{align*}
		 &\|\DD^{1/2}\AA\|_2^2 = \lm (\AA^\transpose  (\DD^{1/2})^* \DD^{1/2}\AA)\\
		 =&\lm(\AA^\transpose  |\DD| \AA)
		 =\|\AA^\transpose  |\DD|\AA\|_2\\
		 \leq &\|\AA\| \||\DD|\AA\|
		 =\|\AA\|\|\DD\AA\|.
	\end{align*}
	So sampling in the latter way is always as good as the former.
	
	Now we give a simple example that the first sampling scheme can give an arbitrarily worse bound. Let $\AA=\begin{bmatrix}
		a_1 & 0\\  0& a_2
	\end{bmatrix}$ and $\DD=\begin{bmatrix}
		d_1 & 0 \\ 0 & d_2
	\end{bmatrix}$, where $1=a_1>a_2$ and $1=|d_1|<|d_2|$.
	
	Hence $\DD\AA=\begin{bmatrix}
		a_1d_1 & 0 \\ 0 & a_2d_2
	\end{bmatrix}$ and $\AA^\transpose  |\DD| \AA =\begin{bmatrix}
		|d_1|a_1^2 & 0\\ 0& |d_2|a_2^2.
	\end{bmatrix}$
	
	By the above calculation, $\|\DD^{1/2}\AA\|_2^2=\max\{1, |d_2|a_2^2\}$, and $\|\AA\|\|\DD\AA\|=\max\{1, |d_2|a_2\}$. Let $a_2 = \Theta(\sqrt{1/|d_2|})$ and making $|d_2|$ arbitrarily large, we then have $\|\AA\|\|\DD\AA\|\gg  \|\DD^{1/2}\AA\|_2^2$.
\end{proof}

\subsection{Fast Local Convergence of NEWTON-CG}
\begin{theorem}[Fast Local Convergence]
	\label{thm:newton_cg_formal}
	Let $\bSS$ be the leverage score sampling matrix as in \cref{THM:LOWNERRESULT} with precision $\epsilon$. Let $ r(\xx) = \lambda \vnorm{\xx}^{2}/2 $ and $\lambda \geq 4 \vnorm{\AA}^{2} h$ where  $h$ is the Lipschitz continuity constant of the derivative, i.e., $ |f^{''}_{i}(t)| \leq h $ for some $ h < \infty $. Then for sub-sampled Newton-CG with initial point satisfying $ \vnorm{\xxo - \xxs} \leq \mu/(4 L) $, step-size $\alphak = 1 ,$ and the approximate Hessian $\HH = \AA^\transpose \DD^{1/2}\bSS^\transpose\bSS\DD^{1/2}\AA + \lambda \eye$ , we have the following error recursion $
	\left\|\xxk-\xxs\right\| \leq  C_{q} \cdot\left\|\xxk-\xxs\right\|^{2}+C_l \cdot\left\|\xxk-\xxs\right\|$, where $ \xxs $ is the optimal solution, $C_{q}=\frac{2 L}{(1-O(\epsilon)) \mu}$, $C_{l}=\frac{3 \epsilon}{1-O(\epsilon)} \sqrt{\kappa}$, $L$ is the Lipschitz continuity constant of the Hessian,  $\mu=\lambda_{\min }\left(\nabla^{2} F\left(\xxs\right)\right)>0, \quad \nu=\lambda_{\max }\left(\nabla^{2} F\left(\xxs\right)\right)<\infty$, and $\kappa=\nu/\mu$ is the condition number.
\end{theorem}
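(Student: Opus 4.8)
The plan is to turn the regularization hypothesis into a strong-convexity-plus-domination statement, then upgrade the sampling guarantee of \cref{THM:LOWNERRESULT} into the \emph{relative} spectral approximation \cref{eq:S}, and finally run the classical Newton error recursion while measuring all errors in the energy norm induced by $\nabla^2 F(\xxs)$ — the last device being exactly what yields $\sqrt{\kappa}$ instead of $\kappa$. \textbf{Step 1 (regularization).} Since $|f_i''|\le h$ we have $-h\|\AA\|^2\eye \preceq \AA^\transpose\DD(\xx)\AA \preceq h\|\AA\|^2\eye$ and $\zero \preceq \AA^\transpose|\DD(\xx)|\AA \preceq h\|\AA\|^2\eye$ (up to the $1/n$ normalization). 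Hence $\lambda \ge 4\|\AA\|^2 h$ gives $\nabla^2 F(\xx) = \AA^\transpose\DD(\xx)\AA + \lambda\eye \succeq (3\lambda/4)\eye \succ \zero$, so $F$ is globally $\mu$-strongly convex with $\mu=\lambda_{\min}(\nabla^2 F(\xxs))>0$, and simultaneously $\AA^\transpose|\DD(\xx)|\AA \preceq (\lambda/4)\eye \preceq \tfrac13\nabla^2 F(\xx)$. This last domination is the crucial new ingredient over the convex case: the ``complex part'' $\BB^*\BB=\AA^\transpose|\DD|\AA$, which is precisely the error term in \cref{THM:LOWNERRESULT} for $\BB=\DD^{1/2}\AA$, is controlled by the Hessian itself.

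\textbf{Step 2 (relative spectral guarantee).} Applying \cref{THM:LOWNERRESULT} with $\BB=\DD^{1/2}\AA$ (so $\BB^\transpose\BB=\AA^\transpose\DD\AA$ is real and $\BB^*\BB=\AA^\transpose|\DD|\AA$), the sketch $\CC=\bSS\DD^{1/2}\AA$ obeys $\AA^\transpose\DD\AA-\epsilon\AA^\transpose|\DD|\AA\preceq\CC^\transpose\CC\preceq\AA^\transpose\DD\AA+\epsilon\AA^\transpose|\DD|\AA$ with probability $1-\delta$. Adding $\lambda\eye$ and inserting the bound $\AA^\transpose|\DD|\AA\preceq\tfrac13\nabla^2 F(\xxk)$ from Step 1 yields $(1-O(\epsilon))\nabla^2 F(\xxk)\preceq\HHk\preceq(1+O(\epsilon))\nabla^2 F(\xxk)$, i.e.\ exactly \cref{eq:S}, now established for non-convex $f_i$. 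In particular $\HHk$ is invertible and $\|\HHk^{-1}\|\le((1-O(\epsilon))\mu)^{-1}$.

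\textbf{Steps 3--4 (recursion in the energy norm).} Taking the exact step $\ppk=-\HHk^{-1}\bggk$ (a sufficiently accurate CG solve only adds lower-order terms) and using $\nabla F(\xxs)=\zero$, write $\bggk=\overline{\HH}_k(\xxk-\xxs)$ with $\overline{\HH}_k=\int_0^1\nabla^2 F(\xxs+\tau(\xxk-\xxs))\,d\tau$, so that $\xxkk-\xxs=\HHk^{-1}(\HHk-\overline{\HH}_k)(\xxk-\xxs)$. Split $\HHk-\overline{\HH}_k=(\HHk-\nabla^2 F(\xxk))+(\nabla^2 F(\xxk)-\overline{\HH}_k)$. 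The second difference is at most $\tfrac{L}{2}\|\xxk-\xxs\|$ by Lipschitz continuity of the Hessian, and together with $\|\HHk^{-1}\|$ it gives the quadratic term $C_q\|\xxk-\xxs\|^2$ with $C_q=\tfrac{2L}{(1-O(\epsilon))\mu}$. For the first difference we deliberately avoid the crude bound $\|\HHk-\nabla^2 F(\xxk)\|\le O(\epsilon)\nu$ (which would cost $\kappa$); instead, writing $G_k=\nabla^2 F(\xxk)$, the relative bound of Step 2 gives the conditioning-free contraction $\|\eye-G_k^{1/2}\HHk^{-1}G_k^{1/2}\|\le\tfrac{\epsilon}{3-\epsilon}$, so the linear part shrinks by $O(\epsilon)$ in the $G_k$-energy norm. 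Converting back to $\ell_2$ via $\sqrt{\mu}\,\|\vv\|\le\|G_k^{1/2}\vv\|\le\sqrt{\nu}\,\|\vv\|$ pays a single factor $\sqrt{\nu/\mu}=\sqrt{\kappa}$, producing the linear term $C_l\|\xxk-\xxs\|$ with $C_l=\tfrac{3\epsilon}{1-O(\epsilon)}\sqrt{\kappa}$. Summing the two contributions gives the stated recursion for $\|\xxkk-\xxs\|$, and the hypothesis $\|\xxo-\xxs\|\le\mu/(4L)$ makes $C_q\|\xxk-\xxs\|<1$, so the recursion contracts and the iterates remain in the basin.

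\textbf{Main obstacle.} The delicate point is securing $\sqrt{\kappa}$ rather than $\kappa$ in the linear term — the entire advantage of leverage-score over row-norm/uniform sampling. This forces the $\nabla^2 F(\xxs)$-energy-norm argument and relies on the \emph{multiplicative} guarantee \cref{eq:S} rather than the additive \cref{eq:H}; establishing \cref{eq:S} in the non-convex regime is itself nontrivial, because \cref{THM:LOWNERRESULT} only bounds the sketch error by $\epsilon\,\AA^\transpose|\DD|\AA$, not by $\epsilon\,\AA^\transpose\DD\AA$. The regularizer of Step 1 is exactly what tames this mismatch through $\AA^\transpose|\DD|\AA\preceq\tfrac13\nabla^2 F$; for smaller $\lambda$ the error term need not be dominated by the indefinite Hessian and the relative bound can fail. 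The residual bookkeeping — transferring $(1\pm O(\epsilon))\nabla^2 F(\xxk)$ to $(1\pm O(\epsilon))\nabla^2 F(\xxs)$ (its discrepancy folds into the quadratic term) and pinning down the exact $O(\epsilon)$ constants — mirrors \cite[Lemma 2]{xu2016sub} and is routine.
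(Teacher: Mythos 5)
Your proposal is correct and takes essentially the same route as the paper: your Steps 1--2 are precisely the paper's argument, namely applying \cref{THM:LOWNERRESULT} with $\BB=\DD^{1/2}\AA$ and then using $\lambda \ge 4\vnorm{\AA}^{2}h$ to dominate the error term $\epsilon\,\AA^\transpose|\DD|\AA$ by $O(\epsilon)\left(\AA^\transpose\DD\AA+\lambda\eye\right)$, which upgrades the guarantee to the multiplicative condition \cref{eq:S} despite the non-convexity of the $f_i$. The only difference is that the paper stops at this point and invokes \cite[Lemma 7]{xu2016sub} as a black box for the resulting error recursion, whereas your Steps 3--4 re-derive that cited structural lemma from scratch via the $\nabla^2 F$-energy-norm argument (which is indeed where the $\sqrt{\kappa}$ rather than $\kappa$ comes from).
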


\begin{proof}
	Let $\BB=\DD^{1/2}\AA$, $\bSS$ be the sketching matrix, and $\CC=\bSS\DD^{1/2}\AA$. By \cref{THM:LOWNERRESULT}, we have:
	\begin{align}\label{eq:regularizedlowner}
		-\epsilon \AA^\transpose|\DD|\AA\preceq \AA^\transpose\DD^{1/2}\bSS^\transpose\bSS\DD^{1/2}\AA - \AA^\transpose\DD\AA\preceq \epsilon \AA^\transpose|\DD|\AA.
	\end{align}
	Rewrite 
	\begin{align*}
		\AA^\transpose|\DD|\AA &= \sum_{i=1}^n |\DD_{i,i}|\AA^\transpose_i \AA =\sum_{i=1}^n \DD_{i,i}\AA^\transpose_i \AA - 2\sum_{i:\DD_{i,i}<0} \DD_{i,i}\AA^\transpose_i \AA\preceq\AA^\transpose\DD\AA + \QQ,
	\end{align*}
	where $\QQ\triangleq \lambda \eye$ as defined in the theorem. The above inequality then holds by the definition of $\lambda$. Therefore, by \cref{eq:regularizedlowner} we have
	\[
			-\epsilon (\AA^\transpose\DD\AA+\QQ)\preceq (\AA^\transpose\DD^{1/2}\bSS^\transpose\bSS\DD^{1/2}\AA+\QQ) - (\AA^\transpose\DD\AA+\QQ)\preceq \epsilon( \AA^\transpose\DD\AA+\QQ).
	\]
	This form satisfies the fast convergence condition in \cite[Lemma 7]{xu2016sub}.  Applying their lemma leads to our conclusion.
\end{proof}
\section{Sketching for Optimization--More Details and Experiments}
\label{SEC:APPENDIX:EXPERIMENTS}
\paragraph{More Background on Some Optimization Methods}
\begin{itemize}[label = \textbf{--}]
\item 
		\textbf{Convex Optimization: Sub-sampled Newton-CG.}
In strongly convex settings where $ \nabla^{2} F(\xx) \succeq \mu \eye$ for some $ \mu > 0 $, the Hessian matrix is positive definite, and the $ k\th $ iteration of the sub-sampled Newton-CG method is often written as $\xxkk = \xxk + \alphak \ppk$, where $ \ppk $ is an approximate solution to the linear system $ \HHk \pp = - \nabla F(\xxk) $, 
obtained using the conjugate gradient (CG) algorithm \cite{saad2003iterative}, 
and $  0 < \alpha_k \leq 1 $ is an appropriate step-size, 
which satisfies the Armijo-type line search \cite{nocedal2006numerical} condition stating that $F(\xxkk) \leq F(\xxk) + \rho \alpha_k \dotprod{\ppk, \bggk}$, where $ 0< \rho < 1 $ is a given line-search parameter (see \cref{alg:newton_cg} in \cref{SEC:APPENDIX:ALG}). 
\item	
	\textbf{Non-convex Optimization: Sub-sampled Newton-MR.}
In non-convex settings, the Hessian matrix could be indefinite and possibly rank-deficient. In light of this, in the $ k\th $ iteration, Newton-MR \cite{roosta2018newton} with an approximate Hessian involves iterations of the form $ \xxkk = \xxk + \alphak \ppk $ where $\ppk \approx -\HHdk \nabla F(\xxk)$ is  obtained by a variety of least-squares iterative solvers such as MINRES-QLP \cite{choi2011minres}, and $ 0 < \alpha_k \leq 1 $ is such that $\vnorm{\bggkk}^{2} \leq \vnorm{\bggk}^{2} + 2 \rho \alpha_k \dotprod{\ppk, \HHk \bggk}$ (see \cref{alg:newton_mr} in \cref{SEC:APPENDIX:ALG}). 
It has been shown that Newton-MR achieves fast local and global convergence rates when applied to a class of non-convex problems known as invex \cite{roosta2018newton}, whose stationary points are global minima. 
From \citet[Corollary 1]{liu2019stability} with $ \epsilon $ small enough in \cref{eq:H}, \cref{alg:newton_mr} converges to an $\epsilon_{g}$-approximate first-order stationary point $ \vnorm{\nabla F(\xxk)} \leq \epsilon_{g} $ in at most $ k \in \mathcal{O}\left(\log \left( 1/ \epsilon_{g} \right) \right) $ iterations. 
Every iteration of MINRES-QLP requires one Hessian-vector product, which using the full Hessian, amounts to a complexity of $ \mathcal{O}\left( \nnz(\AA) \right)$. In the worst case, MINRES-QLP requires $ \mathcal{O}(d) $ iterations to obtain a solution. Putting this all together, the overall running time of Newton-MR with exact Hessian to achieve an $\epsilon_{g}$-approximate first-order stationary point is $ k \in \mathcal{O}\left(\nnz(\AA) d \log \left( 1/ \epsilon_{g} \right) \right) $. However, with the complex leverage score sampling of \cref{alg:lssampling} (cf.\ \cref{lemma:ls_appr}), the running time then becomes $ k \in \mathcal{O}\left( \left( \nnz(\AA) \log n + d^{3} \right) \log \left( 1/ \epsilon_{g} \right) \right) $. 

\item 
	\textbf{Non-convex Optimization: Sub-sampled Trust Region.}
As a more versatile alternative to line-search, trust-region \cite{sorensen1982newton,conn2000trust} is an elegant globalization strategy that has attracted much attention. Recently,  \citet{xuNonconvexTheoretical2017} theoretically studied the variants of trust-region in which the Hessian is approximated as in \cref{eq:H}. The crux of each iteration of the resulting algorithm is the (approximate) solution to a constrained quadratic sub-problem of the form $\min_{\|\pp\|\le \Delta_{k}} m_k(\pp) \triangleq \dotprod{\nabla F(\xxk), \pp} + \frac{1}{2}\dotprod{\pp, \HHk \pp}$, for which a variety of methods exists, e.g, CG-Steihaug \cite{steihaug1983conjugate,toint1981towards}, and the generalized Lanczos based methods \cite{gould1999solving, lenders2016trlib} (see \cref{alg:tr} in \cref{SEC:APPENDIX:ALG}). Suppose for $i\in[n]$, $\vnorm{\aa_{i}}^{2} \sup_{\xx \in \real^{d}} |f_{i}^{\prime \prime}(\xx)| \le K_{i}$ and define $ K_{\max} \triangleq \max_{i=1,\ldots,n} K_{i}$, $\widehat K \triangleq \sum_{i=1}^n K_i/n$. 
By considering uniform and row-norm sampling of  $ \DD^{1/2} \AA $ with respective sampling complexities of $ |\mathcal S| \in \mathcal{O}(K_{\max}^{2} \epsilon^{-2} \log d ) $ and $ |\mathcal S| \in \mathcal{O}( \widehat{K}^2 \epsilon^{-2} \log d )$, \citet{xuNonconvexTheoretical2017} showed that one can guarantee \cref{eq:H} with high-probability, and as a result \cref{alg:tr} achieves an optimal iteration complexity, i.e., it converges to an $(\epsilon_{g},\epsilon_{\HH})$-approximate second-order stationary point $ \vnorm{\nabla F(\xxk)} \leq \epsilon_{g} $ and $ \lambda_{\text{min}}(\nabla^{2} F(\xxk)) \geq -\epsilon_{\HH} $ in at most $ k \in \mathcal{O}(\max\{\epsilon_{g}^{-2} \epsilon_{\HH}^{-1},\epsilon_{\HH}^{-3}\}) $ iterations. 
\end{itemize}

\paragraph{Sub-sampling Schemes.}
Recall the following terms:
\begin{itemize}[label = \textbf{--}]
	\item \textbf{Uniform}: For this sampling, we have $ p_{i} = 1/n, \; i = 1,\ldots,n $.
	\item \textbf{Leverage Score (LS)}: Complex leverage score sampling by considering the leverage scores of $ \DD^{1/2}\AA $ as in \cref{alg:lssampling}.
	\item \textbf{Row Norm (RN)}: Row-norm sampling of $ \DD^{1/2} \AA $ using \cref{eq:togethersampling} where $s((\DD^{1/2} \AA)_i)=|f_{i}{''}(\dotprod{\aa_{i}, \xx})|\vnorm{\aa_i}_{2}^{2}$
	\item \textbf{Mixed Leverage Score (LS-MX)}: A mixed leverage score sampling strategy arising from a non-symmetric viewpoint of the product $ \AA^{\transpose} \left( \DD \AA\right) $ 
    using \cref{eq:separatesampling} with $s(\AA_i)=\ell_i(\AA)$ and $S((\DD\AA)_i)=\ell_i(\DD\AA)$.
	\item \textbf{Mixed Norm Mixture (RN-MX)}: A mixed row-norm sampling strategy with the same non-symmetric viewpoint as in \cref{eq:separatesampling} with $s(\AA_i)=\vnorm{(\AA)_i}$ and $S((\DD\AA)_i)=\vnorm{(\DD\AA)_i}$.
	\item \textbf{Hybrid Randomized-Deterministic (LS-Det)}: Hybrid deterministic-leverage score sampling of \cref{alg:lsdetsampling}. 
	\item \textbf{Full}: In this case, the exact Hessian is used.
\end{itemize}

\paragraph{Datasets.} The datasets used in our experiments for this section are listed in \cref{table:data}. All datasets are publicly available from the UC Irvine Machine Learning Repository \cite{Dua:2019}. 
\begin{table}[!htbp]
	\centering
	\begin{tabular}{|c|c|c|} 
		\hline
		Name & $ n $ & $ d $\\ [0.5ex] 
		\hline \hline && \\ [-2ex]
		\texttt{Drive Diagnostics} & 50,000 & 48\\ [1ex] 
		\hline && \\ [-2ex]
		\texttt{covertype}, & 581,012 & 54\\ [1ex] 
		\hline && \\ [-2ex]
		\texttt{UJIIndoorLoc} & 19,937 & 520\\ [1ex] 
		\hline
	\end{tabular}
	\caption{Data sets used for our experiments.\label{table:data}}
\end{table}

\paragraph{Hyper-parameters.} \cref{alg:newton_cg,alg:newton_mr,alg:tr} are always initialized at $ \xxo = \bm{0} $. In all of our experiments, we run each method until either a maximum number of iterations or a maximum number of function evaluations is reached. The maximum number of CG iterations within Newton-CG, MINRES-QLP iterations within Newton-MR and CG-Steihaug within trust-region methods are all set to $100$. The parameter of line-search $ \rho $ in Newton-MR is set to $ 10^{-4} $. For trust-region, we set $ \Delta_0 = 1 $, $ \eta = 0.8 $ and $ \gamma = 1.2 $.

\paragraph{Performance Evaluation.}
In all of our experiments, we plot the objective value or the gradient norm vs.\ the total
number of oracle calls of function, gradient,  and Hessian-vector products. This is because comparing algorithms
in terms of ``wall-clock'' time can be highly affected by their particular implementation
details as well as system specifications. In contrast, counting the number of oracle calls,
as an implementation and system independent unit of complexity, is most appropriate and fair.
More specifically, after computing each function value, computing the corresponding gradient is equivalent to one additional function evaluation. Our implementations are Hessian-free, i.e., we merely require Hessian-vector products instead of using the explicit Hessian. For this, each Hessian-vector product involving $ \AA \DD \AA $ amounts to two additional function evaluations, as compared with gradient evaluation. In this light, each matrix-vector product involving $ \DD^{1/2} \AA $ for approximating the underlying complex leverage scores is equivalent to one gradient evaluation.

Following the theory of Newton-MR, whose convergence is measured by the norm of the gradient, we evaluate \cref{alg:newton_mr} with various sampling schemes by plotting $ \vnorm{\nabla F(\xxk)} $ vs.\ the total number of oracle calls, whereas for \cref{alg:newton_cg,alg:tr}, which guarantees descent in objective function, we plot $ F(\xxk) $ vs.\ the total number of oracle calls.

\subsection{Comparison Among Various Sketching Techniques}
To verify the result of \cref{THM:UPPERBOUNDCOMPARISON}, in this section we present empirical evaluations of Uniform, LS, RN, LS-MX, RN-MX and Full in the context of \cref{alg:newton_cg,alg:newton_mr,alg:tr}. The results are depicted in \cref{fig:newton_cg,fig:newton_mr,fig:trust_region}. It can be clearly seen that for both algorithms, LS and LS-MX sampling amounts to a more efficient algorithm than that with RN and RN-MX variants, and at times this difference is more pronounced than other times.

\begin{figure}[H]
	\centering
	\subfigure[$ F(x_{k}) $ vs.\ Oracle calls (\texttt{Drive Diagnostics})]
	{\includegraphics[scale=0.4]{./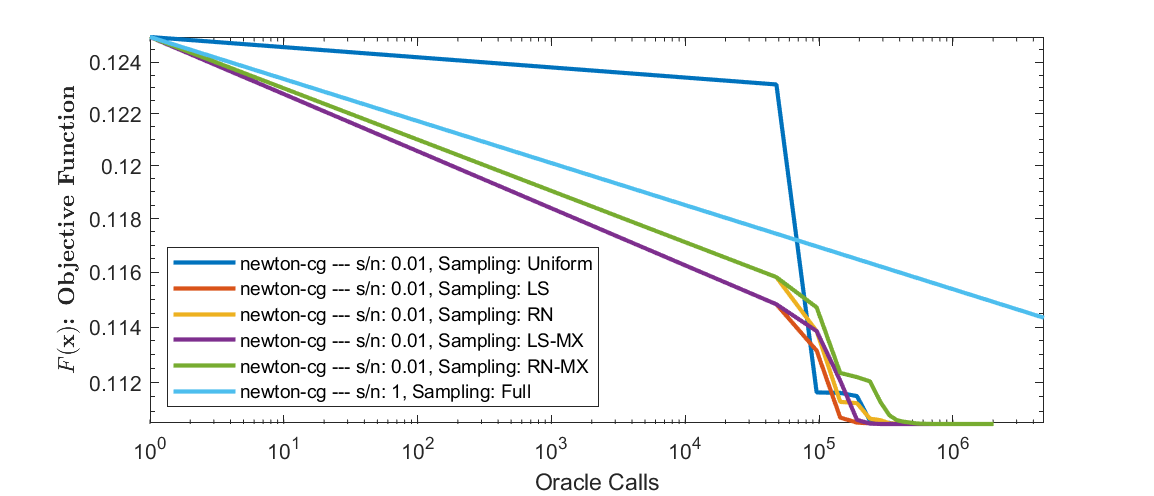}}
	\subfigure[$ F(x_{k}) $ vs.\ Oracle calls (\texttt{Cover Type})]
	{\includegraphics[scale=0.4]{./figs/nlls_convex/covetype/newton-cg/Obj_Props.png}}
	\subfigure[$ F(x_{k}) $ vs.\ Oracle calls (\texttt{UJIIndoorLoc})]
	{\includegraphics[scale=0.4]{./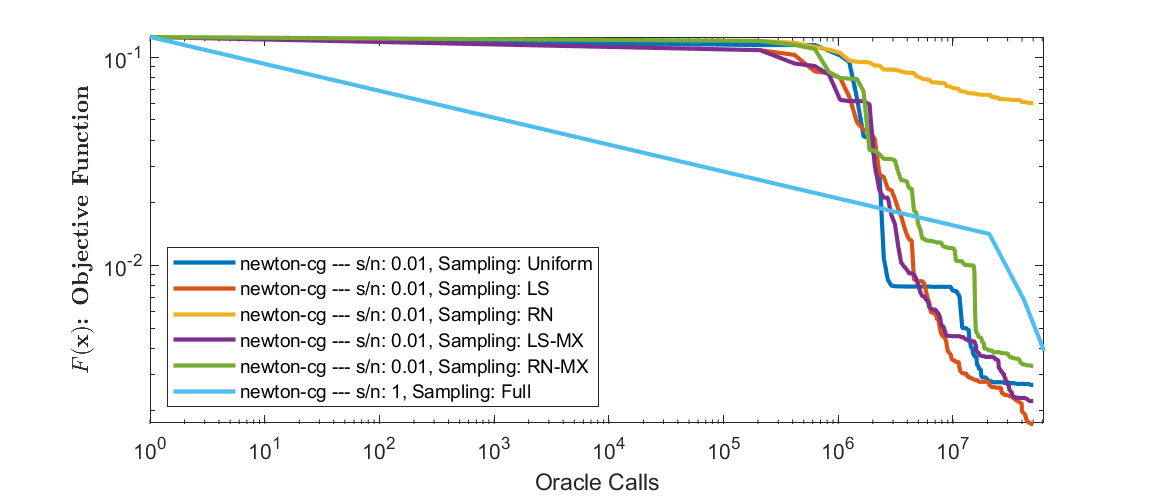}}
	\caption{Comparison of Newton-CG (\cref{alg:newton_cg}) using various sampling schemes. \label{fig:newton_cg}}
\end{figure}


\begin{figure}[H]
	\centering
	\subfigure[$ F(x_{k}) $ vs.\ Oracle calls (\texttt{Drive Diagnostics})]
	{\includegraphics[scale=0.4]{./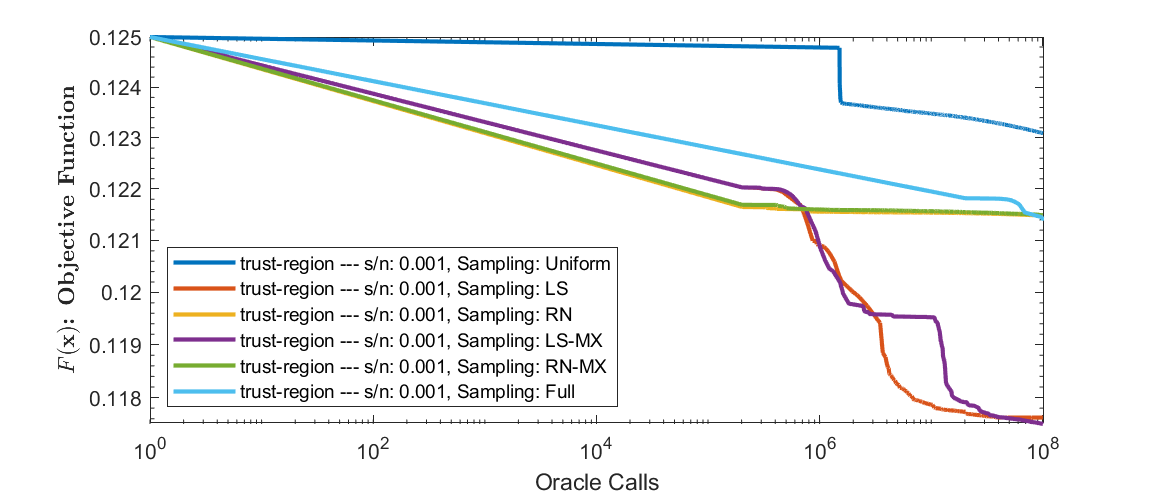}}
	\subfigure[$ F(x_{k}) $ vs.\ Oracle calls (\texttt{Cover Type})]
	{\includegraphics[scale=0.4]{./figs/nlls_none/covetype/trust-region/Obj_Props.png}}
	\subfigure[$ F(x_{k}) $ vs.\ Oracle calls (\texttt{UJIIndoorLoc})]
	{\includegraphics[scale=0.4]{./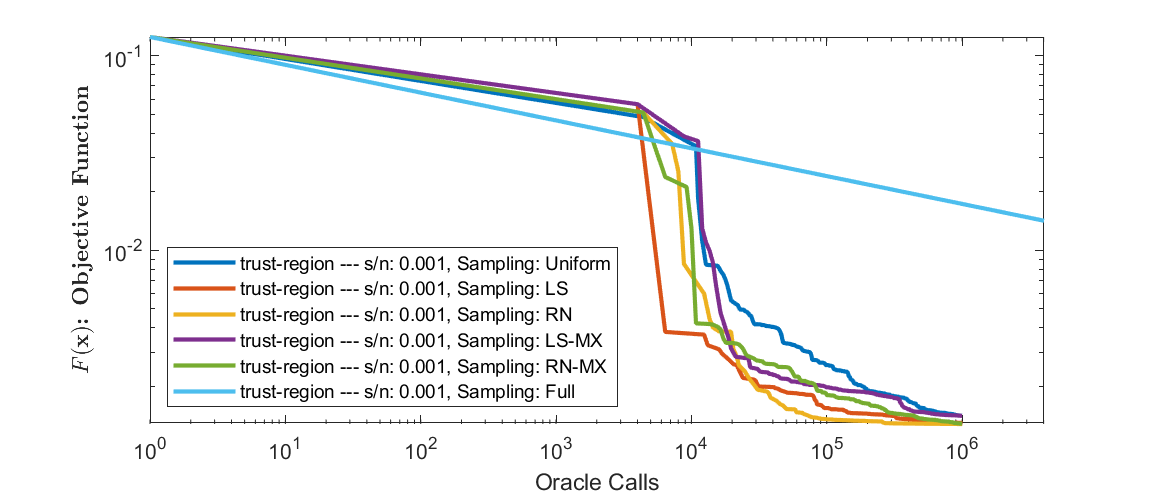}}
	\caption{Comparison of Trust-region (\cref{alg:tr}) using various sampling schemes. \label{fig:trust_region}}
\end{figure}

\subsection{Evaluation of Hybrid Sketching Techniques}
Here, to verify the result of \cref{THM:HYBRIDUPPERBOUND}, we evaluate the performance of \cref{alg:tr} by varying the terms involved in $ \EE $. We do this for a simple splitting of $ \HH = \EE + \NN $, i.e., $ T=1 $ in \cref{THM:HYBRIDUPPERBOUND}. We fix the overall sample size and change the fraction of samples that are deterministically picked in $ \EE $. The results are depicted in \cref{fig:hybrid}. The value in brackets in front of LS-Det is the fraction of samples that are included in $ \EE $, i.e., deterministic samples. ``LS-Det (0)'' and ``LS-Det (1)'' correspond to $ \EE = \bm{0}$ and $ \NN = \bm{0} $, respectively. The latter strategy has been used in low rank matrix approximations \cite{mccurdy2018ridge}. As it can be seen, the hybrid sampling approach is always competitive with, and at times significantly better than, LS-Det (0). As expected, LS-Det (1), which amounts to entirely deterministic samples, consistently performs worse. This can be easily attributed to the high bias of such a deterministic estimator.

\begin{figure}[H]
	\centering
	\subfigure[$ F(x_{k}) $ vs.\ Oracle calls (\texttt{Drive Diagnostics})]
	{\includegraphics[scale=0.4]{./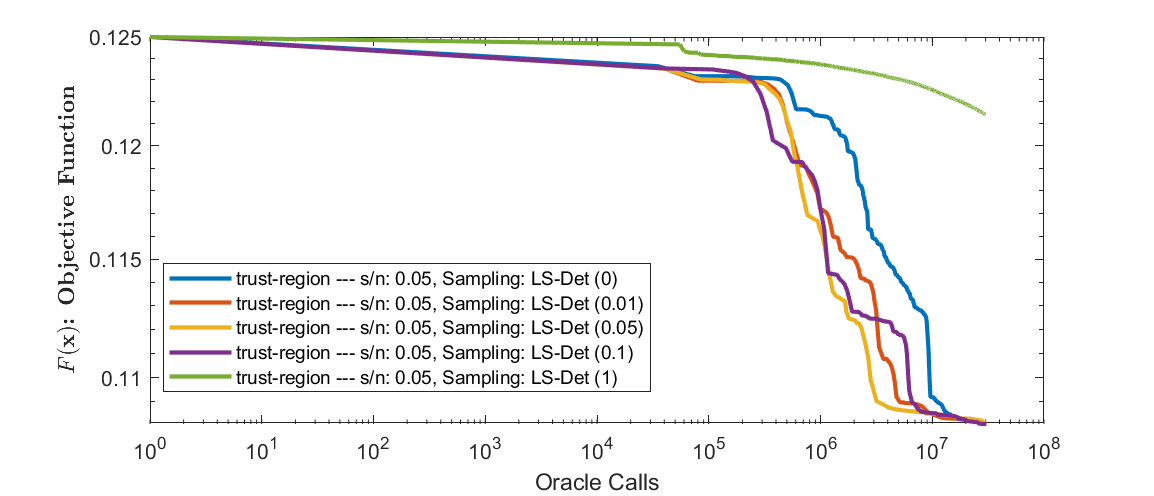}}
	\subfigure[$ F(x_{k}) $ vs.\ Oracle calls (\texttt{Cover Type})]
	{\includegraphics[scale=0.4]{./figs/nlls_det/covetype/trust-region/Obj_Props.png}}
	\subfigure[$ F(x_{k}) $ vs.\ Oracle calls (\texttt{UJIIndoorLoc})]
	{\includegraphics[scale=0.4]{./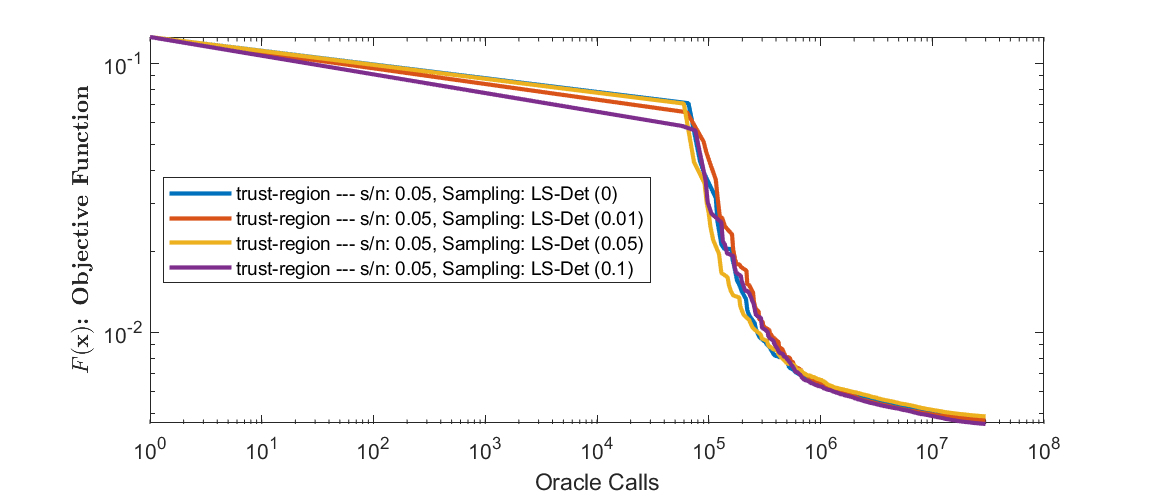}}
	\caption{Comparison of \cref{alg:tr} using hybrid randomized-deterministic sampling schemes. For all runs, the overall sample/mini-batch size for estimating the Hessian matrix is $ s = 0.05n$. The values in parentheses in front of LS-Det is the fraction of samples that are taken deterministically and included in $ \EE $. \label{fig:hybrid}}
\end{figure}

\end{appendices}
\end{document}